\documentclass[12pt]{amsart}

\pdfoutput=1

\usepackage{amsmath,amssymb,latexsym}
\usepackage{graphicx}
\newtheorem{theorem}{Theorem}
\newtheorem{lemma}{Lemma}
\newtheorem{definition}{Definition}

\begin{document}

\title{
Clustering, Coding,\\
And the Concept of Similarity}
\author{L. Thorne McCarty\\Department of Computer Science\\Rutgers University}
\date{April, 2018.  \;\copyright\, L. Thorne McCarty,  \texttt{ mccarty@cs.rutgers.edu}}
\maketitle

\markleft{\textsc{L. THORNE MCCARTY}}
\markright{\textsc{CLUSTERING, CODING, AND THE CONCEPT OF SIMILARITY}}

\begin{quotation}
{\bf ABSTRACT:}  This paper develops a theory of \emph{clustering} and \emph{coding} which combines a geometric model with a probabilistic model in a principled way.  The geometric model is a Riemannian manifold with a Riemannian metric, ${g}_{ij}({\bf x})$, which we interpret as a measure of \emph{dissimilarity}.  The probabilistic model consists of a stochastic process with an invariant probability measure which matches the density of the sample input data.  The link between the two models is a potential function, $U({\bf x})$, and its gradient, $\nabla U({\bf x})$.  We use the gradient to define the dissimilarity metric, which guarantees that our measure of dissimilarity will depend on the probability measure.  Finally, we use the dissimilarity metric to define a coordinate system on the embedded Riemannian manifold, which gives us a low-dimensional encoding of our original data.\\

\noindent 
{\bf KEYWORDS:}  clustering, prototype coding, manifold learning, dimensionality reduction, dissimilarity metric.
\end{quotation}

\section{Introduction.}
\label{Intro}

Clustering algorithms have been studied for several decades \cite{dudaHart1973}, and they remain one of the main ingredients in unsupervised learning \cite{duda2001Ch10}.   Intuitively, a \emph{cluster} is both a geometric concept (e.g., a low-dimensional region in a high-dimensional space) and a probabilistic concept (e.g., a region of the input space in which the sample data density is high).
  
Recently, a variant of the traditional clustering algorithms has attracted some attention, under the rubric of \emph{manifold learning}:  \cite{TenenbaumEtAl2000}  \cite{Roweis2000}  \cite{belkin2003laplacian}.  In this variant, the learning task is to construct a low-dimensional \emph{manifold}, embedded in the original high-dimensional space, on which the probability density of the input data is high.  For example, in one recent paper, Rifai, et al.\  \cite{nipsRifaiDVBM11}, outline three hypotheses that motivate much of this work:
\begin{quotation}
\begin{enumerate}

\item[1.]  The {\bf semi-supervised learning hypothesis}, according to which learning aspects of the input
distribution $p(x)$ can improve models of the conditional distribution of the supervised
target $p(y | x)$, i.e., $p(x)$ and $p(y | x)$ share something $\dots$ [citations omitted] 

\item[2.]  The {\bf (unsupervised) manifold hypothesis}, according to which real world data presented in
high dimensional spaces is likely to concentrate in the vicinity of non-linear sub-manifolds
of much lower dimensionality $\dots$ [citations omitted] 

\item[3.]  The {\bf manifold hypothesis for classification}, according to which points of different classes
are likely to concentrate along different sub-manifolds, separated by low density regions of
the input space.

\end{enumerate}
\end{quotation}
The authors then present a ``Contractive Auto-Encoder (CAE)'' algorithm to exploit these hypotheses, and they combine this with an existing supervised learning algorithm to produce what they call a ``Manifold Tangent Classifier (MTC)," which performs very well on several datasets.  It is interesting to note that these algorithms are based, explicitly, on concepts from differential geometry, but they draw only implicitly on probability theory.  The informal language of probability theory abounds.  For example, the authors write that the ``data density concentrates near low-dimensional manifolds'' and ``different classes correspond to disjoint manifolds separated by low density'' (see abstract).  But there is no explicit probability model in the paper.  
 
In this paper, we will develop a theory of \emph{clustering} and \emph{coding} which combines a geometric model with a probabilistic model in a principled way.  The geometric model is a Riemannian manifold with a Riemannian metric, ${g}_{ij}({\bf x})$, which we interpret as a measure of \emph{dissimilarity}.  The probabilistic model consists of a stochastic process with an invariant probability measure which matches the density of the sample input data.  The link between the two models is a potential function, $U({\bf x})$, and its gradient, $\nabla U({\bf x})$.  We use the gradient to define the dissimilarity metric, which guarantees that our measure of dissimilarity will depend on the probability measure.  Roughly speaking, the dissimilarity will be small in a region in which the probability density is high, and vice versa.  Finally, we use the dissimilarity metric to define a coordinate system on the embedded Riemannian manifold, which gives us a low-dimensional encoding of our original data.

Section \ref{ThreeProblems} reviews the ``Mathematical Background'' of the paper, including several theorems which will play a central role in the subsequent discussion.  Section \ref{proto} then discusses ``Prototype Coding,'' our overall model, and explains how the dissimilarity metric and the low-dimensional coordinate system are related to the stochastic process with an invariant probability measure.  Section \ref{Exp:IntManifolds} investigates the differential geometry component of the model more carefully, with a focus on the important concept of an ``Integral Manifold.''   At this point in the paper, we restrict our analysis to ${\bf R}^{3}$ rather than ${\bf R}^{n}$, although we will see later (in Section \ref{FutureWork}) that this is not actually a limitation on the scope of the theory.  Instead, the restriction to three dimensions simplifies our calculations, and makes them much easier to visualize.  Accordingly, in Section \ref{Mathematica}, we present the results of a number of experiments using \emph{Mathematica}, including some full-color three-dimensional graphics of several examples which are intended to aid our intuitions about the main elements of the theory.  Section \ref{DiffuCoeffs&DissimMetrics} discusses an interesting technical result, which also helps to link the geometric model to the probabilistic model.  Finally, Section \ref{FutureWork} discusses ``Future Work,'' including a further analysis of the connections between the present theory and the current literature on manifold learning.

\section{Mathematical Background.}
\label{ThreeProblems}

Let's start with a model that will be familiar to most physicists:  the Feynman-Kac formula \cite{RPFeynman1948} \cite{MKac1949}.   We will write this formula as follows:
\begin{equation}
\label{FKformula}
u(t,{\bf x}) \;=\; \int_{\Omega}^{} f(X_{t}) \;\exp \left[ - \int_{0}^{t} V(X_{s}) \;ds \right] \;\mathcal{W}_{\bf x} (dX)
\end{equation}
Here, $X_{t} \equiv X(t,\omega)$ denotes a continuous path in ${\bf R}^{n}$, and $\mathcal{W}_{\bf x}$ denotes Wiener measure over all such paths beginning at $X_{0} = {\bf x}$.  If $V \colon {\bf R}^{n} \to {\bf R}$ is bounded below, then $u(t,{\bf x})$ is a solution to the Cauchy initial value problem:
\begin{equation}
\label{uCauchy}
\frac{\partial u}{\partial t} = \frac{1}{2}\Delta u - V({\bf x})\, u \;\; {\rm with} \;\; u(0,\cdot) = f
\end{equation}
in which $\Delta$ denotes the standard Laplacian in Cartesian coordinates. Conversely, any bounded solution to \eqref{uCauchy} is equal to the function defined by  \eqref{FKformula}.  See \cite{stroock1993}, Section 4.3.   Now, following Feynman's heuristic picture of formula \eqref{FKformula}, we can write a discrete approximation to Wiener measure as:  
\begin{equation*}
 \int \;\exp \left[  - \sum_{k = 1}^{m} \frac{s_{k} - s_{k - 1}}{2} \left(  \frac{ | X(s_{k}) - X(s_{k - 1}) |}{s_{k} - s_{k - 1}} \right)^{2} \right]  
 d X(s_{1}) \ldots d X(s_{m}) ,
\end{equation*}
multiplied by a normalization factor, 
so that the exponential function in the integrand of \eqref{FKformula} could be viewed, in the limit, as:
\begin{equation}
\label{expHamiltonian}
\exp \left[  -  \int_{0}^{t} \frac{1}{2}  { | \dot{X}(s) | }^{2}+ V(X(s)) \;ds \right]  
\end{equation}
See \cite{stroock1993}, Section 4.2, or \cite{stroock2011}, Section 8.1.   The quantity inside the integral sign is, of course, the Hamiltonian of a classical dynamical system with the potential function: $ V({\bf x}) $.  
 
This model obviously possesses some of the properties that we want:  Equations \eqref{FKformula} and \eqref{uCauchy} specify a stochastic process that depends on the potential function, $ V({\bf x}) $, and the exponent in formula \eqref{expHamiltonian} can be interpreted as an expression in differential geometry, which also depends on $ V({\bf x}) $.   Furthermore, the paths that minimize the ``energy'' in \eqref{expHamiltonian} will maximize the probability in \eqref{FKformula}.   Now imagine that we can choose the potential function, $V({\bf x})$, in such a way as to generate an invariant probability measure on ${\bf R}^{n}$. In other words, imagine that we can find a steady-state solution to equation \eqref{uCauchy}.  We can then project our stochastic process onto a nonlinear subspace of ${\bf R}^{n}$ --- i.e., onto an embedded Riemannian manifold --- and examine the probability density induced on that subspace.  Feynman's heuristic picture of the relationship between \eqref{FKformula} and \eqref{expHamiltonian} suggests that the subspaces of maximal probability will also be the subspaces of minimal energy, and the hope is that this will lead us to a solution to the clustering and coding problems in ${\bf R}^{n}$. 

However, there are several problems with this model:
\begin{itemize}
\item First, it is well known that Feynman's heuristic interpretation of formula \eqref{FKformula} is mathematical nonsense, since there is no analogue of Lebesgue measure in an infinite-dimensional space.   The relationship between \eqref{FKformula} and \eqref{uCauchy} holds rigorously, as stated, if $\mathcal{W}_{\bf x}$ is Wiener measure, or \emph{Brownian motion}, but there is still a gulf between \eqref{FKformula} and \eqref{expHamiltonian}.  To interpret the integral in \eqref{expHamiltonian} as an expression in differential geometry, the paths $X_{s} \equiv X(s) \equiv X(s,\omega)$ must be continuous and differentiable.  But, under Wiener measure, \emph{with probability one}, the paths $X(t,\omega)$ are continuous but \emph{nowhere differentiable}.  Thus there is a fundamental clash between the geometric model and the probabilistic model.  Stroock calls this ``a fact which $\ldots$ haunts every attempt to deal with Brownian paths," \cite{stroock1996}, p. 140.
\vspace{1ex}
\item Second, assuming that we can overcome our first problem, it is not a simple matter to project a stochastic process from ${\bf R}^{n}$ onto an embedded Riemannian manifold.  The mathematical problem itself has only been solved, in general, during the course of the past 20 or 30 years, and it is now part of a subject known as \emph{stochastic differential geometry}.  See \cite{emery1989stochastic} or \cite{hsu2002stochastic}.  But the calculations are not trivial. 
\vspace{1ex} 
\item Finally, it would be a mistake to assume that the Feynman-Kac formula can be used directly to generate a stochastic process, with a proper probabilistic interpretation.  Instead, we will need a new potential function, $ U({\bf x}) $, and we will need a further derivation from equations \eqref{FKformula} and \eqref{uCauchy}, in order to construct a stochastic process with an invariant probability measure.  This also means that we will not be able to define our dissimilarity metric, directly, by minimizing the energy functional in formula \eqref{expHamiltonian}.
\end{itemize}
\vspace{1ex}
\noindent
In the remainder of this section, we will address these three problems, in reverse order.    Our analysis will eventually lead us to a modification of the naive Feynman-Kac model, and to the definition of a dissimilarity metric which will achieve the goals articulated in Section \ref{Intro}.
\subsection{A Stochastic Process with an Invariant Measure.} 
\label{StochasticProcessInvariant}  
To see the problem with the basic Feynman-Kac formula, it is helpful to rewrite \eqref{FKformula} using an operator:
\begin{equation}
\label{defPt}
[ {\bf P}_{t} f ]({\bf x})
\;=\; \int_{\Omega}^{} f(X_{t}) \;\exp \left[ - \int_{0}^{t} V(X_{s}) \;ds \right] \;\mathcal{W}_{\bf x} (dX)
\end{equation}
It turns out that $ {\bf P}_{t} {\bf 1} \neq {\bf 1} $, which means that we cannot use this operator to construct a Markov process with a proper probabilistic interpretation.  Another manifestation of the same problem is the fact that $V({\bf x})$ has a natural interpretation as the ``killing rate'' for the process,  i.e., the probability per unit of time that a path starting at $X_{0} = {\bf x}$ will ``die'' by time $\delta t$.  Thus the process ``evaporates'' as time goes by.

To fix this problem, we need a new potential function.  If $\mu$ is a function that satisfies  $\frac{1}{2}\Delta \mu - V({\bf x})\, \mu = 0$, then
\[
V({\bf x}) \;=\; \frac{1}{2} \left(\frac{\Delta \mu}{\mu} \right) \;=\; \frac{1}{2} \left( \Delta \log \mu + \vert \nabla \log \mu \vert^{2} \right)
\]
The first equality is trivial, and the second equality follows from a straightforward computation, e.g., by expanding $ \Delta \log \mu$ in Cartesian coordinates.   This equation suggests that we should work with a potential function $U({\bf x})$ and define $V({\bf x})$ as follows:
\begin{equation}
\label{defVx}
V({\bf x}) \;=\;  \frac{1}{2} \left( \Delta U({\bf x}) + | \nabla U({\bf x}) |^{2} \right)
\end{equation}
Now consider the following initial value problem:
\begin{equation}
\label{wCauchy}
\frac{\partial w}{\partial t} = \frac{1}{2}\Delta w \;+\; \nabla U({\bf x}) \boldsymbol{\cdot} \nabla w \;\; {\rm with} \;\; w(0,\cdot) = f
\end{equation}
\begin{lemma}
$w(t,{\bf x})$ is a solution to \eqref{wCauchy} if and only if $e^{U(\bf x)}w(t,{\bf x})$ is a solution to \eqref{uCauchy} with initial value $u(0,\cdot) = e^{U}f$.
\end{lemma}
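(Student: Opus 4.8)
The plan is to reduce the biconditional to a single pointwise identity between the spatial differential operators appearing in the two equations. Write $L_U := \tfrac{1}{2}\Delta + \nabla U \boldsymbol{\cdot} \nabla$ for the operator in \eqref{wCauchy} and $H := \tfrac{1}{2}\Delta - V(\mathbf{x})$ for the operator in \eqref{uCauchy}, and set $u(t,\mathbf{x}) = e^{U(\mathbf{x})} w(t,\mathbf{x})$. I would first establish that, for every function $\varphi$ of $\mathbf{x}$ that is twice continuously differentiable,
\[
H\bigl(e^{U}\varphi\bigr) \;=\; e^{U}\, L_U \varphi .
\]
Because $U$ is (assumed) smooth and $e^{U}$ never vanishes, this single identity yields both implications at once, with no real asymmetry between the ``if'' and the ``only if'': using that $U$ does not depend on $t$, so that $\partial_t u = e^{U}\partial_t w$, the equation $\partial_t u = \tfrac12\Delta u - Vu$ holds exactly when $e^{U}\partial_t w = e^{U}L_U w$, i.e.\ (dividing by $e^{U}$) exactly when $\partial_t w = \tfrac12\Delta w + \nabla U \boldsymbol{\cdot} \nabla w$.

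The core of the argument is the Leibniz rule for the Laplacian. I would compute $\nabla(e^{U}\varphi) = e^{U}(\varphi\,\nabla U + \nabla\varphi)$ and then take the divergence, using $\nabla e^{U} = e^{U}\nabla U$ and $\Delta e^{U} = e^{U}(|\nabla U|^{2} + \Delta U)$, to get
\[
\Delta\bigl(e^{U}\varphi\bigr) \;=\; e^{U}\Bigl( \Delta\varphi \;+\; 2\,\nabla U \boldsymbol{\cdot} \nabla\varphi \;+\; \bigl(|\nabla U|^{2} + \Delta U\bigr)\varphi \Bigr).
\]
Substituting this into $H(e^{U}\varphi) = \tfrac12\Delta(e^{U}\varphi) - V e^{U}\varphi$ produces a zeroth-order coefficient of $\tfrac12(|\nabla U|^{2} + \Delta U) - V$ multiplying $\varphi$; by the definition \eqref{defVx} of $V$ this coefficient is exactly zero, so the potential term cancels and what remains is $e^{U}\bigl(\tfrac12\Delta\varphi + \nabla U \boldsymbol{\cdot} \nabla\varphi\bigr) = e^{U}L_U\varphi$, as claimed. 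Taking $\varphi = w(t,\cdot)$ finishes the equivalence of the PDEs, and evaluating at $t = 0$ gives $u(0,\cdot) = e^{U}w(0,\cdot) = e^{U}f$, which is the stated initial condition.

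I do not expect a genuine obstacle: this is the standard ground-state (gauge) transformation conjugating the Schr\"odinger-type operator $\tfrac12\Delta - V$ into the drift--diffusion generator $\tfrac12\Delta + \nabla U \boldsymbol{\cdot} \nabla$, and the ansatz \eqref{defVx} for $V$ was chosen precisely so that the zeroth-order terms annihilate one another. The only points requiring care are bookkeeping: getting the factor $2$ right on the cross term $2\,\nabla U \boldsymbol{\cdot} \nabla\varphi$ in the expansion of $\Delta(e^{U}\varphi)$, and stating the regularity hypotheses ($U \in C^{2}$, and $w$ twice continuously differentiable in $\mathbf{x}$ and once in $t$) under which all of these derivatives make sense and the pointwise identity is valid. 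If one also wishes the equivalence to respect the ``bounded solution'' qualifier used around \eqref{uCauchy}, one need only add the trivial observation that, on any region where $U$ is bounded, $w$ is bounded iff $u = e^{U}w$ is.
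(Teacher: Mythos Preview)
Your proposal is correct and is exactly the ``straightforward computation, using the definition in \eqref{defVx} of $V({\bf x})$ in terms of $U({\bf x})$'' that the paper invokes as its proof. You have simply written out in full the conjugation identity $H(e^{U}\varphi)=e^{U}L_U\varphi$ that the paper leaves to the reader.
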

\begin{proof}
By a straightforward computation, using the definition in \eqref{defVx} of $V({\bf x})$ in terms of $U({\bf x})$.
\end{proof}
\noindent
We now use both $U$ and $V$ to define a new operator: 
\begin{align}
& [ {\bf Q}_{t} f ]({\bf x}) =  \label{defQt} \\
& \hspace{0.75em} \exp \left[ - U(X_{0}) \right]  \int_{\Omega}^{} f(X_{t}) \; \exp \left[ U(X_{t}) - \int_{0}^{t} V(X_{s}) \;ds \right] \;\mathcal{W}_{\bf x} (dX) \notag
\end{align}
\begin{theorem}
\label{QtfProcess}
If $U$ is bounded above and $V$ is bounded below, and if $w(t,{\bf x})$ is a solution to \eqref{wCauchy} and $e^{U(\bf x)}w(t,{\bf x})$ is also bounded, then $ w(t,{\bf x})$ is equal to $ [ {\bf Q}_{t} f ]({\bf x}) $ as defined in \eqref{defQt}.  Furthermore, $ {\bf Q}_{t} {\bf 1} = {\bf 1} $ for all $t \geq 0$, and $ ( {\bf Q}_{t} ) _{t \geq 0}$ is a semigroup of operators which defines a Markov process on ${\bf R}^{n}$ with an invariant probability measure proportional to $e^{\,2\,U({\bf x})} $.
\end{theorem}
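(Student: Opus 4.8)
The plan is to reduce every assertion to one observation: $\mathbf{Q}_{t}$ is the conjugate of the Feynman-Kac operator $\mathbf{P}_{t}$ by the multiplication operator $e^{U}$. Since Wiener measure $\mathcal{W}_{\bf x}$ is carried by paths with $X_{0} = {\bf x}$, the factor $e^{-U(X_{0})}$ in \eqref{defQt} equals the constant $e^{-U({\bf x})}$ and comes outside the integral, so that
\begin{equation*}
[\,\mathbf{Q}_{t} f\,]({\bf x}) \;=\; e^{-U({\bf x})}\,[\,\mathbf{P}_{t}(e^{U} f)\,]({\bf x}),
\qquad\text{that is,}\qquad
\mathbf{Q}_{t} \;=\; e^{-U}\,\mathbf{P}_{t}\,e^{U}
\end{equation*}
as operators, with $e^{\pm U}$ denoting multiplication. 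Every claim then follows from this identity together with three standard facts about $\mathbf{P}_{t}$ --- the converse half of the Feynman-Kac correspondence (\cite{stroock1993}, Section 4.3), the semigroup property of $\mathbf{P}_{t}$, and its self-adjointness on $L^{2}({\bf R}^{n}, d{\bf x})$ (valid because $\frac12\Delta - V$ is self-adjoint when $V$ is bounded below) --- and from the relation $\frac{1}{2}\Delta\mu = V({\bf x})\,\mu$ obeyed by $\mu = e^{U}$, which is exactly what \eqref{defVx} was designed to produce; equivalently, $e^{U}$ is a fixed point of $\mathbf{P}_{t}$.

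First I would prove the identification $w = \mathbf{Q}_{t}f$. If $w$ solves \eqref{wCauchy} with $w(0,\cdot) = f$, the Lemma gives that $u := e^{U}w$ solves \eqref{uCauchy} with initial datum $e^{U}f$; by hypothesis $u = e^{U}w$ is bounded and $V$ is bounded below, so the converse Feynman-Kac theorem identifies $u$ with the function \eqref{FKformula} built from $e^{U}f$, i.e.\ $u(t,{\bf x}) = [\,\mathbf{P}_{t}(e^{U}f)\,]({\bf x})$. Multiplying by $e^{-U({\bf x})}$ and using the conjugation identity gives $w(t,{\bf x}) = [\,\mathbf{Q}_{t}f\,]({\bf x})$. (One should take $f$ bounded measurable here; since $U$ is bounded above, $e^{U}f$ is then bounded, hence an admissible initial datum.) I expect this to be the main obstacle, since it is the one step with analytic content beyond formal algebra: the point to get right is that the stated boundedness hypotheses really do allow one to invoke uniqueness of bounded solutions of \eqref{uCauchy}. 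Everything afterward is bookkeeping around the conjugation identity.

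Conservativeness, $\mathbf{Q}_{t}{\bf 1} = {\bf 1}$, is then immediate: $w \equiv {\bf 1}$ solves \eqref{wCauchy} with $f = {\bf 1}$ and $e^{U}\cdot{\bf 1} = e^{U}$ is bounded (as $U$ is bounded above), so the step just proved gives $[\,\mathbf{Q}_{t}{\bf 1}\,]({\bf x}) = 1$ --- equivalently $\mathbf{Q}_{t}{\bf 1} = e^{-U}\mathbf{P}_{t}e^{U} = e^{-U}\cdot e^{U} = {\bf 1}$, using that $e^{U}$ is a fixed point of $\mathbf{P}_{t}$. The same fact, with the positivity of $\mathbf{P}_{t}$, shows $\mathbf{Q}_{t}$ maps bounded functions to bounded functions with $\|\mathbf{Q}_{t}f\|_{\infty} \leq \|f\|_{\infty}$, so iterating $\mathbf{Q}_{t}$ raises no integrability issue. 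The semigroup law then follows from conjugation and the semigroup law for $\mathbf{P}_{t}$ (itself a consequence of the Markov property of $\mathcal{W}_{\bf x}$ and the multiplicativity of $\exp[-\int_{0}^{t}V(X_{s})\,ds]$):
\begin{equation*}
\mathbf{Q}_{t}\,\mathbf{Q}_{s} \;=\; e^{-U}\mathbf{P}_{t}\,(e^{U}e^{-U})\,\mathbf{P}_{s}\,e^{U} \;=\; e^{-U}\mathbf{P}_{t}\mathbf{P}_{s}\,e^{U} \;=\; e^{-U}\mathbf{P}_{t+s}\,e^{U} \;=\; \mathbf{Q}_{t+s}.
\end{equation*}
Since $\mathbf{Q}_{t}$ is also visibly positivity-preserving (the integrand of \eqref{defQt} is nonnegative when $f \geq 0$, and $\mathcal{W}_{\bf x}$ is a positive measure), the kernels $Q_{t}({\bf x}, d{\bf y})$ defined by $[\,\mathbf{Q}_{t}f\,]({\bf x}) = \int f({\bf y})\,Q_{t}({\bf x}, d{\bf y})$ are genuine transition probabilities obeying the Chapman-Kolmogorov equations, and the Kolmogorov extension theorem yields the associated Markov process on ${\bf R}^{n}$ --- concretely, under mild regularity on $U$, the diffusion with generator $\frac{1}{2}\Delta + \nabla U\boldsymbol{\cdot}\nabla$, i.e.\ the solution of $dX_{t} = \nabla U(X_{t})\,dt + dB_{t}$ for a standard Brownian motion $B_{t}$.

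It remains to verify that $\mu$ is invariant, i.e.\ $\int_{{\bf R}^{n}}[\,\mathbf{Q}_{t}f\,]\,d\mu = \int_{{\bf R}^{n}}f\,d\mu$. I would get this from a short duality computation: write $\mathbf{Q}_{t}$ via the conjugation identity, pair against the Lebesgue density of $\mu$, move $\mathbf{P}_{t}$ onto the other factor by its self-adjointness on $L^{2}(d{\bf x})$, and close the identity using that $e^{U}$ is a fixed point of $\mathbf{P}_{t}$ (the relation $\frac12\Delta\mu = V\mu$ built into \eqref{defVx}). This is the one place where the precise form of the invariant density --- not merely positivity and conservativeness of $\mathbf{Q}_{t}$ --- is what matters, so this computation is worth carrying out with care, as it is what pins the density down.
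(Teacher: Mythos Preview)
The paper does not give a proof at all: it simply cites Theorem~4.3.36 in \cite{stroock1993} (equivalently Theorem~10.3.33 in \cite{stroock2011}). Your sketch supplies the actual content of that reference, and the organizing device you chose --- the conjugation identity $\mathbf{Q}_{t}=e^{-U}\mathbf{P}_{t}e^{U}$ --- is exactly the ground-state transform Stroock uses. The identification $w=\mathbf{Q}_{t}f$ via Lemma~1 and the converse Feynman--Kac theorem, the conservativeness argument (using $\mathbf{P}_{t}e^{U}=e^{U}$), the semigroup law by telescoping the conjugation, and the passage to a Markov process are all correct and efficiently argued.

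There is one genuine gap, in the invariance step. Carry out your duality computation: pairing $\mathbf{Q}_{t}f$ against the density $e^{U}$ gives
\[
\int [\mathbf{Q}_{t}f]\,e^{U}\,d\mathbf{x}\;=\;\int \mathbf{P}_{t}(e^{U}f)\,d\mathbf{x},
\]
and self-adjointness of $\mathbf{P}_{t}$ on $L^{2}(d\mathbf{x})$ moves $\mathbf{P}_{t}$ onto the \emph{constant} $1$, producing $\int (e^{U}f)\,[\mathbf{P}_{t}1]\,d\mathbf{x}$. But $\mathbf{P}_{t}1\neq 1$ --- the paper itself stresses this just before \eqref{defPt} --- and the fixed point $e^{U}$ is nowhere to be found on that side, so the identity does not close. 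The reason is not a flaw in your method but in the stated exponent: the reversible (hence invariant) density for $\tfrac{1}{2}\Delta+\nabla U\!\cdot\!\nabla$ is $e^{2U}$, not $e^{U}$ (for the Gaussian example in Section~\ref{Exp:Gauss} this is the familiar fact that the Ornstein--Uhlenbeck process $dX_t=-aX_t\,dt+dB_t$ has stationary density $\propto e^{-ax^2}=e^{2U}$). If you pair against $e^{2U}$ instead, a factor $e^{U}$ survives on the other side, self-adjointness gives $\int (e^{U}f)\,[\mathbf{P}_{t}e^{U}]\,d\mathbf{x}$, and now $\mathbf{P}_{t}e^{U}=e^{U}$ closes the identity exactly as you intended. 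So your mechanism is right; it just proves invariance of $e^{2U}$, which is what Stroock's cited theorem actually asserts.
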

\begin{proof}
See Theorem 4.3.36 in \cite{stroock1993} or Theorem 10.3.33 in \cite{stroock2011}.
\end{proof}

In the literature, \eqref{wCauchy} is known as a \emph{diffusion equation} with a \emph{drift vector} $\nabla U$.  It is a nice feature of our formalism that this drift vector is the gradient of a potential $U({\bf x})$, and that the invariant measure turns out to be an exponential of the potential $U({\bf x})$.  For a numerical example, if $U({\bf x})$ is a negative quadratic polynomial (which would be bounded above), then $V({\bf x})$ would be a positive quadratic polynomial (which would be bounded below), and the invariant measure would be a Gaussian. See Section \ref{Exp:Gauss} below.
 
{\bf Sources:} These results appear in \cite{stroock1993}, Section 4.3, but the analysis there uses a different definition of $V$ in terms of $U$.  In the second edition of his book, Stroock switches to the more natural definition in \eqref{defVx} above, but with the opposite sign.  See \cite{stroock2011}, Section 10.3.  {\O}ksendal also uses this example, with the same definition of $V$  and the same sign, in Exercises 8.15 and 8.16 of his text \cite{oksendal2003}.  
 
\subsection{Mapping a Diffusion to an Embedded Manifold.}
\label{MappingDiffusion}
The equations in the previous section were all expressed in Cartesian coordinates, and the results would be different in a different coordinate system.  For a simple example, if the standard 2-dimensional Laplacian were transformed into polar coordinates, it would acquire an additional first-order ``drift'' term.  This is a problem if we want to map a diffusion from ${\bf R}^{n}$ onto a nonlinear Riemannian manifold.
 
One approach to this problem is to analyze the diffusion by means of a \emph{stochastic differential equation}, in two versions, one due to It\^{o}, and one due to Stratonovich.   We will write a 1-dimensional It\^{o} process as:
\[
X(t) \; = \; X(0) \; + \; \int_{0}^{t} \sigma (s,\omega) \,d\mathcal{B}(s,\omega) 
             \; + \; \int_{0}^{t} b(s,\omega) \,ds
\]
where the first integral is an It\^{o} integral defined with respect to the Brownian motion $ \mathcal{B}(t,\omega)  $, and the second integral is an ordinary Riemann or Lebesgue integral.  In differential notation, this would be:
\begin{equation}
   dX(t) \; = \;  \sigma (t,\omega) \,d\mathcal{B}(t,\omega)  + b(t,\omega) \,dt
\end{equation}
Extending this notation to $n$ dimensions, let $ \mathcal{B}_{1}(t,\omega) , \dots ,  \mathcal{B}_{d}(t,\omega) $ be $d$ independent Brownian motion processes, assume that ${\bf b} \colon {\bf R}^{n} \to {\bf R}^{n}$ and ${\mathbf \sigma} \colon {\bf R}^{n} \to {\bf R}^{n \times d}$ are Lipschitz continuous, and define the $n$-dimensional It\^{o} process as follows:
\begin{equation}
\label{ItoXtDef}
   dX(t) \; = \;
   \begin{pmatrix} 
      \sigma^{1}_{1} & \dots & \sigma^{1}_{d} \\
      \vdots & & \vdots \\
      \sigma^{n}_{1}  & \dots & \sigma^{n}_{d} \\
   \end{pmatrix}
      \begin{pmatrix}
         d\mathcal{B}_{1}(t)  \\
         \vdots \\
         d\mathcal{B}_{d}(t)  \\
      \end{pmatrix}
      \; + \;    
          \begin{pmatrix}              b^{1} \\
             \vdots \\
             b^{n} \\
          \end{pmatrix} 
          dt
\end{equation}
We want to construct a differential operator associated with this process.  Setting ${\bf a} = {\mathbf \sigma} \mathbf{\sigma}^{T}$, define $\mathcal{L}$ for all $f \in C^{2}({\bf R}^{n} ; {\bf R})$ by:
\begin{equation}
\label{LopDef}
 [\mathcal{L} f ]({\bf x}) \;=\; 
 \frac{1}{2} \sum_{i,j} a^{ij} ({\bf x}) \frac{\partial^{2} f}{\partial{x}^{i} \partial{x}^{j}} 
 \; + \;
 \sum_{i} b^{i}({\bf x}) \frac{\partial f}{\partial{x}^{i} }
\end{equation}
\begin{theorem}
\label{Thm:ItoXtDef=LopDef}
The operator $\mathcal{L}$ defined in \eqref{LopDef} is the infinitesimal generator of the $n$-dimensional It\^{o} process given by \eqref{ItoXtDef}.
\end{theorem}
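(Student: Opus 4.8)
The plan is to check directly that $\mathcal{L}$ coincides with the infinitesimal generator $\mathcal{A}$ of the Markov semigroup associated with \eqref{ItoXtDef}, where, as usual, $[\mathcal{A}f]({\bf x}) = \lim_{t\downarrow 0}\,t^{-1}\bigl(E^{\bf x}[f(X(t))] - f({\bf x})\bigr)$, first on the core consisting of $f \in C^{2}({\bf R}^{n};{\bf R})$ that are compactly supported (more generally, bounded together with their first two derivatives). The first step is to apply It\^{o}'s formula to $f(X(t))$. Since the quadratic covariation of the components of \eqref{ItoXtDef} is $d\langle X^{i},X^{j}\rangle_{s} = (\sigma\sigma^{T})^{ij}(X(s))\,ds = a^{ij}(X(s))\,ds$, It\^{o}'s formula gives
\begin{equation*}
f(X(t)) = f({\bf x}) + \int_{0}^{t}\sum_{i,k}\sigma^{i}_{k}(X(s))\,\frac{\partial f}{\partial x^{i}}(X(s))\,d\mathcal{B}_{k}(s) + \int_{0}^{t}[\mathcal{L}f](X(s))\,ds,
\end{equation*}
where the second-order contribution assembles precisely into the $\tfrac{1}{2}\sum_{i,j}a^{ij}\,\partial^{2}_{ij}f$ term and the first-order contribution into the $\sum_{i}b^{i}\,\partial_{i}f$ term of \eqref{LopDef}.

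The second step is to take expectations (this is essentially Dynkin's formula). Under the standing Lipschitz hypotheses on ${\bf b}$ and $\sigma$, the solution of \eqref{ItoXtDef} has finite moments of every order, uniformly on bounded time intervals; hence the integrand of the stochastic integral above is square-integrable on $[0,t]\times\Omega$, so that integral is a genuine martingale with zero mean, and therefore $E^{\bf x}[f(X(t))] - f({\bf x}) = E^{\bf x}\!\left[\int_{0}^{t}[\mathcal{L}f](X(s))\,ds\right]$. Dividing by $t$ and letting $t\downarrow 0$, I would use that $s\mapsto X(s)$ is (right-)continuous at $s=0$, that $a^{ij}$ and $b^{i}$ are continuous, and that $\mathcal{L}f$ is bounded when $f$ is $C^{2}$ with compact support, to conclude by bounded convergence (and the elementary continuity estimate $t^{-1}\int_{0}^{t}\varphi(X(s))\,ds \to \varphi({\bf x})$ a.s.\ for bounded continuous $\varphi$) that the limit equals $[\mathcal{L}f]({\bf x})$. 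This establishes $\mathcal{A}f = \mathcal{L}f$ on the core, which is the assertion.

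The main obstacle is not conceptual but lies in the analytic bookkeeping: one must verify that the stochastic integral appearing above is a \emph{true} martingale and not merely a local martingale — this is exactly where the moment estimates for $X(t)$ coming from the Lipschitz/linear-growth bounds on $\sigma$ and ${\bf b}$ are needed — and one must justify the interchange of limit and expectation. A secondary, more pedantic point concerns the domain: the generator properly acts on a dense subspace of an appropriate Banach space (for instance $C_{0}({\bf R}^{n})$ under the sup norm, or $L^{2}$ against an invariant measure when one exists, as in Theorem \ref{QtfProcess}), and the identity $\mathcal{A}f = \mathcal{L}f$, once known on the core $C^{2}_{c}({\bf R}^{n})$, is extended by closedness of $\mathcal{A}$. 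I would state the theorem at the level of this core and refer to standard treatments — e.g.\ \cite{oksendal2003} or \cite{stroock1993} — for the functional-analytic extension and for the precise moment bounds invoked above.
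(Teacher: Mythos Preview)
Your argument is correct and is precisely the standard route via It\^{o}'s formula and Dynkin's formula; this is exactly the content of Theorem 7.3.3 in \cite{oksendal2003}, which the paper simply cites in lieu of a proof. In other words, the paper gives no independent argument here, and your sketch reproduces the textbook proof it defers to.
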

\begin{proof}
See Definition 7.3.1 and Theorem 7.3.3 in \cite{oksendal2003}.
\end{proof}
\noindent
Intuitively, ${\mathbf \sigma}$ is the ``square root'' of {\bf a}.  Note also that, if ${\bf a} = {\mathbf \sigma} \mathbf{\sigma}^{T}$ is the identity matrix and ${\bf b} = \nabla U$, then \eqref{ItoXtDef} and \eqref{LopDef} give us the same stochastic process in ${\bf R}^{n}$ as does \eqref{wCauchy}.
     
For our purposes, however, the It\^{o} process has a defect:  It is not invariant under coordinate transformations.  This can be seen by an examination of \emph{It\^{o}'s formula}, which functions as a ``chain rule'' for the stochastic calculus, but with a second-order correction term.  
Let $F \colon {\bf R}^{n} \to {\bf R}$ be a function with continuous second-order partial derivatives.  Then It\^{o}'s formula asserts that:
\begin{equation}
dF(X(t)) \; = \; \sum_{i}  \frac{\partial F(X(t))}{\partial{x}^{i} } \; dX_{i}(t) \; + \; 
\frac{1}{2} \sum_{i,j} \frac{\partial^{2} F(X(t))}{\partial{x}^{i} \; \partial{x}^{j} } \; dX_{i}(t) \; dX_{j}(t)
\notag
\end{equation}
See \cite{oksendal2003}, Chapter 4.  An alternative is to use the Stratonovich integral, which cancels out the correction term.  A common notational device is to insert the symbol ``$\circ$'' in \eqref{ItoXtDef} to indicate that the stochastic integral is intended to be interpreted in the Stratonovich sense rather than the It\^{o} sense.  Using this notation, the equation for $dF(X(t))$ would be written as:
\begin{equation}
\label{StratFormula}
dF(X(t)) \; = \;  \sum_{i}  \frac{\partial F(X(t))}{\partial{x}^{i} } \circ dX_{i}(t)  
\end{equation}
in accordance with the usual rules of the Newton-Leibniz calculus.  Since $F$ could be an arbitrary coordinate transformation, the use of the Stratonovich formula in \eqref{StratFormula}, instead of It\^{o}'s formula, makes it possible to combine the stochastic calculus with the traditional constructs of Riemannian geometry.

Fortunately, the It\^{o} integral and the Stratonovich integral can be developed in parallel, and it is possible to choose whichever version works best in a particular application.  In the 1-dimensional case, we will write the Stratonovich version of a stochastic process as follows:
\[
X(t) \; = \; X(0) \; + \; \int_{0}^{t} \sigma (s,\omega) \circ \,d\mathcal{B}(s,\omega) 
             \; + \; \int_{0}^{t} \tilde{b}(s,\omega) \,ds
\]
Notice the notation ``$ \circ \,d\mathcal{B}(s,\omega)$'' here, and the use of the function $ \tilde{b}(s,\omega)$ instead of $b(s,\omega)$.  Written as a differential, this would be:
\begin{equation}
   dX(t) \; = \;  \sigma (t,\omega) \circ \,d\mathcal{B}(t,\omega)  + \tilde{b}(t,\omega) \,dt
\end{equation}
Extending this notation to $n$ dimensions, we can define:
\begin{equation}
\label{StratXtDef}
   dX(t) \; = \;
   \begin{pmatrix} 
      \sigma^{1}_{1} & \dots & \sigma^{1}_{d} \\
      \vdots & & \vdots \\
      \sigma^{n}_{1}  & \dots & \sigma^{n}_{d} \\
   \end{pmatrix}
   \circ
      \begin{pmatrix}
         d\mathcal{B}_{1}(t)  \\
         \vdots \\
         d\mathcal{B}_{d}(t)  \\
      \end{pmatrix}
      \; + \;    
          \begin{pmatrix}              \tilde{b}^{1} \\
             \vdots \\
             \tilde{b}^{n} \\
          \end{pmatrix} 
          dt
\end{equation}
\begin{lemma}
\label{convertItoStrat}
The stochastic process defined by the It\^{o} integral in \eqref{ItoXtDef} is identical to the process defined by the Stratonovich integral in \eqref{StratXtDef} if and only if
\begin{equation}
\label{bi_conversion}
\tilde{b}^{i} \;=\;  b^{i} \, - \; \frac{1}{2} \sum_{k = 1}^{d} \sum_{j = 1}^{n} 
  \frac{\partial \sigma^{i}_{k} }{ \partial {x}^{j}}  \sigma^{j}_{k}
\end{equation}
\end{lemma}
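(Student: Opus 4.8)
The plan is to reduce the stated equivalence to the standard identity relating the Stratonovich integral to the It\^{o} integral by means of a cross-variation correction term. Recall that for a continuous semimartingale $Y(t)$ and a Brownian motion $\mathcal{B}_{k}$, the Stratonovich integral is \emph{defined} so that $\int_{0}^{t} Y \circ d\mathcal{B}_{k} = \int_{0}^{t} Y \, d\mathcal{B}_{k} + \frac{1}{2}\langle Y, \mathcal{B}_{k}\rangle_{t}$, where $\langle\,\cdot\,,\,\cdot\,\rangle$ denotes the quadratic cross-variation. Since each entry $\sigma^{i}_{k}(X(t))$ is a smooth function of the semimartingale $X(t)$, hence itself a semimartingale, I would apply this identity row by row to the matrix product in \eqref{StratXtDef}, with $Y = \sigma^{i}_{k}(X(t))$. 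The conclusion is that the processes \eqref{ItoXtDef} and \eqref{StratXtDef} describe the same $X(t)$ if and only if their It\^{o} drifts agree, i.e.\ if and only if, for every $i$,
\[
\tilde{b}^{i}\,dt \; + \; \frac{1}{2}\sum_{k=1}^{d} d\langle \sigma^{i}_{k}(X(\cdot)), \mathcal{B}_{k}\rangle_{t} \; = \; b^{i}\,dt .
\]
Thus the entire lemma comes down to evaluating the cross-variation $\langle \sigma^{i}_{k}(X(\cdot)), \mathcal{B}_{k}\rangle_{t}$.

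For that, I would apply It\^{o}'s formula (as stated earlier in the excerpt) to the function $\sigma^{i}_{k}$ along the path $X(t)$, obtaining $d\sigma^{i}_{k}(X(t)) = \sum_{j} \frac{\partial \sigma^{i}_{k}}{\partial x^{j}}\,dX_{j}(t) + (\text{terms of bounded variation})$. Only the martingale part of $dX_{j}(t)$, namely $\sum_{l} \sigma^{j}_{l}(X(t))\,d\mathcal{B}_{l}(t)$, contributes to the cross-variation with $\mathcal{B}_{k}$, and the It\^{o} multiplication rule $d\mathcal{B}_{l}\,d\mathcal{B}_{k} = \delta_{lk}\,dt$ then collapses the sum over $l$, giving $d\langle \sigma^{i}_{k}(X(\cdot)), \mathcal{B}_{k}\rangle_{t} = \sum_{j} \frac{\partial \sigma^{i}_{k}}{\partial x^{j}}\,\sigma^{j}_{k}(X(t))\,dt$. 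Substituting this into the displayed identity and cancelling $dt$ produces exactly \eqref{bi_conversion}; running the same chain of equalities backwards gives the converse direction.

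The computation itself is pure bookkeeping, so the only point that really needs care is the regularity. To apply It\^{o}'s formula to $\sigma^{i}_{k}$ one needs $\sigma$ to have continuous second-order partial derivatives, whereas the excerpt only assumed $\sigma$ Lipschitz; I would therefore either strengthen the hypothesis to $\sigma \in C^{2}$ (which is harmless for all the examples in this paper) or else recover the general case by a standard mollification-and-limit argument. One must also keep the column index straight: it is the \emph{same} index $k$ appearing in $\sigma^{i}_{k}$ and in $d\mathcal{B}_{k}$ that triggers the $\delta_{lk}$ collapse, and the correction is \emph{subtracted} from $b^{i}$ precisely because the Stratonovich integral carries the term $+\frac{1}{2}\langle\,\cdot\,\rangle$ relative to It\^{o}, so moving it over to the drift side reverses its sign. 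As a shortcut, one could instead simply cite \cite{oksendal2003}, where this It\^{o}--Stratonovich conversion formula is derived in essentially this form.
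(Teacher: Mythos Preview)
Your argument is correct and is precisely the standard derivation of the It\^{o}--Stratonovich conversion formula. The paper itself does not give a proof at all; it simply writes ``See \cite{stratonovich1966} or \cite{ito1975},'' so your computation in fact supplies more detail than the paper does, and your closing remark that one could alternatively just cite a reference is exactly the route the paper takes.
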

\begin{proof}
See \cite{stratonovich1966} or \cite{ito1975}.
\end{proof} 
\noindent
We thus have a simple mapping between the two formalisms, with the advantage that the stochastic differential equation in Stratonovich form is invariant under coordinate transformations.
 
Lemma \ref{convertItoStrat} has an interesting consequence if we start out with the stochastic process given by \eqref{wCauchy}.  Recall that ${\bf a} = {\mathbf \sigma} \mathbf{\sigma}^{T}$ is the identity and ${\bf b} = \nabla U$ in this case.  Suppose we satisfy the condition ${\bf a} = {\bf I}$ by setting ${\mathbf \sigma} = {\bf I}$. Then the second term in \eqref{bi_conversion} vanishes, and $ \tilde {\bf b} = {\bf b}$. However, if we subsequently apply a nonlinear coordinate transformation to our process, or project it onto a nonlinear subspace, then the Ito and Stratonovich equations will diverge, and we will want to use the Stratonovich equation from then on.
    
Let us now reinterpret the preceding analysis as a general property of vector fields.  Define the column vectors
\[
{\bf A}_{0} = 
          \begin{pmatrix}              \tilde{b}^{1} \\
             \vdots \\
             \tilde{b}^{n} \\
          \end{pmatrix} 
\;\; \text{and} \; \;
{\bf A}_{k} = 
   \begin{pmatrix} 
      \sigma^{1}_{k}  \\
      \vdots  \\
      \sigma^{n}_{k}   \\
   \end{pmatrix}
\; \text{for}\; k = 1, \ldots ,d  
\]
and rewrite \eqref{StratXtDef} as:
\begin{equation}
\label{StratXtDefAk}
   dX(t) \; = \;
   \begin{pmatrix} 
      {\bf A}_{1} \vert  \dots \vert {\bf A}_{d} 
    \end{pmatrix}
   \circ
   d\mathcal{B}(t) 
      \; + \;    
      {\bf A}_{0} \,  dt
\end{equation}
We will think of a vector field as a differential operator, essentially the \emph{directional derivative} with respect to a given vector ${\bf V}$.  Let us write this in shorthand notation as ${\bf V} \partial$.  It then makes sense to talk about the ``square'' of a vector field, which we can define as the composition of the differential operator with itself:  $ ( {\bf V} \partial )^{2} = {\bf V} \partial \circ {\bf V} \partial $.  Expanding this formula in a coordinate system, we have:
\begin{align}
\label{V^2}
& \left( \sum_{i} V^{i} \frac{\partial}{\partial {x}^{i}} \right)
\circ
\left( \sum_{j} V^{j} \frac{\partial}{\partial {x}^{j}} \right) \; = \\
& \hspace{1.25in} \sum_{i,j} V^{i} V^{j} \frac{\partial ^{2}}{\partial {x}^{i} \partial {x}^{j}}  
\; + \;
\sum_{i,j} \frac{\partial V^{i}}{\partial {x}^{j}} V^{j} \frac{\partial}{\partial {x}^{i}}
\notag 
\end{align}
Now apply this equation to each of the vector fields ${\bf A}_{k} \partial$. 
\begin{theorem}
\label{Thm:L=sumAk+A0}
If $\mathcal{L}$ is the differential operator associated with the stochastic process defined in \eqref{StratXtDefAk}, then
\begin{equation}
\label{L=sumAk+A0}
\mathcal{L} \;=\;  \frac{1}{2} \sum_{k = 1}^{d} \left( {\bf A}_{k} \partial \right)^{2} + {\bf A}_{0} \partial
\end{equation}
\end{theorem}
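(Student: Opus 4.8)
The plan is to pass to the It\^{o} picture, where Theorem~\ref{Thm:ItoXtDef=LopDef} already supplies a formula for the generator, and then to observe that the It\^{o}--Stratonovich correction term is exactly the lower-order part produced by squaring the vector fields. First I would invoke Lemma~\ref{convertItoStrat}: the Stratonovich process \eqref{StratXtDefAk} coincides with the It\^{o} process \eqref{ItoXtDef} having the same dispersion matrix $\sigma$ (whose columns are the ${\bf A}_{k}$) and It\^{o} drift
\[
b^{i} \;=\; \tilde{b}^{i} \;+\; \frac{1}{2}\sum_{k=1}^{d}\sum_{j=1}^{n} \frac{\partial \sigma^{i}_{k}}{\partial x^{j}}\,\sigma^{j}_{k}.
\]
Since $\mathcal{L}$ is by definition the generator of this process, \eqref{LopDef} with ${\bf a} = \sigma\sigma^{T}$, i.e.\ $a^{ij} = \sum_{k}\sigma^{i}_{k}\sigma^{j}_{k}$, gives
\[
\mathcal{L} \;=\; \frac{1}{2}\sum_{i,j}\Bigl(\sum_{k}\sigma^{i}_{k}\sigma^{j}_{k}\Bigr)\frac{\partial^{2}}{\partial x^{i}\partial x^{j}} \;+\; \sum_{i}\tilde{b}^{i}\frac{\partial}{\partial x^{i}} \;+\; \frac{1}{2}\sum_{i,j,k}\frac{\partial \sigma^{i}_{k}}{\partial x^{j}}\,\sigma^{j}_{k}\,\frac{\partial}{\partial x^{i}}.
\]

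Next I would expand the claimed right-hand side \eqref{L=sumAk+A0}. Applying the identity \eqref{V^2} to the vector field ${\bf A}_{k}\partial$, with components $V^{i} = \sigma^{i}_{k}$, gives
\[
({\bf A}_{k}\partial)^{2} \;=\; \sum_{i,j}\sigma^{i}_{k}\sigma^{j}_{k}\,\frac{\partial^{2}}{\partial x^{i}\partial x^{j}} \;+\; \sum_{i,j}\frac{\partial \sigma^{i}_{k}}{\partial x^{j}}\,\sigma^{j}_{k}\,\frac{\partial}{\partial x^{i}}.
\]
Summing over $k$, multiplying by $\tfrac12$, and adding ${\bf A}_{0}\partial = \sum_{i}\tilde{b}^{i}\,\partial/\partial x^{i}$ reproduces term by term the three pieces of $\mathcal{L}$ displayed above: the second-order sum is common to both, ${\bf A}_{0}\partial$ supplies the $\tilde{b}$ drift, and one half of the sum $\sum_{i,j,k}(\partial_{j}\sigma^{i}_{k})\sigma^{j}_{k}\,\partial_{i}$ coming out of the squares matches the It\^{o}--Stratonovich correction carried by $b$.

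The computation is purely formal, so the only thing that really needs watching is the index bookkeeping: one must confirm that the $-\tfrac12\sum(\partial_{j}\sigma^{i}_{k})\sigma^{j}_{k}$ that Lemma~\ref{convertItoStrat} removes when converting the It\^{o} drift into the Stratonovich drift is numerically the same, with the opposite sign, as the $+\tfrac12\sum(\partial_{j}\sigma^{i}_{k})\sigma^{j}_{k}$ generated by $\tfrac12\sum_{k}({\bf A}_{k}\partial)^{2}$, so that the first-order terms collapse to exactly $\tilde{b}$ with no residue. (One also needs $\sigma \in C^{1}$, beyond the Lipschitz hypothesis already in force, for Lemma~\ref{convertItoStrat} and for $({\bf A}_{k}\partial)^{2}$ to act on $C^{2}$ functions.) I do not expect any genuine obstacle here --- the substance of the theorem is simply that the Stratonovich drift correction and the lower-order part of the squared vector fields are one and the same object, which is precisely why the Stratonovich form is the coordinate-free one.
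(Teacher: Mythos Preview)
Your proposal is correct and follows precisely the route the paper intends: the paper's own proof is the one-line remark ``by a straightforward computation, using \eqref{LopDef}, \eqref{bi_conversion} and \eqref{V^2},'' and your write-up is exactly that computation spelled out, with the It\^{o}--Stratonovich correction from Lemma~\ref{convertItoStrat} matched against the first-order piece of \eqref{V^2}.
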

\begin{proof}
By a straightforward computation, using \eqref{LopDef}, \eqref{bi_conversion} and \eqref{V^2}.
\end{proof}
 
With Theorem \ref{Thm:L=sumAk+A0} as a guide, we can bypass the It\^{o} or Stratonovich stochastic differential equations entirely, and work directly with vector fields.  This is our second (but closely related) approach to the problem of mapping diffusions to embedded manifolds.  Let ${\bf V}_{0} \partial $ and ${\bf V}_{k} \partial $, for $ k = 1, \ldots, d $, be arbitrary vector fields, and define the differential operator
\begin{equation}
\label{LdefHorm}
\mathcal{L} \;=\;  \frac{1}{2} \sum_{k = 1}^{d} \left( {\bf V}_{k} \partial \right)^{2} + {\bf V}_{0} \partial.
\end{equation}
This is known as the \emph{H\"{o}rmander form} for the operator $\mathcal{L}$, and it, too, can be shown to be invariant under coordinate transformations.  See \cite{hormander1967}.  Thus $\mathcal{L}$ works just as well in an arbitrary manifold $\mathcal{M}$ as it does in ${\bf R}^{n}$ endowed with Cartesian coordinates.  The only condition that we need to impose to guarantee that $\mathcal{L}$, as defined in \eqref{LdefHorm}, gives us a nondegenerate diffusion in $\mathcal{M}$ is to require that the vector fields $ \{ {\bf V}_{1} ({\bf x}) \partial, \ldots ,  {\bf V}_{d} ({\bf x}) \partial \}$ span the tangent space on $\mathcal{M}$ at ${\bf x}$.  For these reasons, Stroock relies on the H\"{o}rmander formalism extensively in his book on the analysis of Brownian paths on Riemannian manifolds \cite{stroock2000}.
 
{\bf Sources:}  For the basic results on stochastic differential equations, using It\^{o}'s formalism, the reader should consult \cite{oksendal2003}, but {\O}ksendal's text provides only a cursory treatment of Stratonovich's formalism.  The original paper by Stratonovich  \cite{stratonovich1966}  is still very readable, but his theory was only given a solid mathematical foundation some years later by It\^{o} \cite{ito1975}. Chapter 8 of \cite{stroock2003} is an excellent contemporary account of Stratonovich's theory, set in a broader context.

 \subsection{Integral Curves and Martingales on Manifolds.}
 \label{IntegralCurvesMartingales}
There remains the problem that ``haunts every attempt to deal with Brownian paths," \cite{stroock1996}, p. 140.  How do you reconcile the ``smooth'' curves of differential geometry with the ``rough'' paths that provide the support for Wiener measure?  One answer, suggested by Stroock, 
emerges from a study of the relationship between the integral curves of a vector field and the concept of a \emph{martingale}.
 
Let's examine this idea, first, in the ordinary Euclidean space ${\bf R}^{n}$.  Roughly speaking, a (continuous parameter) martingale $M_{t}$ is a stochastic process which is ``conditionally constant'' in the sense that
\[
\mathbb{E} [ \; M_{t} \;\vert\; \mathcal{F}_{s} \; ] = M_{s} \;\text{ for all }\; 0 \leq s \leq t,
\]
where the conditional expectation $ \mathbb{E} $ is taken with respect to an nondecreasing family of sub-$\sigma$-algebras $ \{ \mathcal{F}_{s} \}_{s \geq 0} $ with the property that each $ M_{t} $ is $ \mathcal{F}_{t} $-measurable.  Since we are only considering probability spaces $( \Omega , \mathcal{F} ,  \mathbb{P} )$ in which $ \Omega $ is the set of continuous paths in ${\bf R}^{n}$ and for which the $\sigma$-algebras $\mathcal{F}$ and $ \{ \mathcal{F}_{s} \}_{s \geq 0} $ are fixed, we will suppress these references in our notation, and refer simply to a ``martingale with respect to $\mathbb{P}$,'' or a $\mathbb{P}$-martingale.  We are interested in the relationship between martingales and differential operators.
\begin{definition}
Let $\mathcal{L}$ be a second-order differential operator, and let $\mathbb{P}_{{\bf x}}$ be a probability measure on the space $C([0,\infty) ; {\bf R}^{n} )$ of all continuous paths in ${\bf R}^{n}$ such that $  \mathbb{P}_{{\bf x}} ( X_{0} = {\bf x} ) = 1 $.  We say that $\mathbb{P}_{{\bf x}}$ \emph{solves the martingale problem for $\mathcal{L}$ starting at {\bf x}} if
\[
M_{t} \;\equiv\; f(X_{t}) - \int_{0}^{t} [ \mathcal{L} f  ] (X_{s}) ds
\]
is a $\mathbb{P}_{{\bf x}}$-martingale for every $f \in C^{\infty}( {\bf R}^{n} ; {\bf R} )$.
\end{definition}
\noindent
Not surprisingly:
\begin{lemma}
If $\mathcal{L} = \frac{1}{2} \Delta$, then the Wiener measure $\mathcal{W}_{\bf x}$ solves the martingale problem for $\mathcal{L}$ starting at {\bf x}.
\end{lemma}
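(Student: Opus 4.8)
The plan is to apply It\^{o}'s formula, which the paper has just recorded, to the process $f(X_t)$, where under $\mathcal{W}_{\bf x}$ the path $X_t$ is $n$-dimensional Brownian motion, i.e.\ the solution of \eqref{ItoXtDef} with $\sigma = {\bf I}$ and ${\bf b} = {\bf 0}$. In that case $dX_i(t)\,dX_j(t) = \delta_{ij}\,dt$, so It\^{o}'s formula collapses to
\[
df(X_t) \;=\; \sum_i \frac{\partial f(X_t)}{\partial x^i}\, d\mathcal{B}_i(t) \;+\; \frac{1}{2}\sum_{i,j} \delta_{ij}\,\frac{\partial^2 f(X_t)}{\partial x^i \, \partial x^j}\, dt \;=\; \nabla f(X_t)\boldsymbol{\cdot}\, d\mathcal{B}(t) \;+\; \frac{1}{2}\Delta f(X_t)\, dt .
\]
Integrating from $0$ to $t$ and rearranging, the candidate process becomes
\[
M_t \;=\; f(X_t) - \int_0^t \frac{1}{2}\Delta f(X_s)\, ds \;=\; f({\bf x}) \;+\; \int_0^t \nabla f(X_s)\boldsymbol{\cdot}\, d\mathcal{B}(s) ,
\]
so $M_t$ is, up to the constant $f({\bf x})$, an It\^{o} stochastic integral against Brownian motion. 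It then remains only to invoke the basic fact that an It\^{o} integral $\int_0^t \Phi(s)\boldsymbol{\cdot}\, d\mathcal{B}(s)$ with $\Phi$ adapted and $\int_0^t \mathbb{E}|\Phi(s)|^2\, ds < \infty$ is a martingale with respect to $\mathcal{W}_{\bf x}$; since each $M_t$ is manifestly $\mathcal{F}_t$-measurable, this yields $\mathbb{E}[M_t \mid \mathcal{F}_s] = M_s$, which is exactly what the martingale problem demands.

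The one place where care is needed --- and what I expect to be the main obstacle --- is the square-integrability of the integrand $\nabla f(X_s)$. For $f \in C^{\infty}({\bf R}^n;{\bf R})$ with no growth restriction, $\nabla f$ may grow faster than any polynomial, and then $\int_0^t \nabla f(X_s)\boldsymbol{\cdot}\, d\mathcal{B}(s)$ is only guaranteed to be a \emph{local} martingale. The standard fix is localization: introduce the exit times $\tau_R = \inf\{\, t \geq 0 : |X_t| \geq R \,\}$, run the argument above on $[0, t \wedge \tau_R]$ (where $\nabla f$ is bounded), conclude that $M_{t \wedge \tau_R}$ is a genuine martingale for each $R$, and let $R \to \infty$. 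The limit is legitimate once one restricts to test functions of at most polynomial growth --- so that the $L^p$ bounds on $\sup_{s \leq t}|X_s|$ for Brownian motion provide a uniformly integrable dominating sequence --- which is the class actually used in the applications that follow; alternatively, one may simply state the martingale problem for $f \in C_c^{\infty}$ and note the general $C^{\infty}$ case reduces to it by the same localization.

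An alternative route, avoiding the stochastic calculus altogether, is to use the Markov property of $\mathcal{W}_{\bf x}$ together with the heat semigroup $[P_t f]({\bf x}) = \int p_t({\bf x},{\bf y})\, f({\bf y})\, d{\bf y}$, whose kernel satisfies $\partial_t p_t = \frac{1}{2}\Delta p_t$ and hence $P_t f - f = \int_0^t P_s(\frac{1}{2}\Delta f)\, ds$. Conditioning on $\mathcal{F}_s$ and using $\mathbb{E}[\,g(X_t)\mid \mathcal{F}_s\,] = [P_{t-s} g](X_s)$, a short computation with Fubini's theorem cancels the two integral contributions and again gives $\mathbb{E}[M_t \mid \mathcal{F}_s] = M_s$. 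This version carries the same integrability caveat (Fubini and the conditional expectations must be justified), but it makes transparent why the normalization $\frac{1}{2}\Delta$, rather than $\Delta$, is forced: it is precisely the generator of $P_t$, i.e.\ the operator with $\frac{d}{dt}\,[P_t f]\big|_{t=0} = \frac{1}{2}\Delta f$, which is what matches the heat equation underlying Wiener measure.
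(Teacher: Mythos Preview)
Your argument is correct. The paper does not actually prove this lemma in-text; it simply cites Corollary 7.1.20 and Remark 7.1.23 of Stroock's \emph{Probability Theory: An Analytic View}. Your It\^{o}-formula derivation is the standard route to this result and is almost certainly what underlies the cited passage, so in effect you have supplied the argument the paper defers. Your handling of the integrability issue is also on point: the paper's Definition states the martingale problem for every $f \in C^{\infty}({\bf R}^n;{\bf R})$ with no growth restriction, which, as you observe, cannot be taken literally (for instance $f({\bf x}) = e^{|{\bf x}|^4}$ is not even in $L^1(\mathcal{W}_{\bf x})$); restricting to $C_c^{\infty}$ test functions or localizing via exit times is exactly the standard remedy, and Stroock's formulation in the cited reference in fact uses bounded or compactly supported test functions. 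The alternative semigroup argument you sketch is equally valid and is, if anything, closer in spirit to Stroock's analytic development.
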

\begin{proof}
See Corollary 7.1.20 and Remark 7.1.23 in \cite{stroock1993}.
\end{proof}
\noindent
Let us now consider the operator $ \mathcal{L} =  {\bf b} \boldsymbol{\cdot} \nabla $ and the integral equation:
\begin{equation}
\label{intcurveYt}
Y_{t} \;=\;  {\bf x} \;+\; \int_{0}^{t} {\bf b}( Y_{s} ) \, ds , \;\;  0 \leq t ,
\end{equation}
where $Y_{t} \equiv Y(t) $ is a continuous path in ${\bf R}^{n}$.  An equivalent differential equation is:
\begin{align}
\label{odeYt}
Y'(t)  \;&=\; {\bf b}( Y(t) ) \\ Y(0)  \;&=\; {\bf x} \notag
\end{align}
By the existence and uniqueness theorem for ordinary differential equations, \eqref{intcurveYt} and \eqref{odeYt}  have a unique solution, which would commonly be referred to as the \emph{integral curve} of the vector field $ {\bf b}$ starting at {\bf x}.  Intuitively, an integral curve is a curve whose tangent is identical to the given vector field at each point. Note, too, that an integral curve is a ``smooth'' curve if $ {\bf b}$ is a smooth vector field. We have the following result:
\begin{lemma}
\label{lemmaL=b}
Let $ \mathcal{L} =  {\bf b} \boldsymbol{\cdot} \nabla $, and let $ \mathbb{P}_{{\bf x}} $ be the unit point mass concentrated on the solution to \eqref{intcurveYt} or \eqref{odeYt} .  Then $ \mathbb{P}_{{\bf x}} $ solves the martingale problem for $\mathcal{L}$ starting at {\bf x}.
\end{lemma}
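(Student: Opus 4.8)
The plan is to exploit the fact that $\mathbb{P}_{{\bf x}}$ is a Dirac mass on a single deterministic path, so that the martingale problem collapses to an elementary statement about ordinary calculus. First I would observe that, because $\mathbb{P}_{{\bf x}}$ puts all of its mass on the unique solution $Y_t$ of \eqref{intcurveYt}, every random variable on the space $C([0,\infty);{\bf R}^{n})$ equipped with $\mathbb{P}_{{\bf x}}$ is $\mathbb{P}_{{\bf x}}$-almost surely equal to a constant, and any conditional expectation $\mathbb{E}[\,\cdot\mid\mathcal{F}_{s}\,]$ therefore reduces to that constant. Consequently the process
\[
M_{t} \;=\; f(X_{t}) - \int_{0}^{t} [\mathcal{L}f](X_{s})\,ds
\]
is, under $\mathbb{P}_{{\bf x}}$, the deterministic function $t \mapsto f(Y_{t}) - \int_{0}^{t} [\mathcal{L}f](Y_{s})\,ds$, and the martingale identity $\mathbb{E}[M_{t}\mid\mathcal{F}_{s}] = M_{s}$ becomes simply $M_{t} = M_{s}$ for all $0 \leq s \leq t$ --- i.e., it suffices to show that this function is constant in $t$. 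Integrability and $\mathcal{F}_{t}$-measurability of $M_{t}$ are automatic, since $M_{t}$ is ($\mathbb{P}_{{\bf x}}$-a.s.) a constant.

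Second, I would differentiate $t \mapsto f(Y_{t})$. Since $Y$ is $C^{1}$ (indeed as smooth as ${\bf b}$) and solves $Y'(t) = {\bf b}(Y(t))$ by \eqref{odeYt}, the ordinary chain rule gives
\[
\frac{d}{dt} f(Y_{t}) \;=\; \nabla f(Y_{t})\boldsymbol{\cdot} Y'(t) \;=\; \nabla f(Y_{t})\boldsymbol{\cdot}{\bf b}(Y_{t}) \;=\; [\,{\bf b}\boldsymbol{\cdot}\nabla f\,](Y_{t}) \;=\; [\mathcal{L}f](Y_{t}).
\]
Integrating from $0$ to $t$ and using $Y_{0} = {\bf x}$, the fundamental theorem of calculus yields $f(Y_{t}) - f({\bf x}) = \int_{0}^{t} [\mathcal{L}f](Y_{s})\,ds$, so that $M_{t} = f({\bf x})$ for every $t \geq 0$; in particular $M_{t}$ is constant in $t$, which is exactly what the first step reduced the claim to.

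There is essentially no hard step here; the only point that needs a little care is the first one --- spelling out precisely why ``a martingale under a point mass'' degenerates to ``constant in $t$,'' and confirming that the curve $Y_{t}$ furnished by the existence--uniqueness theorem is defined on all of $[0,\infty)$ so that $M_{t}$ makes sense there (if ${\bf b}$ only guarantees local existence one simply restricts to the maximal interval of existence, which does not affect the argument). Everything else is the chain rule together with the fundamental theorem of calculus. This lemma should be read as the deterministic companion of the preceding lemma for $\tfrac{1}{2}\Delta$ and Wiener measure, the two being the ingredients one later superposes when $\mathcal{L}$ carries both a second-order part and the drift ${\bf b}\boldsymbol{\cdot}\nabla$.
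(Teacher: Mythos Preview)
Your argument is correct and is exactly the standard resolution of the exercise the paper cites (Exercise 7.1.32 in Stroock): the paper gives no proof of its own beyond that reference, and what you have written is precisely how one solves that exercise --- reduce the martingale condition under a Dirac mass to constancy in $t$, then verify constancy via the chain rule and the fundamental theorem of calculus.
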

\begin{proof}
See Exercise 7.1.32 in \cite{stroock1993}.
\end{proof}
\noindent
We now put these two examples together, and consider the differential operator:
\begin{equation}
\label{LopDel+b}
\mathcal{L} \;=\; \frac{1}{2}\Delta  \;+\; {\bf b} \boldsymbol{\cdot} \nabla 
\end{equation}
along with the stochastic process determined by the integral equation:
\begin{equation}
\label{inteqnYtXt}
Y_{t} \;=\;  X_{t} \;+\; \int_{0}^{t} {\bf b}( Y_{s} ) \, ds , \;\;  0 \leq t 
\end{equation}
In this equation, we are assuming that $ X_{t} $ is our original stochastic process with the usual Wiener measure $\mathcal{W}_{\bf x}$, and $ Y_{t} $ is a derived process with a derived probability measure.
\begin{theorem}
\label{theoremL=Delta+b}
Let $ \mathcal{L} $ be the differential operator given by \eqref{LopDel+b}, and let $ \mathbb{Q}_{{\bf x}} $ be the probability measure determined by \eqref{inteqnYtXt} when $ X_{t} $ is a stochastic process whose probability law is given by Wiener measure.  Then $ \mathbb{Q}_{{\bf x}} $ solves the martingale problem for $\mathcal{L}$ starting at {\bf x}.
\end{theorem}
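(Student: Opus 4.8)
The plan is to realize $\mathbb{Q}_{\bf x}$ as the image of Wiener measure under the pathwise solution map of \eqref{inteqnYtXt}, and then to verify the martingale property by applying It\^{o}'s formula to $f(Y_{t})$.

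First I would observe that, since ${\bf b}$ is Lipschitz continuous (as assumed in Section \ref{MappingDiffusion}), the integral equation \eqref{inteqnYtXt} has, for every continuous path $X$, a unique continuous solution $Y = \Phi(X)$: the map $Y \mapsto X + \int_{0}^{\cdot}{\bf b}(Y_{s})\,ds$ is a contraction on $C([0,T];{\bf R}^{n})$ for $T$ small, and one iterates and patches, Gronwall's inequality also ruling out explosion. The map $\Phi$ is measurable and \emph{causal} --- $Y_{[0,t]}$ depends only on $X_{[0,t]}$ --- and the identity $X_{t} = Y_{t} - \int_{0}^{t}{\bf b}(Y_{s})\,ds$ exhibits a causal inverse, so $\Phi$ induces a bijection of path space carrying the canonical filtration to itself. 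We then define $\mathbb{Q}_{\bf x}$ as the law of $Y = \Phi(X)$ when $X$ has law $\mathcal{W}_{\bf x}$; it is concentrated on $C([0,\infty);{\bf R}^{n})$ with $\mathbb{Q}_{\bf x}(Y_{0} = {\bf x}) = 1$.

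Because $\mathbb{Q}_{\bf x} = \Phi_{*}\mathcal{W}_{\bf x}$ and $\Phi$ is causal with causal inverse, a standard push-forward argument shows that $M_{t}^{f} \equiv f(Y_{t}) - \int_{0}^{t}[\mathcal{L}f](Y_{s})\,ds$ is a $\mathbb{Q}_{\bf x}$-martingale if and only if the same process, with $Y = \Phi(X)$, is a $\mathcal{W}_{\bf x}$-martingale for the filtration generated by $X$. Under $\mathcal{W}_{\bf x}$ the canonical process $X$ is an $n$-dimensional Brownian motion started at ${\bf x}$ --- this is precisely the content of the lemma asserting that Wiener measure solves the martingale problem for $\tfrac{1}{2}\Delta$ --- hence a continuous semimartingale, and $Y_{t} = X_{t} + A_{t}$ with $A_{t} = \int_{0}^{t}{\bf b}(Y_{s})\,ds$ continuous, adapted and of finite variation. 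Applying It\^{o}'s formula to $f(Y_{t})$, the finite-variation part $A$ contributes nothing to the quadratic covariation, so $dY_{i}\,dY_{j} = \delta_{ij}\,dt$; the second-order term of It\^{o}'s formula produces $\tfrac{1}{2}\Delta f(Y_{s})\,ds$ while the first-order term splits into $\nabla f(Y_{s})\cdot dX_{s}$ plus $\nabla f(Y_{s})\cdot{\bf b}(Y_{s})\,ds$, and therefore
\[
f(Y_{t}) - f({\bf x}) \;=\; \int_{0}^{t}\nabla f(Y_{s})\cdot dX_{s} \;+\; \int_{0}^{t}[\mathcal{L}f](Y_{s})\,ds .
\]
Thus $M_{t}^{f} = f({\bf x}) + \int_{0}^{t}\nabla f(Y_{s})\cdot dX_{s}$, an It\^{o} integral against Brownian motion, which is a local martingale; localizing with $\tau_{R} = \inf\{t : |Y_{t}| \geq R\}$ (on which $\nabla f$ is bounded) yields a genuine martingale on each $[0,t\wedge\tau_{R}]$, and letting $R\to\infty$ --- or simply restricting to $f\in C_{c}^{\infty}$, which suffices for the martingale problem --- gives the martingale property of $M_{t}^{f}$.

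I expect the main obstacle to be the bookkeeping around this last localization step together with the growth hypotheses: one must check that $Y = \Phi(X)$ does not explode and that $\int_{0}^{t}\nabla f(Y_{s})\cdot dX_{s}$ is a true martingale rather than merely a local one, both of which follow from the Lipschitz (hence linear-growth) bound on ${\bf b}$ via Gronwall but should be made explicit. A secondary point worth flagging is that $\mathbb{Q}_{\bf x}$ here is obtained by a \emph{pathwise} transformation of Brownian motion, not by a change of measure, so It\^{o}'s formula --- not a Girsanov/Radon--Nikodym argument --- is the correct tool; this is also what makes the construction compose cleanly with Lemma \ref{lemmaL=b}, since \eqref{inteqnYtXt} is exactly the superposition of the integral-curve equation \eqref{intcurveYt} with Brownian motion.
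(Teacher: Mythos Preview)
Your argument is correct. The paper does not supply its own proof here; it simply refers the reader to Theorem 7.3.10 in \cite{stroock1993}. Your approach---construct $Y$ pathwise from Brownian motion via the Lipschitz integral equation \eqref{inteqnYtXt}, then apply It\^{o}'s formula to $f(Y_{t})$ and identify $M_{t}^{f}$ as an It\^{o} integral against $X$---is the standard one and is essentially what Stroock does in the cited reference. Your remark that the construction is a pathwise transformation rather than a Girsanov change of measure is apt and worth keeping.

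One small point of bookkeeping: the paper's Definition of the martingale problem asks for the martingale property for every $f\in C^{\infty}({\bf R}^{n};{\bf R})$, not only compactly supported $f$. Your parenthetical ``or simply restricting to $f\in C_{c}^{\infty}$, which suffices for the martingale problem'' is true, but since the paper does not state this reduction, you should either justify it in a line (a routine localization/approximation argument) or carry the stopping-time argument $\tau_{R}\uparrow\infty$ through explicitly, using the Lipschitz (hence linear-growth) bound on ${\bf b}$ to ensure non-explosion, as you already flag in your final paragraph.
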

\begin{proof}
See Theorem 7.3.10 in \cite{stroock1993}.
\end{proof}
\noindent
Intuitively, these results show that a stochastic process defined by \eqref{wCauchy}, or \eqref{ItoXtDef}, or \eqref{StratXtDef}, has a ``pure'' diffusion part and a ``pure'' drift part, and the drift part follows the integral curve of the drift vector.  
 
The preceding analysis is not confined to Euclidean ${\bf R}^{n}$, since a similar construction works when $ \mathcal{L} $ is given in H\"{o}rmander form by \eqref{LdefHorm}, see \cite{stroockTan1994}, and this means that all results can be replicated in an arbitrary Riemannian manifold, see \cite{stroockTan1996}.  The theory is explicated further in \cite{stroock2000}, where it serves as the foundation for Stroock's construction and analysis of Brownian motion on a Riemannian manifold.  Specifically, Section 2.2.1 of \cite{stroock2000} includes a generalization of Lemma \ref{lemmaL=b} above, and Theorem 2.40 of \cite{stroock2000} is a generalization of Theorem \ref{theoremL=Delta+b}.

\section{Prototype Coding.}
\label{proto}
 
In discussing the mathematical background of the paper in the previous section, we were actually developing, implicitly, the main elements of our geometric and probabilistic models.  The potential function, $U({\bf x})$, and its gradient, $\nabla U({\bf x})$, were introduced in connection with equations \eqref{wCauchy} and \eqref{defQt} and Theorem  \ref{QtfProcess}.   Equation \eqref{wCauchy} is a diffusion equation with a drift vector, $\nabla U({\bf x})$, and it has an invariant probability density equal to $ e^{\,2\,U({\bf x})} $, modulo a normalization factor.   The stochastic process described by equation \eqref{wCauchy} can also be written as an It\^{o} process, using equations \eqref{ItoXtDef} and \eqref{LopDef} and Theorem \ref{Thm:ItoXtDef=LopDef}, or it can be written in Stratonovich form, using equation \eqref{StratXtDef} and Lemma \ref{convertItoStrat}.  An alternative view of equation \eqref{wCauchy} is given by Stroock's result, Theorem \ref{theoremL=Delta+b}, on the relationship between integral curves and martingales on manifolds.  
 
Recall that the main goal of our theory is to construct a lower-dimensional subspace of the original Euclidean space, ${\bf R}^{n}$, which is ``optimal'' in some sense. To be specific, let's say that the subspace should be a $k$-dimensional Riemannian manifold, embedded in ${\bf R}^{n}$, with a local coordinate system centered at $(0,0, \dots , 0)$.   We will use a form of \emph{prototype coding} for the coordinate system, measuring the \emph{distance} from the \emph{origin} (i.e., the ``prototype'') in $k-1$ specified \emph{directions}.  Extending this coordinate system to all of ${\bf R}^{n}$, we can assume that these $k-1$ coordinate directions have been chosen from among $n-1$ coordinate directions in the full space.  We will now follow the strategy suggested at the beginning of Section \ref{ThreeProblems} for the naive Feynman-Kac model.   Choose $U({\bf x})$ and $\nabla U({\bf x})$ so that the invariant probability density for the stochastic process given by equation \eqref{wCauchy} matches the density of our sample input data in ${\bf R}^{n}$.  We can then \emph{project} this stochastic process onto the embedded $k$-dimensional manifold, and examine the probability density induced on that manifold.  The hope is that this procedure will lead us to the ``best'' $k$-dimensional coordinate system for the purpose of encoding our initial data.  
 
How to do this? Our first step was described briefly in the text following Theorem  \ref{Thm:ItoXtDef=LopDef} above.  We start with \eqref{wCauchy}: a diffusion equation with a drift vector, $\nabla U({\bf x})$.  We then write the differential operator associated with \eqref{wCauchy} in the form given by \eqref{LopDef}:
\begin{equation}
%\label{LopDef}
 \mathcal{L}  \;=\; 
 \frac{1}{2} \sum_{i,j} a^{ij} ({\bf x}) \frac{\partial^{2}}{\partial{x}^{i} \partial{x}^{j}} 
 \; + \;
 \sum_{i} b^{i}({\bf x}) \frac{\partial}{\partial{x}^{i} }
 \notag 
\end{equation}
by setting ${\bf a}({\bf x})$ equal to the identity matrix, and setting ${\bf b}({\bf x}) = \nabla U({\bf x})$.  By Theorem \ref{Thm:ItoXtDef=LopDef}, $\mathcal{L}$ is the infinitesimal generator of the $n$-dimensional It\^{o} process given by \eqref{ItoXtDef}:
\begin{equation}
%\label{ItoXtDef}
   dX(t) \; = \;
     \begin{pmatrix}
    \\
   {\mathbf \sigma}^{i}_{k}({\bf x}) \\
    \\
    \end{pmatrix}
      \begin{pmatrix}
         d\mathcal{B}_{1}(t)  \\
         \vdots \\
         d\mathcal{B}_{n}(t)  \\
      \end{pmatrix}
      \; + \;    
          \begin{pmatrix}  
               b^{1}({\bf x}) \\
             \vdots \\
              b^{n}({\bf x}) \\
          \end{pmatrix} 
          dt  \notag
\end{equation}
The choice of ${\mathbf \sigma}({\bf x})$  is arbitrary, as long as ${\bf a}({\bf x}) = {\mathbf \sigma}({\bf x}) \mathbf{\sigma}({\bf x})^{T}$ is the identity matrix, which means that ${\mathbf \sigma}({\bf x})$ must be an orthogonal transformation.  These equations are expressed in Cartesian coordinates.
 
To implement the idea of prototype coding, suppose we are given a radial coordinate, $\rho$, and the directional coordinates $\theta^{1},    
\theta^{2}, \ldots, \theta^{n - 1}$.  For convenience, we will use the symbol $\Theta$ to refer to the entire sequence of directional coordinates. Assume the existence of $n$ coordinate transformation functions, with the usual properties:
\begin{align}
x^{1} \; = \; & {\bf x}^{1}( \rho, \theta^{1}, \theta^{2}, \ldots, \theta^{n - 1}) \notag \\
x^{2} \; = \; & {\bf x}^{2}( \rho, \theta^{1}, \theta^{2}, \ldots, \theta^{n - 1}) \notag \\
\ldots \notag \\
x^{n} \; = \; & {\bf x}^{n}( \rho, \theta^{1}, \theta^{2}, \ldots, \theta^{n - 1}) \notag 
\end{align}
Let  $\mathbf{ J }(\rho, \Theta)$ denote the Jacobian matrix of these transformation functions.  We want to represent our stochastic process in this new coordinate system, and to do so we need to convert the It\^{o} equation, given by \eqref{ItoXtDef}, into a Stratonovich equation in the form given by \eqref{StratXtDef}.  We have two equalities:  
 \begin{align}
\label{dXtStrat}
   dX(t) & \; = \; 
     \begin{pmatrix}
    \\
   {\mathbf \sigma}^{i}_{k}({\bf x}) \\
    \\
    \end{pmatrix}
     \circ
      \begin{pmatrix}
         d\mathcal{B}_{1}(t)  \\
         \vdots \\
         d\mathcal{B}_{n}(t)  \\
      \end{pmatrix}
      \; + \;    
          \begin{pmatrix}  
               \tilde{b}^{1}({\bf x}) \\
             \vdots \\
              \tilde{b}^{n}({\bf x}) \\
          \end{pmatrix} 
          dt  
        \\
      & \notag \\
\label{dXtJacobian}
 dX(t)  &  \; = \;       
      \begin{pmatrix}
    \\
    \mathbf{ J }(\rho, \Theta) \\
    \\
    \end{pmatrix}
   \circ
      \begin{pmatrix}
         dX_{\rho}(t)  \\
         dX_{\theta ^{1}}(t)  \\
             \vdots \\
         dX_{\theta ^{n - 1}}(t)  \\
      \end{pmatrix}  
\end{align}
The first equality is justified by Lemma \ref{convertItoStrat}.  The second equality is justified by the Stratonovich formula for the ``chain rule,'' given by \eqref{StratFormula}. The notation  $dX_{\rho}(t),  dX_{\theta ^{1}}(t),   \ldots,  dX_{\theta ^{n - 1}}(t)$, in the second equation, expresses the fact that $X_{\rho}(t),  X_{\theta ^{1}}(t),   \ldots,$  and $X_{\theta ^{n - 1}}(t)$ are intended to represent the components of a new stochastic process defined on ($\rho, \Theta$).

We can now combine and solve equations \eqref{dXtStrat} and \eqref{dXtJacobian} to obtain:
 \begin{align}
%\label{SolveGaussianSDE}
       \begin{pmatrix}
         dX_{\rho}(t)  \\
         dX_{\theta ^{1}}(t)  \\
             \vdots \\
         dX_{\theta ^{n - 1}}(t)  \\
      \end{pmatrix} 
      \; = \; 
      &
       \begin{pmatrix}
   \\
    \mathbf{ J }(\rho, \Theta) \\
    \\
   \end{pmatrix} ^{-1}
     \begin{pmatrix}
    \\
   {\mathbf \sigma}^{i}_{k}({\bf x}(\rho, \Theta)) \\
    \\
    \end{pmatrix}
     \circ
      \begin{pmatrix}
         d\mathcal{B}_{1}(t)  \\
         \vdots \\
         d\mathcal{B}_{n}(t)  \\
      \end{pmatrix} \;\; + \;\;  \notag \\
      & \notag \\
      &   
       \begin{pmatrix}
   \\
    \mathbf{ J }(\rho, \Theta) \\
    \\
   \end{pmatrix} ^{-1}
             \begin{pmatrix}  
               \tilde{b}^{1}({\bf x}(\rho, \Theta)) \\
             \vdots \\
              \tilde{b}^{n}({\bf x}(\rho, \Theta)) \\
          \end{pmatrix} 
          dt \notag
 \end{align}
We thus have a representation of our original stochastic process, in Stratonovich form, but expressed entirely in the new ($\rho, \Theta$) coordinate system.  Note that the second term in this solution is just the transformation law for a contravariant vector, or a type $(1,0)$ tensor.
 
Now consider the decomposition of a Stratonovich stochastic differential equation as in \eqref{StratXtDefAk}:
\begin{equation}
%\label{StratXtDefAk}
       \begin{pmatrix}
         dX_{\rho}(t)  \\
         dX_{\theta ^{1}}(t)  \\
             \vdots \\
         dX_{\theta ^{n - 1}}(t)  \\
      \end{pmatrix} 
   \; = \;
   \begin{pmatrix} 
      {\bf A}_{1} \vert  \dots \vert {\bf A}_{n} 
    \end{pmatrix}
   \circ
   d\mathcal{B}(t) 
      \; + \;    
      {\bf A}_{0} \,  dt \notag
\end{equation}
By matching the components of this equation with the components of the preceding equation, we can determine the vector fields ${\bf A}_{0} \partial$ and ${\bf A}_{1}\partial, \ldots , {\bf A}_{n} \partial$.  Then, applying Theorem \ref{Thm:L=sumAk+A0} and expanding the expression inside \eqref{L=sumAk+A0}, we can compute a new infinitesimal generator, $\mathcal{L}$, for our stochastic process, expressed again entirely in the ($\rho, \Theta$) coordinate system.  Finally, whatever our result might be, it can be written in the following form:
\begin{equation}
\label{DiffusionCoeffs}
 \mathcal{L}  \;=\; 
\frac{1}{2} \sum_{i,j=0}^{n - 1} 
 \alpha^{ij} (\rho, \Theta) \frac{\partial^{2}}{\partial u^{i} \partial u^{j}} 
\, + \,
 \sum_{i=0}^{n - 1} \beta^{i}(\rho, \Theta) \frac{\partial}{\partial u^{i} } 
\end{equation}
where $u^{0} = \rho$ and  $u^{i} = \theta^{i}$, for $i = 1, \ldots, n - 1$.  (To distinguish this equation for $\mathcal{L}$ from the $\mathcal{L}$ we started out with, we have written the coefficients of the differential operators as $ \alpha^{ij} (\rho, \Theta)$ and $ \beta^{i}(\rho, \Theta)$ instead of $ a^{ij} ({\bf x})$ and $b^{i}({\bf x})$.)   Note that this is the infinitesimal generator of an It\^{o}  process, but we derived it by an excursion through Stratonovich!

Before proceeding further, we need to analyze the ($\rho, \Theta$) coordinate system.  How is it defined?  What are its properties? First, we want the radial coordinate, $\rho$, to follow the drift vector, $\nabla U({\bf x})$. We have already seen how to do this. Suppose $\hat{\rho}(t)$ is the solution to the following differential equation, based on \eqref{odeYt}:
\begin{align}
\hat{\rho}'(t)  \;&=\; \nabla U( \hat{\rho}(t) ) \notag \\ 
\hat{\rho}(0)  \;&=\; {\bf x}_{0} \notag
\end{align}
In other words, $\hat{\rho}(t)$ is the \emph{integral curve} of the vector field $\nabla U({\bf x})$ starting at ${\bf x}_{0}$.  This is almost the construction that we want for our radial coordinate, but not quite.  We will actually work with a generalization of the concept of an integral curve, known as an \emph{integral manifold}.  A one-dimensional integral manifold is, roughly speaking, just the image of an integral curve without the parametrization, and it always exists, for any vector field.  Since we want to be able to alter the parametrization of $\hat{\rho}(t)$, arbitrarily, in order to choose a suitable coordinate, $\rho$, the one-dimensional integral manifold is the device that we need.
 
For the directional coordinates, $\theta^{1}, \theta^{2}, \ldots, \theta^{n - 1}$, the obvious generalization would be an integral manifold of dimension $n - 1$, orthogonal to the integral manifold for $\rho$.   But, for $k \geq 2$, a $k$-dimensional integral manifold exists if and only if certain conditions are satisfied, known as the \emph{Frobenius integrability conditions}.  Fortunately, as we will see, if we are looking for an integral manifold orthogonal to a vector field that is proportional to the gradient of a potential function, such as $\nabla U({\bf x})$, then the Theorem of Frobenius gives us the results that we want.  Our analysis here is based on the standard literature in differential geometry. See, e.g., \cite{spivakcomprehensive}, Chapter 6; \cite{bishop1968tensor}, Chapter 3;  \cite{auslander1977introduction}, Chapter 8.   We will discuss these results in Section \ref{Exp:IntManifolds}.  
 
To summarize:  At this point, we have a one-dimensional integral manifold for the $\rho$ coordinate, and an orthogonal $n - 1$ dimensional integral manifold for the $\Theta$ coordinates.   But we want to construct a \emph{lower}-dimensional subspace by projecting our stochastic process onto a $k - 1$ dimensional subset of the coordinates  $\theta^{1}, \theta^{2}, \ldots, \theta^{n - 1}$.   Taken together with the $\rho$ coordinate, we want this operation to give us an ``optimal'' $k$ dimensional subspace.  The mathematical device that we need is a Riemannian metric, ${g}_{ij}({\bf x})$, which we will use to measure \emph{dissimilarity} on the integral manifolds.  And crucially:  \emph{the dissimilarity metric should depend on the probability measure}.  Roughly speaking, the dissimilarity should be small in a region in which the probability density is high, and large in a region in which the probability density is low.  We can then take the following steps:
\begin{itemize}

\item To find a principal axis for the $\rho$ coordinate, we minimize the Riemannian distance, ${g}_{ij}({\bf x})$, along the drift vector.

\item  To choose the principal directions for the $\theta^{1}, \theta^{2}, \ldots, \theta^{k - 1}$ coordinates, we diagonalize the Riemannian matrix, $\left( \,{g}_{ij}({\bf x})\, \right)$, and we use the eigenvectors of this matrix to compute the $k-1$ ``smallest'' infinitesimal initial directions.

\item  To compute the coordinate curves, we follow the geodesics of the Riemannian metric, ${g}_{ij}({\bf x})$, in each of the $k-1$ principal directions.

\end{itemize}
Thus, overall, we are minimizing dissimilarity, and maximizing probability.  We will show how to do this, using concrete examples, in Sections \ref{Exp:Gauss} and \ref{Exp:CurvGauss} of this paper.
  
In the following section, we will see how to construct an integral manifold orthogonal to $\nabla U$, and how to define a dissimilarity metric, ${g}_{ij}({\bf x})$, with the desired properties.    Because of the prominent role played by the Riemannian dissimilarity metric in our theory, it is natural to describe it as a theory of \emph{differential similarity}.

%\vspace{1ex}
\section{Integral Manifolds in ${\bf R}^{3}$.}
\label{Exp:IntManifolds}
 
From this point on, for purposes of exposition, we will restrict our investigations  from ${\bf R}^{n}$ to ${\bf R}^{3}$.   We will see later (in Section \ref{FutureWork}) that this is not a limitation on the scope of the theory, since our results can easily be generalized again to ${\bf R}^{n}$.  Instead, the restriction to three dimensions simplifies our calculations, and makes them easier to visualize, as we will see in Section \ref{Mathematica}.
 
Since we are now working in three-dimensional Euclidean space, we are primarily interested in two-dimensional integral manifolds.   Is there a 
two-dimensional integral manifold orthogonal to the drift vector, $\nabla U$?   Consider, first, a more general case.  Suppose ${\bf G} = (P({\bf x}),Q({\bf x}),R({\bf x}))$ represents the coordinates of a vector field that is defined but not equal to $(0,0,0)$ in some open region $\mathcal{D} \subseteq {\bf R}^{3}$.
\begin{theorem}
\label{2DIntManifold}
There exists a two-dimensional integral manifold in $\mathcal{D}$ with tangent plane everywhere orthogonal to ${\bf G}$ if and only if 
\[
{\bf G} \cdot (\nabla \times  {\bf G}) = 0
\]
\end{theorem}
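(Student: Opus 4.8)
The plan is to recognize this statement as the codimension-one case of the Frobenius integrability theorem, translated from the language of distributions into the language of differential forms. Since ${\bf G}$ is nonvanishing on $\mathcal{D}$, the assignment ${\bf x} \mapsto \{\, {\bf v} : {\bf v}\cdot{\bf G}({\bf x}) = 0 \,\}$ is a smooth rank-two distribution on $\mathcal{D}$; call it $\Delta$. A two-dimensional integral manifold orthogonal to ${\bf G}$ is, by definition, a submanifold whose tangent plane at each point is $\Delta$. Identifying ${\bf G}$ with the $1$-form $\omega = P\,dx + Q\,dy + R\,dz$ (using the standard Euclidean coframe $dx, dy, dz$, which is orthonormal, so that the form dual to ${\bf G}$ really does have these components), we get $\Delta = \ker\omega$. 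So the theorem becomes: $\ker\omega$ admits integral manifolds through every point of $\mathcal{D}$ if and only if ${\bf G}\cdot(\nabla\times{\bf G}) = 0$ on $\mathcal{D}$.

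Next I would invoke the Frobenius theorem, as stated in the cited texts (\cite{spivakcomprehensive}, Ch.~6; \cite{bishop1968tensor}, Ch.~3; \cite{auslander1977introduction}, Ch.~8): a distribution is integrable precisely when it is involutive, i.e.\ closed under the Lie bracket. For the rank-two distribution $\ker\omega$ cut out by a single nonvanishing $1$-form I would then convert involutivity into a condition on $\omega$ via the identity $d\omega({\bf V}_1\partial,{\bf V}_2\partial) = {\bf V}_1\partial\bigl(\omega({\bf V}_2\partial)\bigr) - {\bf V}_2\partial\bigl(\omega({\bf V}_1\partial)\bigr) - \omega([{\bf V}_1\partial,{\bf V}_2\partial])$. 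Choosing ${\bf V}_1\partial, {\bf V}_2\partial$ to lie in $\ker\omega$ kills the first two terms, so $\omega([{\bf V}_1\partial,{\bf V}_2\partial]) = -\,d\omega({\bf V}_1\partial,{\bf V}_2\partial)$; hence $\ker\omega$ is involutive exactly when $d\omega$ vanishes on every pair of vectors drawn from $\ker\omega$, which for a codimension-one distribution is in turn equivalent to the pointwise algebraic condition $d\omega\wedge\omega = 0$.

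The remaining step is a short computation. With $\omega = P\,dx+Q\,dy+R\,dz$ one has $d\omega = (\partial_y R - \partial_z Q)\,dy\wedge dz + (\partial_z P - \partial_x R)\,dz\wedge dx + (\partial_x Q - \partial_y P)\,dx\wedge dy$, whose coefficients are precisely the components of $\nabla\times{\bf G}$; wedging with $\omega$ and collecting the $dx\wedge dy\wedge dz$ term gives $d\omega\wedge\omega = \bigl({\bf G}\cdot(\nabla\times{\bf G})\bigr)\,dx\wedge dy\wedge dz$. Since $dx\wedge dy\wedge dz$ is a nowhere-zero $3$-form on $\mathcal{D}$, this wedge product vanishes identically if and only if ${\bf G}\cdot(\nabla\times{\bf G}) = 0$, which closes the chain of equivalences. (An alternative route is to note that a surface orthogonal to ${\bf G}$ is a level set of some $u$ with $\nabla u$ proportional to ${\bf G}$, i.e.\ an integrating factor exists for the Pfaffian equation $\omega = 0$, whose classical integrability criterion is the same triple-product condition.)

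As for the main obstacle: the wedge-product calculation is entirely routine, so the real content is the chain "integral manifold exists $\Leftrightarrow$ $\ker\omega$ involutive $\Leftrightarrow$ $d\omega\wedge\omega = 0$." The first equivalence is Frobenius and may be assumed, but the second requires some care, and in particular the \emph{necessity} direction should be argued honestly: if a two-dimensional integral manifold passes through a point, then the Lie bracket of any two vector fields tangent to it stays tangent to it, so the condition is forced at every point a leaf meets, not merely sufficient for building one. I would also flag the local-versus-global distinction, but since the statement and all the intended applications in later sections are local, it suffices to work in a neighbourhood of each point of $\mathcal{D}$.
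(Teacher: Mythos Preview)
Your argument is correct and complete. The paper, however, does not supply a self-contained proof of this theorem at all: its ``proof'' consists solely of citations to \cite{bishop1964geometry}, \cite{cartan1971differential}, and \cite{lovelock1975tensors}. So your proposal goes well beyond what the paper does for this statement.

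It is worth noting a mild contrast in style with the paper's surrounding discussion. Immediately after the theorem, the paper works with the explicit tangent vector fields ${\bf V}\partial = f\,\partial_x + \partial_y$ and ${\bf W}\partial = g\,\partial_x + \partial_z$ (with $f = -Q/P$, $g = -R/P$), computes their Lie bracket directly, and observes that the bracket lies in the span of ${\bf V}\partial, {\bf W}\partial$ precisely when a certain first-order PDE in $f,g$ holds---which one then checks is implied by ${\bf G} = \nabla U$. That is the \emph{vector-field} formulation of Frobenius, carried out in coordinates. You instead take the dual \emph{differential-forms} route: identify the distribution with $\ker\omega$, reduce involutivity to $d\omega\wedge\omega = 0$, and read off ${\bf G}\cdot(\nabla\times{\bf G})$ from the top-degree coefficient. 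Your route is cleaner for proving the stated equivalence in full generality (arbitrary nonvanishing ${\bf G}$, both directions at once), while the paper's explicit-basis computation is tailored to what it actually needs downstream---a concrete pair of commuting frame fields for building the $(\theta,\phi)$ coordinates---and only verifies sufficiency in the gradient case. The two are, of course, standard translations of each other.
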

\begin{proof}
See \cite{bishop1964geometry}, Problem 29, p. 23; \cite{cartan1971differential}, pp. 97--98; \cite{lovelock1975tensors}, pp. 155--156.
\end{proof}
\noindent
Intuitively, this theorem states that ${\bf G}$ must be orthogonal to its own ``curl,'' a condition that is satisfied if ${\bf G}$ is proportional to the gradient of a scalar potential.  Thus, any 
${\bf G}$ in the form $N({\bf x}) \nabla U({\bf x})$ would work.
 
We still need a method to compute this integral manifold, however, and to define a curvilinear coordinate system on it.  One approach is to choose basis vectors for a two-dimensional subspace of the tangent space at ${\bf x}$ in the following form:
\begin{align}
\label{VWdef}
{\bf V} \partial \;&=\;  f({\bf x}) \frac{\partial}{\partial x}  \; +\; \frac{\partial}{\partial y} \\
{\bf W} \partial \;&=\; g({\bf x}) \frac{\partial}{\partial x} \; +\; \frac{\partial}{\partial z}  \notag
\end{align}
Now compute: ${\bf V} \times {\bf W} = (f,1,0) \times (g,0,1) = (1,-f,-g)$.  If ${\bf V} \times {\bf W}$ is proportional to ${\bf G}$, then ${\bf G}$ is orthogonal to the plane containing both ${\bf V}$ and ${\bf W}$, and conversely.  So we can set:
\begin{align}
{\bf V} \times {\bf W} \; &= \; \frac{1}{P({\bf x})} {\bf G} \; = \; \frac{1}{P({\bf x})} (P({\bf x}),Q({\bf x}),R({\bf x})) \notag \\ 
\; &= \; (1,-f({\bf x}),-g({\bf x})) \notag
\end{align}
and obtain the results $f({\bf x}) = - Q({\bf x}) / P({\bf x})$ and $g({\bf x}) = - R({\bf x}) / P({\bf x})$. If $G = \nabla U({\bf x})$, then Theorem \ref{2DIntManifold} applies.  In this case, the vector fields given by $P({\bf x}) {\bf V} = (-Q({\bf x}),P({\bf x}),0)$ and $P({\bf x}) {\bf W} = (-R({\bf x}),0,P({\bf x}))$ provide what we want, namely, a basis for the tangent plane to the two-dimensional integral manifold that is everywhere orthogonal to the drift vector, $\nabla U$. 
  
This construction can also be justified directly by the Theorem of Frobenius.  Geometrically, we interpret  ${\bf V} \partial$ and ${\bf W} \partial$ as the basis vectors for a \emph{tangent subbundle}, $E$, in ${\bf R}^{3}$.  (Historically, a tangent subbundle was called a ``distribution," but this term does not have the right connotations today.)  We compute the \emph{Lie bracket} of ${\bf V} \partial$ and ${\bf W} \partial$ as follows:
\begin{align}
\label{VWLieBracket}
\left[  {\bf V} \partial , {\bf W} \partial  \right] \; &= \;  {\bf V} \partial \circ {\bf W} \partial - {\bf W} \partial \circ {\bf V} \partial  \\
  \;&=\;  \left[ 
 \frac{\partial g}{\partial y} - \frac{\partial f}{\partial z} +  f(x,y,z) \frac{\partial g}{\partial x} - g(x,y,z) \frac{\partial f}{\partial x}  \notag
  \right] \frac{\partial}{\partial x} 
\end{align}
Now the geometric version of the Theorem of Frobenius asserts that, if $\left[  {\bf X} \partial , {\bf Y} \partial  \right]$ ``belongs to'' $E$ whenever ${\bf X} \partial$ ``belongs to'' $E$ and ${\bf Y} \partial$ ``belongs to'' $E$, for arbitrary ${\bf X}$ and ${\bf Y}$, then $E$ can be extended to a full integral manifold in ${\bf R}^{3}$.  But if ${\bf V} \partial$ and ${\bf W} \partial$ as defined by the equations in \eqref{VWdef} form a basis for $E$, then $\left[  {\bf V} \partial , {\bf W} \partial  \right]$ ``belongs to'' $E$ if and only if the bracketed expression on the right-hand side of \eqref{VWLieBracket} is identically zero.  This leads to the following classical statement of the Frobenius integrability conditions as a system of partial differential equations:
\begin{align}
 \frac{\partial g}{\partial y} + f(x,y,z) \frac{\partial g}{\partial x}  \; &= \; \frac{\partial f}{\partial z} + g(x,y,z) \frac{\partial f}{\partial x}   \notag
\end{align}
As a further check on Theorem \ref{2DIntManifold}, we can verify by a direct computation that the preceding equation holds for $f({\bf x})$ and $g({\bf x})$, as defined previously, when $G = \nabla U({\bf x})$.
 
To simplify the notation and the subsequent calculations, let us absorb the factor $P({\bf x})$ into the definition of the two tangential vector fields, and write:
\begin{align}
 \nabla U({\bf x}) \; &= \;  (P({\bf x}),\;Q({\bf x}),\;R({\bf x})) \notag \\
{\bf V} ({\bf x}) \; &= \;  (-Q({\bf x}),\;P({\bf x}),\;0) \notag \\
{\bf W} ({\bf x}) \; &= \;  (-R({\bf x}),\;0,\;P({\bf x})) \notag
\end{align}
In this form, it is easy to see that $\nabla U({\bf x})$ is orthogonal to both ${\bf V} ({\bf x})$ and ${\bf W} ({\bf x})$.  Note also that ${\bf V} ({\bf x})$ and ${\bf W} ({\bf x})$ are not orthogonal to each other, although the vector fields ${\bf V} \partial = {\bf V} ({\bf x}) / P({\bf x})$ and ${\bf W} \partial = {\bf W} ({\bf x}) / P({\bf x})$ commute, as we have seen, when viewed as differential operators.   Now one way to use these tangential vector fields is to compute a \emph{global} ($\rho, \theta, \phi$) coordinate system.  For example, we can compute the integral curves of the vector field ${\bf V} ({\bf x})$ and use these for a coordinate called $\theta$, and we can compute the integral curves of the vector field ${\bf W} ({\bf x})$ and use these for a coordinate called $\phi$.  Note that the $\theta$ coordinate curves will all lie in the global $xy$ plane, and the $\phi$ coordinate curves will all lie in the global $xz$ plane, if we take this approach.
 
But another approach is to use these vector fields to construct a \emph{local} coordinate system.  Any linear combination of ${\bf V} ({\bf x})$ and ${\bf W} ({\bf x})$ could be taken as one of the basis vectors for the tangent subbundle, and we can vary this linear combination as we move around the integral manifold.  To implement this idea, it is useful to define a \emph{Riemannian metric} on the integral manifold.  The most natural way to do this is to define a metric tensor on all of ${\bf R}^{3}$, using the inner products of  $\nabla U({\bf x})$, ${\bf V} ({\bf x})$ and ${\bf W} ({\bf x})$, in that order.  We thus define:
\begin{align}
&
 \begin{pmatrix}
    \\
   {g}_{ij}({\bf x}) \\
    \\
\end{pmatrix}
 \; = \;  \notag \\
 \notag \\
 &
\left(\begin{array}{ccc} P^{2}({\bf x}) + Q^{2}({\bf x}) + R^{2}({\bf x}) & 0 & 0 \\0 & P^{2}({\bf x}) + Q^{2}({\bf x})  & Q({\bf x}) R({\bf x}) \\0 & Q({\bf x}) R({\bf x}) & P^{2}({\bf x}) + R^{2}({\bf x}) \end{array}\right)
\notag
\end{align}
To remain consistent with the coordinate notation introduced in Section \ref{proto}, we let $i$ and $j$ range over $0$, $1$, $2$, and we stipulate that $u^{0} = \rho$, $u^{1} = \theta$, $u^{2} = \phi$.  Since $P({\bf x})$, $Q({\bf x})$, $R({\bf x})$, are the components of the drift vector,  $\nabla U({\bf x})$, and since the diffusion equation in which $\nabla U({\bf x})$ appears has an invariant probability density that is determined by the exponential of the potential function, $U({\bf x})$, it should be clear that ${g}_{ij}({\bf x})$ has at least some of the properties that we have been looking for.   We thus adopt this formula, provisionally, as the definition of our \emph{dissimilarity metric}. 
 
The matrix $ \left( {g}_{ij}({\bf x}) \right) $ is not diagonal, in general, but it can easily be diagonalized.  The eigenvectors are:
\begin{align}
\xi_{0} =\left(\begin{array}{c} 1 \\ 0 \\ 0\end{array}\right), \;
\xi_{1} =\left(\begin{array}{c} 0 \\ Q({\bf x}) \\ R({\bf x})  \end{array}\right), \;
\xi_{2} = \left(\begin{array}{c} 0 \\ - R({\bf x}) \\ Q({\bf x}) \end{array}\right), \;
 \notag 
\end{align}
and the corresponding eigenvalues are: $ \lambda_{0}({\bf x}) =  \lambda_{1}({\bf x}) = P^{2}({\bf x}) + Q^{2}({\bf x}) + R^{2}({\bf x})$ and $ \lambda_{2}({\bf x}) = P^{2}({\bf x})$.  This analysis leads to a spectral decomposition of $ \left( {g}_{ij}({\bf x}) \right) $ as:
\vspace{1ex}
\begin{align}
& \lambda_{0}({\bf x}) \left(\begin{array}{ccc} 1 & 0 & 0 \\0 & 0 & 0 \\0 & 0 & 0 \end{array}\right) \;+\; \lambda_{1}({\bf x}) \, \kappa({\bf x}) \left(\begin{array}{ccc} 0 & 0 & 0 \\0 & Q^{2}({\bf x}) & Q({\bf x}) R({\bf x})  \\0 & Q({\bf x}) R({\bf x})  & R^{2}({\bf x}) \end{array}\right)
 \notag \\
 \notag \\
& +\; \lambda_{2}({\bf x}) \, \kappa({\bf x}) \left(\begin{array}{ccc} 0 & 0 & 0 \\0 & R^{2}({\bf x})  & - Q({\bf x}) R({\bf x})  \\0 & - Q({\bf x}) R({\bf x}) & Q^{2}({\bf x}) \end{array}\right)  \notag 
\end{align}
\vspace{1ex}

\noindent
where $ \kappa({\bf x}) = 1/( Q^{2}({\bf x}) + R^{2}({\bf x}) )$.  Obviously, the first term in this expression corresponds to the $\rho$ coordinate.  The second and third terms can be rearranged, as follows: 
%\vspace{1ex}
\begin{align}
 \left(\begin{array}{ccc} 0 & 0 & 0 \\0 & Q^{2}({\bf x}) & Q({\bf x}) R({\bf x})  \\0 & Q({\bf x}) R({\bf x})  & R^{2}({\bf x}) \end{array}\right)
 \;+\;  \lambda_{2}({\bf x}) \left(\begin{array}{ccc} 0 & 0 & 0 \\0 & 1 & 0 \\0 & 0 & 1 \end{array}\right)  \notag 
\end{align}
which is just another way of decomposing $ \left( {g}_{ij}({\bf x}) \right) $.  It is important to keep in mind the fact that  $\xi_{1}$ and  $\xi_{2}$ are represented here by their coefficients with respect to the basis vectors ${\bf V} ({\bf x})$ and ${\bf W} ({\bf x})$, which is not an orthogonal coordinate system, in general.  But a simple calculation shows that
\begin{align}
( \; Q({\bf x})  {\bf V} ({\bf x}) +  R({\bf x})  {\bf W} ({\bf x}) \; )  \cdot  ( \; - R({\bf x})  {\bf V} ({\bf x}) +  Q({\bf x})  {\bf W} ({\bf x}) \; ) \; = \; 0
\notag 
\end{align}
Thus $\xi_{1}$ and $\xi_{2}$ are orthogonal to each other in ${\bf R}^{3}$, even if ${\bf V} ({\bf x})$ and ${\bf W} ({\bf x})$ are not. 
  
The main application of our dissimilarity metric, however, is to compute \emph{geodesics} on the surface of the integral manifold orthogonal to $\nabla U({\bf x})$.  Recall that any linear combination of ${\bf V} ({\bf x})$ and ${\bf W} ({\bf x})$ yields a vector in the tangent subbundle, $E$, and thus we can construct vector fields in $E$ in the form $v(t) {\bf V} ({\bf x}) + w(t) {\bf W} ({\bf x})$ for arbitrary functions $v(t)$ and $w(t)$.  For a geodesic, we are looking for a curve $\gamma(t)$ with values in ${\bf R}^{3}$ which minimizes the ``energy'' functional: 
\begin{equation}
\label{EnergyFunctional}
\frac{1}{2}  \int_{0}^{T} \left(\begin{array}{ccc}  v(t) & w(t) \end{array}\right)  
\left(\begin{array}{cc} {g}_{11}(\gamma(t)) & {g}_{12}(\gamma(t)) \\ {g}_{21}(\gamma(t)) & {g}_{22}(\gamma(t)) \end{array}\right)
\left(\begin{array}{c}  v(t) \\ w(t) \end{array}\right) dt
\end{equation}
subject to the constraint:
\begin{equation}
\label{GammaConstraint}
\gamma \, '(t)  \;=\;  v(t) {\bf V}( \gamma(t) ) +  w(t) {\bf W}( \gamma(t) ) 
\end{equation}
This variational problem leads to a system of Euler-Lagrange equations for the curves $\gamma(t) = (x(t),y(t),z(t))$ and $(v(t),w(t))$, plus three Lagrange multipliers.  For initial conditions, we specify $(x(0),y(0),z(0))$ and we use $\xi_{1}$ and  $\xi_{2}$, the eigenvectors of  $ {g}_{ij}(x(0),y(0),z(0)) $, to help us determine the initial values $(v(0),w(0))$.  This is a complicated system of equations, but it can be solved numerically in \emph{Mathematica}. 
 
The preceding analysis was based on a global coordinate system centered on the $x$ axis, since our initial vector fields were determined by the equations $P({\bf x}) {\bf V} = (-Q({\bf x}),P({\bf x}),0)$ and $P({\bf x}) {\bf W} = (-R({\bf x}),0,P({\bf x}))$.  But we could also work with a coordinate system centered on the $y$ axis, using the equations $Q({\bf x}) {\bf V} = (Q({\bf x}),-P({\bf x}),0)$ and $Q({\bf x}) {\bf W} = (0,-R({\bf x}),Q({\bf x}))$, or the $z$ axis, using $R({\bf x}) {\bf V} = (R({\bf x}),0,-P({\bf x}))$ and $R({\bf x}) {\bf W} = (0,R({\bf x}),-Q({\bf x}))$.   In fact, it is useful to be able to switch from one such coordinate system to another, as we move around the integral manifold.  Since ${g}_{00}({\bf x}) = P^{2}({\bf x}) + Q^{2}({\bf x}) + R^{2}({\bf x}) = \| \nabla U({\bf x}) \| ^{2} $, it is obvious that the $\rho$ coordinate is independent of the global coordinate system used to define it.  But the same is true of ${g}_{ij}({\bf x})$ when $ i \neq 0 $ and $ j \neq 0 $.  To see this, let  $u^{1}$ and $u^{2}$ denote the $\Theta$ coordinates centered on the $x$ axis, and 
let  $\bar{u}^{1}$ and $\bar{u}^{2}$ denote the $\Theta$ coordinates centered on the $y$ axis.  The Jacobian matrix of the coordinate transformation from $\bar{u}^{k}$ to $u^{i}$ can be computed as follows:
\begin{align}
& \left(\begin{array}{c} \;\\ \partial u^{i} / \partial \bar{u}^{k} \\ \; \end{array}\right) \notag \\
& = \left(\begin{array}{ccc}\partial x / \partial \rho & -Q({\bf x}) & -R({\bf x}) \\\partial y / \partial \rho & P({\bf x}) & 0 \\\partial z / \partial \rho & 0 & P({\bf x})\end{array}\right) ^{-1} \left(\begin{array}{ccc}\partial x / \partial \rho & Q({\bf x}) & 0  \\\partial y / \partial \rho & -P({\bf x}) & -R({\bf x})  \\\partial z / \partial \rho & 0 & Q({\bf x}) \end{array}\right) \notag \\[1ex]
& = \left(\begin{array}{ccc}1 & 0 & 0 \\0 & -1 & -R({\bf x}) / P({\bf x})  \\0  & 0 & Q({\bf x}) / P({\bf x})\end{array}\right) \notag
\end{align}
Now let ${g}_{ij}({\bf x})$ and ${\bar{g}}_{kl}({\bf x})$ denote the dissimilarity metric based on the $u^{i}$ and $\bar{u}^{k}$ coordinates, respectively.  Restricting our attention to the $2 \times 2$ matrix for the $\Theta$ coordinates, we compute:
\begin{align}
& \left(\begin{array}{cc} {\bar{g}}_{11}({\bf x})  & {\bar{g}}_{12}({\bf x}) \\ {\bar{g}}_{21}({\bf x}) & {\bar{g}}_{22}({\bf x}) \end{array}\right) \notag \\[1ex]
& = \left(\begin{array}{cc}  P^{2}({\bf x}) + Q^{2}({\bf x}) & P({\bf x}) R({\bf x}) \\ P({\bf x}) R({\bf x}) & Q^{2}({\bf x}) + R^{2}({\bf x})  \end{array}\right) \notag \\[1ex]
& = \left(\begin{array}{cc} -1 & -R({\bf x}) / P({\bf x}) \\ 0 & Q({\bf x}) / P({\bf x}) \end{array}\right) ^{T}
 \left(\begin{array}{cc} {g}_{11}({\bf x})  & {g}_{12}({\bf x}) \\ {g}_{21}({\bf x}) & {g}_{22}({\bf x}) \end{array}\right) 
\left(\begin{array}{cc} -1 & -R({\bf x}) / P({\bf x}) \\ 0 & Q({\bf x}) / P({\bf x}) \end{array}\right) \notag
\end{align}
But this is just an instantiation of the transformation law for a type (0,2) tensor:
\begin{align}
{\bar{g}}_{kl}({\bf x}) \;=\;  \sum_{i,j=1}^{2} \frac{\partial u^{i}}{\partial \bar{u}^{k}} \; {g}_{ij}({\bf x}) \; \frac{\partial u^{j}}{\partial \bar{u}^{l}}  \notag
\end{align} 
The same calculations obviously lead to the same results for all pairwise transformations among the three global coordinate systems.  Thus, on a two-dimensional integral manifold, for a fixed $\rho$, the dissimilarity metric, ${g}_{ij}({\bf x})$, is independent of the global coordinate system used to define it.

\section{Experiments with \emph{Mathematica}.}
 \label{Mathematica}
  
To sharpen our intuitions, and before developing the theory of differential similarity any further, let's look at some experiments in ${\bf R}^{3}$ using the computational and graphical facilities of \emph{Mathematica}.   Section \ref{Exp:Gauss} is a comprehensive study of the Gaussian case, which is the one example that can be solved analytically.  Section \ref{Exp:CurvGauss} then considers what we will refer to as the ``curvilinear Gaussian" case.  Here, we apply a quadratic potential function to the output of a cubic polynomial coordinate transformation, producing an example that cannot be solved analytically, but which still retains some degree of tractability.  Finally, in Section \ref{Exp:BiCurvGauss}, we put two ``curvilinear Gaussians'' together in a mixture distribution.   
 
The source code for these examples is available in three \emph{Mathematica} notebooks:
\vspace{1ex}

 {\tt Gaussian.nb} 
 
 {\tt CurvilinearGaussian.nb}
 
 {\tt BimodalCurvilinearGaussian.nb}

\subsection{The Gaussian Case.}
\label{Exp:Gauss}

Consider, first, the case of a quadratic potential, for which most results can be obtained analytically in closed form.  Define $U({\bf x})$ as follows:
\[
U(x,y,z) \;=\;  - \frac{1}{2}( a x^2 + b y^2 + c z^2)
\]
Then the gradient is: $\nabla U(x,y,z) = (- a x, - b y, - c z)$, and the derived potential $V({\bf x})$ is:
\[
V(x,y,z) \;=\;  \frac{1}{2}( a^2 x^2 + b^2 y^2 + c^2 z^2) - \frac{1}{2}(a + b + c)
\]
(We can ignore the constant term.)  It is well known that the Feynman-Kac formula, given by either  \eqref{defPt} or \eqref{defQt}, has a closed-form solution whenever $U({\bf x})$ and $V({\bf x})$ are quadratic polynomials. Furthermore, the invariant probability measure, $ e^{\,2\,U({\bf x})} $, given by Theorem  \ref{QtfProcess}, is obviously a Gaussian.  Computing the normalization factor and assuming that $a > 0$, $b > 0$, $c > 0$, the invariant probability density function is:
\begin{equation}
\label{gaussianpdf}
\sqrt{a b c} \, \pi^{- \frac{3}{2}} \,  \exp{\left[ - ( a x^2 + b y^2 + c z^2) \right]} 
\end{equation}
Note that the covariance matrix in \eqref{gaussianpdf} is already in diagonalized form.

\begin{figure}[htbp]
\begin{center}
\includegraphics[width=3.5in]{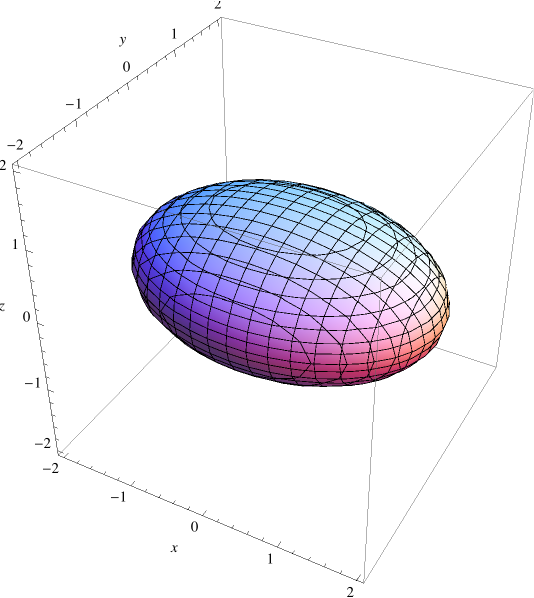}
\caption{Contour plot for the surface of a quadratic potential at $U(x,y,z) = -2$.}
\label{Gaussian}
\end{center}
\end{figure}
  
For a numerical example, set $a = 1, b = 2, c = 4$.  Figure \ref{Gaussian} then shows the surface defined by the equation $U(x,y,z) = -2$.
Figure \ref{StreamPlotGaussian} shows a {\tt StreamPlot} of the gradient vector field generated by $\nabla U(x,y,z)$ at $z=0$.  This picture makes sense, intuitively. Notice that the drift vector is ``transporting probability mass towards the origin,'' to counteract the dissipative effects of the diffusion term in the stochastic process.  If the system is in perfect balance, of course, we have an invariant probability measure, which in this case is a Gaussian.  
 
\begin{figure}[htbp]
\begin{center}
\includegraphics[width=3.5in]{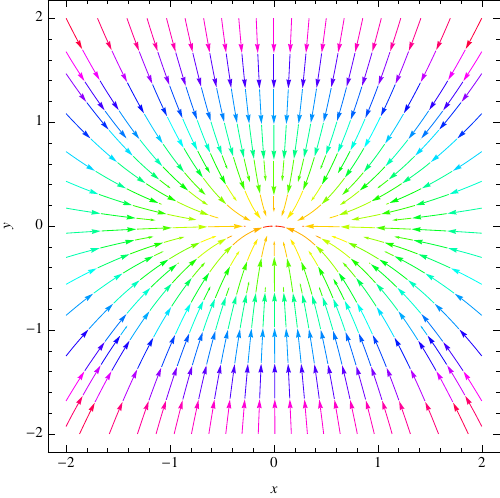}
\caption{The gradient vector field at $z = 0$ for the quadratic potential in Figure 1.}
\label{StreamPlotGaussian}
\end{center}
\end{figure}
 
The Gaussian case is simple enough that we can solve the differential equations explicitly in  \emph{Mathematica}, using {\tt DSolve}.  
First, the integral curve of the vector field $\nabla U({\bf x}) =  (P({\bf x}),Q({\bf x}),R({\bf x}))$ starting at ${\bf x}_{0} = (x_{0},y_{0},z_{0})$ is given by:
\begin{equation}
\label{intcurve.rho}
\hat{\rho}(t) \;=\;
\begin{pmatrix}
& x_{0} \; e^{-a t} & \\
& y_{0} \; e^{-b t} \\
& z_{0} \; e^{-c t}
\end{pmatrix}  \notag
\end{equation}
For the tangential vector fields, we will start with a global coordinate system centered on the $x$ axis, so that ${\bf V}({\bf x}) = (-Q({\bf x}),P({\bf x}),0) = ( b y, - a x, 0 ) $ and ${\bf W}({\bf x})  = (-R({\bf x}),0,P({\bf x})) = ( c z, 0, - a x)$.   Then the integral curve of the vector field ${\bf V}({\bf x})$ starting at ${\bf x}_{1} = (x_{1},y_{1},z_{1})$ is given by:
\begin{equation}
\label{intcurve.theta}
\hat{\theta}(t) \;=\;
\begin{pmatrix}
& x_{1} \; \cos{\sqrt {a b} \; t } \;+\; y_{1} \; \sqrt {b/a} \; \sin{\sqrt {a b} \; t} & \\
& y_{1} \; \cos{\sqrt {a b} \; t } \;-\; x_{1} \; \sqrt {a/b} \; \sin{\sqrt {a b} \; t} \\
& z_{1} 
\end{pmatrix} \notag
\end{equation}
and the integral curve of ${\bf W}({\bf x})$  starting at ${\bf x}_{2} = (x_{2},y_{2},z_{2})$ is given by:
\begin{equation}
\label{intcurve.phi}
\hat{\phi}(t) \;=\;
\begin{pmatrix}
& x_{2} \; \cos{\sqrt {a c} \; t } \;+\; z_{2} \; \sqrt {c/a} \; \sin{\sqrt {a c} \; t} & \\
& y_{2}\\
& z_{2} \; \cos{\sqrt {a c} \; t } \;-\; x_{2} \; \sqrt {a/c} \; \sin{\sqrt {a c} \; t} 
\end{pmatrix}  \notag
\end{equation}
Figure \ref{GaussianManifold} shows the global coordinate system on a two-dimensional integral manifold that would be generated by these curves.   Note that the $\theta$ coordinate curves lie in the $xy$ plane, and the $\phi$ coordinate curves lie in the $xz$ plane, as expected.
  
\begin{figure}[htbp]
\begin{center}
\includegraphics[width=3.5in]{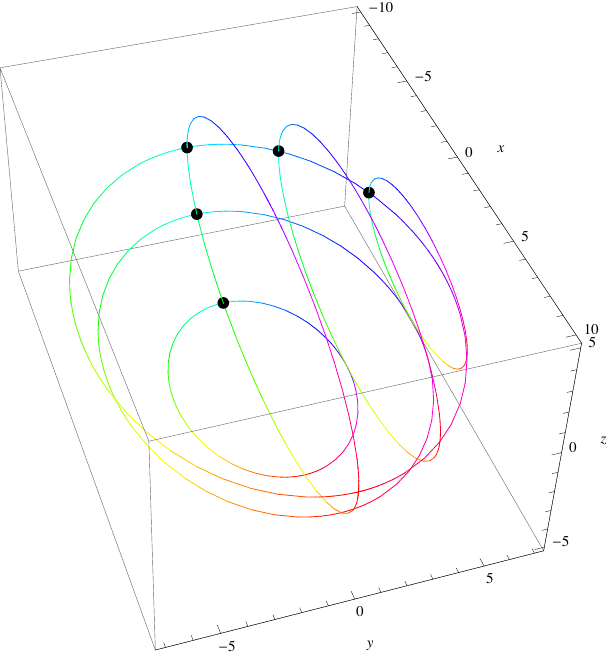}
\caption{An integral manifold with a global coordinate system for the quadratic potential in Figure 1.}
\label{GaussianManifold}
\end{center}
\end{figure}
 
Given the curves $\hat{\rho}(t)$, $\hat{\theta}(t)$ and $\hat{\phi}(t)$, what does it mean to say that a point in ${\bf R}^{3}$ has the coordinates $(\rho, \theta, \phi)$?  We adopt the following conventions:  Starting with the $x$ axis as the \emph{principal axis}, choose a ``maximal'' point ${\bf x}_{0} = (x_{0},0,0)$ and follow the curve $\hat{\rho}(t)$ towards the origin.  There are two natural measures of distance along this curve: the Euclidean arc length, which in this case is just the value of the $x$-coordinate, and the Riemannian arc length, which is determined by our dissimilarity metric, ${g}_{ij}({\bf x})$.  Our choice here is to use the Euclidean arc length to specify the $\rho$ coordinate.  (We will subsequently see another role for the Riemannian arc length.)  In the Gaussian case, therefore, $\rho$ has the value $x_{0} \; e^{-a t}$, which ranges over the interval $( 0 , x_{0}]$ as $t$ ranges from $\infty$ to $0$.  But by choosing a value for $\rho$, we are also choosing the integral manifold on which  $\hat{\theta}(t)$ and $\hat{\phi}(t)$ are defined.  Therefore, to interpret the coordinates  $\theta$ and $\phi$, starting at $(\rho,0,0)$, we traverse the distance $\theta$ along the $\hat{\theta}(t)$ curve from the slice $\theta = 0$, and we traverse the distance $\phi$ along the $\hat{\phi}(t)$ curve from the slice $\phi = 0$, until we arrive at the point $(\rho, \theta, \phi)$.  Note, too, that we can traverse the $\hat{\theta}(t)$ and $\hat{\phi}(t)$ curves in either order, as long as we remain within a neighborhood of $(\rho,0,0)$ in which these curves intersect.  The black dots in Figure \ref{GaussianManifold} may be helpful in visualizing this procedure.
 
Once again, the Gaussian case is simple enough that we can analyze the coordinate transformation from $(\rho, \theta, \phi)$ to $(x,y,z)$,  and derive an explicit expression for its Jacobian matrix.  First, let $\vec{\theta}_{s}({\bf x}) = \hat{\theta}_{{\bf x}}(s)$ denote the \emph{flow} of the vector field ${\bf V}({\bf x})$ starting at ${\bf x}$, and similarly let $\vec{\phi}_{t}({\bf x}) = \hat{\phi}_{{\bf x}}(t)$ denote the \emph{flow} of the vector field ${\bf W}({\bf x})$ starting at ${\bf x}$.  Applying the composition, $ \vec{\theta}_{s} \circ \vec{\phi}_{t} $,  of the flows  $ \vec{\theta}_{s}$ and $\vec{\phi}_{t} $ to the point ${\bf x} = (\rho,0,0)$, we obtain the following equations, for arbitrary $s$ and $t$:
\begin{align} 
\label{coordexpr}
x \;=\; \vec{x}(\rho, s, t) \;=\;  & {\rho} \; \cos{\sqrt {a b} \,s} \;  \cos{\sqrt {a c} \,t } \\
y \;=\; \vec{y}(\rho, s, t) \;=\; & - {\rho} \; \sqrt {a/b} \; \sin{\sqrt {a b} \,s}\;  \cos{\sqrt {a c} \,t }  \notag \\
z \;=\; \vec{z}(\rho, s, t) \;=\; & - {\rho} \; \sqrt {a/c}\; \sin{\sqrt {a c} \,t} \notag
\end{align}
By a simple calculation:
\begin{equation}
\frac{\partial}{\partial s} \left(\begin{array}{c} \vec{x}(\rho, s,t) \\  \vec{y}(\rho, s,t)\\ \vec{z}(\rho, s,t) \end{array}\right)
\; = \; \left(\begin{array}{c} b \;  \vec{y}(\rho, s, t)  \\ - a \; \vec{x}(\rho, s, t) \\ 0 \end{array}\right) \notag
\end{equation}
In other words, $\partial / \partial s = {\bf V}({\bf x}) = P({\bf x}) {\bf V} \partial $.  By another simple calculation, setting $ s = 0 $ in \eqref{coordexpr}, we have:
\begin{equation}
\frac{\partial}{\partial t} \left(\begin{array}{c} \vec{x}(\rho, 0,t) \\ \vec{y}(\rho, 0, t) \\ \vec{z}(\rho, 0, t) \end{array}\right)
=  \left(\begin{array}{c} - {\rho} \; \sqrt {a c}\; \sin{\sqrt {a c} \, t} \\0 \\  - a \; {\rho} \; \cos{\sqrt {a c} \, t} \end{array}\right) 
=  \left(\begin{array}{c}  c \;  \vec{z}(\rho, 0, t)  \\0 \\ - a \; \vec{x}(\rho, 0, t) \end{array}\right) \notag
\end{equation}
In other words, $\partial / \partial t = {\bf W}({\bf x}) = P({\bf x}) {\bf W} \partial $ when $ s = 0 $.   We obtain a similar result if we reverse the composition of the flows $ \vec{\theta}_{s}$ and $\vec{\phi}_{t} $, and apply  $  \vec{\phi}_{t} \circ \vec{\theta}_{s} $ to the point ${\bf x} = (\rho,0,0)$.  In this case, we can compute $\partial / \partial t = {\bf W}({\bf x}) = P({\bf x}) {\bf W} \partial $ for all $s$ and $t$, and $\partial / \partial s = {\bf V}({\bf x}) = P({\bf x}) {\bf V} \partial $ for $t = 0$.  
 
Now consider the coordinate transformation itself.  Applying the composition $ \vec{\theta}_{s} \circ \vec{\phi}_{t} $ to the point ${\bf x} =(\rho,0,0)$, we follow the $ \vec{\phi}_{t} $ curve with $ s = 0 $ until we reach the point at which $ t = \phi $, then follow the $ \vec{\theta}_{s} $ curve until we reach the point at which $ s = \theta $.   Or, applying the composition $  \vec{\phi}_{t} \circ \vec{\theta}_{s} $ to the point ${\bf x} =(\rho,0,0)$, we follow the $ \vec{\theta}_{s} $ curve with $ t = 0 $ until we reach the point at which $ s = \theta $, then follow the $ \vec{\phi}_{t} $ curve until we reach the point at which $ t = \phi $.  In either case, we can see from the equations above that the Jacobian matrix of  $(x,y,z) = (x(\rho, \theta, \phi),y(\rho, \theta, \phi),z(\rho, \theta, \phi)) $ can be written explicitly as:
\begin{equation}
\label{Jgauss}
\mathbf{ J }(\rho, \theta, \phi) \; = \; 
\left(\begin{array}{ccc} x(\rho, \theta,\phi) / \rho & b \;  y(\rho, \theta, \phi)  & c \;  z(\rho, \theta, \phi)  
\\ y(\rho, \theta,\phi) / \rho & - a \; x(\rho, \theta, \phi) & 0 
\\ z(\rho, \theta,\phi) / \rho  & 0 & - a \; x(\rho, \theta, \phi) \end{array}\right) 
\end{equation}
This is all we need to carry out the calculations described in Section \ref{proto}, including the calculation of the coefficients $ \alpha^{ij} (\rho, \theta, \phi)$ and $ \beta^{i}(\rho, \theta, \phi)$ in Equation \eqref{DiffusionCoeffs}. We will analyze these results further in Section \ref{DiffuCoeffs&DissimMetrics}. 

\begin{figure}[htbp]
\begin{center}
\includegraphics[width=3.5in]{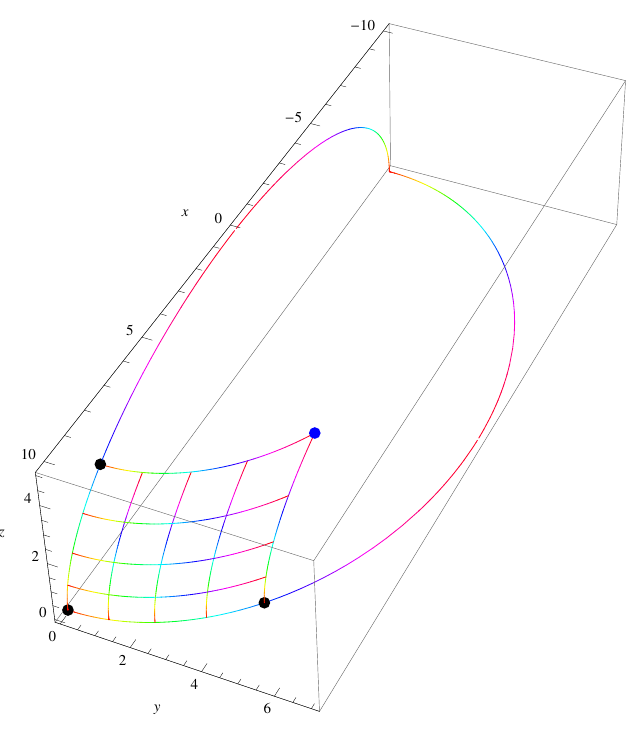}
\caption{A coordinate system for the quadratic potential  in Figure 1, based on commutative flows.}
\label{CommutativeCoords}
\end{center}
\end{figure}

However, as defined above, the flows $ \vec{\theta}_{s} $ and $ \vec{\phi}_{t} $ do not commute, i.e., $ \vec{\theta}_{s} \circ \vec{\phi}_{t} \neq \vec{\phi}_{t} \circ \vec{\theta}_{s} $.  If we wanted to work with commutative flows, we could divide out the \emph{scale factor}, $ P({\bf x}) $, and use $  {\bf V} \partial = {\bf V}({\bf x}) / P({\bf x}) $ and $ {\bf W} \partial =  {\bf W}({\bf x}) / P({\bf x}) $ as the basis vectors of our tangent subbundle, $E$.    In this case, $  {\bf V} \partial \circ  {\bf W} \partial = {\bf W} \partial \circ  {\bf V} \partial $, as we have seen, and it follows that $ \vec{\theta}_{s} \circ \vec{\phi}_{t} = \vec{\phi}_{t} \circ \vec{\theta}_{s} $.  See, e.g.,  \cite{spivakcomprehensive}, Lemma 5.13; \cite{bishop1968tensor}, Theorem 3.7.1;  \cite{bishop1964geometry}, Theorem 1.5. A coordinate system for the Gaussian case based on commutative flows is illustrated in Figure \ref{CommutativeCoords} , where the coordinates for the blue dot are $\theta = \pi \sqrt{2}$ and $\phi = \pi $, computed in either order.  We can even write out a closed-form solution for the composition of the flows in this case:
\begin{align} 
\vec{\theta}_{s} \circ \vec{\phi}_{t} \; (x,y,z) & \;=\;   \left( \sqrt{\frac{a x^{2} - b s (2 y + s) - c t (2 z + t)}{a}}, y + s, z + t \right)  \notag \\
& \;=\;  \vec{\phi}_{t} \circ \vec{\theta}_{s} \; (x,y,z)  \notag
\end{align}
Unfortunately, there are serious disadvantages in using ${\bf V} \partial$ and ${\bf W} \partial$ as basis vectors in this way, especially when we try to extend these results to the Riemannian dissimilarity metric and to the solution of the Euler-Lagrange equations for a geodesic.  The cost of computing the commutative flows is high, and the coordinate patch that they cover tends to be very small.  The better approach is to use ${\bf V}({\bf x})$ and ${\bf W}({\bf x})$ as the basis vectors, and to compute the coordinate maps in a fixed order, as we did in the previous paragraph.  Since our ultimate goal is to find the ``best'' lower-dimensional coordinate system, it is natural to be computing coordinates in the ``best'' possible order.  We will see how this works in the numerical calculations that follow.

\begin{figure}[htbp]
\begin{center}
\includegraphics[width=4.0in]{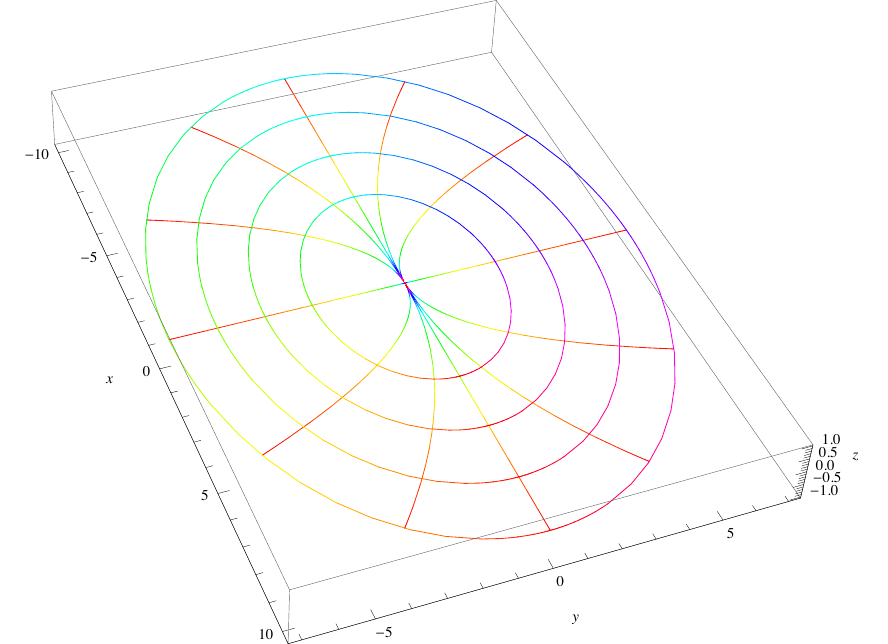}
\caption{A coordinate system for the $\rho,\theta$ surface of the quadratic potential in Figure 1.}
\label{Gaussian.frb.rhotheta}
\end{center}
\end{figure}

We have referred to the $x$ axis in Figure \ref{GaussianManifold} as the ``principal axis'' because of its correspondence to the results of Principal Component Analysis (PCA) in traditional linear statistics \cite{Pearson1901}.  For the Gaussian probability density given by \eqref{gaussianpdf}, with $a = 1, b = 2, c = 4$, the first component identified by PCA would be the $x$ axis, and the second component would be the $y$ axis.  Thus the ``principal surface'' would be defined by the $xy$ plane, which corresponds to the $( \rho, \theta )$ surface in our curvilinear coordinate system.  Figure \ref{Gaussian.frb.rhotheta} depicts this surface, with the $\rho$ and $\theta$ coordinates illustrated.  The maximal point on the principal axis is $(10,0,0)$, and the $\theta$ coordinate curves have been evenly spaced along the $\rho$ coordinate curve from $(10,0,0)$ to $(0,0,0)$.  Similarly, the $\rho$ coordinate curves have been evenly spaced along the maximal $\theta$ coordinate curve, which passes through the point $(10,0,0)$.  

However, although the $( \rho, \theta )$ surface in Figure \ref{Gaussian.frb.rhotheta} coincides with the $xy$ plane in this case, the complete PCA solution will not coincide, in general, with the solution that we are looking for in a curvilinear coordinate system. Principal Components Analysis projects data onto a linear subspace, and it seeks to maximize the \emph{variance} of the projected points, or to minimize the \emph{reconstruction error} resulting from the projection. These two objectives are equivalent in a linear system. In a curvilinear coordinate system, however, there are several possible definitions of the ``variance'' \cite{Pennec2006} and there are several ways to define the ``projection'' and the ``reconstruction error.'' We will examine these choices, below, as we continue our analysis of the simple Gaussian case. A related concept in linear statistics is Mahalanobis distance \cite{Mahalanobis1936}, which scales Euclidean distance in the sample space by the inverse of the covariance matrix. In fact, the first principal axis in the PCA solution (i.e., the direction that maximizes the variance) is also the direction that minimizes the Mahalanobis distance.  We will see that a similar principle applies in our curvilinear coordinate system, in which we seek to minimize the Riemannian dissimilarity metric.

There is another comparison (and another contrast) with Principal Components Analysis in our use of eigenvalues and eigenvectors.  The PCA solution is usually computed by diagonalizing the covariance matrix, and choosing as the principal components the eigenvectors associated with the maximal eigenvalues.  As we have seen in Section \ref{Exp:IntManifolds}, it is straightforward to diagonalize the Riemannian dissimilarity matrix $ \left( {g}_{ij}({\bf x}) \right) $.  This will give us the maximal and minimal  \emph{infinitesimal} directions for the integrand of the energy functional in \eqref{EnergyFunctional}.  However, the infinitesimal eigenvectors computed in this way are not quite what we want for the solution of the Euler-Lagrange equations, for two reasons.  First, minimizing the initial directions in the Euler-Lagrange equations cannot guarantee that we are also minimizing the geodesic curves over a finite distance, and it is this latter condition that we are primarily interested in. Second, it turns out that the eigenvectors of $ \left( {g}_{ij}({\bf x}) \right) $ are not tensor invariants, but depend on the coordinate system in which they are computed.  Nevertheless, diagonalizing the matrix $ \left( {g}_{ij}({\bf x}) \right) $ is a good start:  We can \emph{rotate} this solution to maximize or minimize the geodesic curves over a finite distance, and the solution to this global optimization problem is then guaranteed to be a tensor invariant.  

\begin{figure}[htbp]
\begin{center}
\includegraphics[width=5.0in]{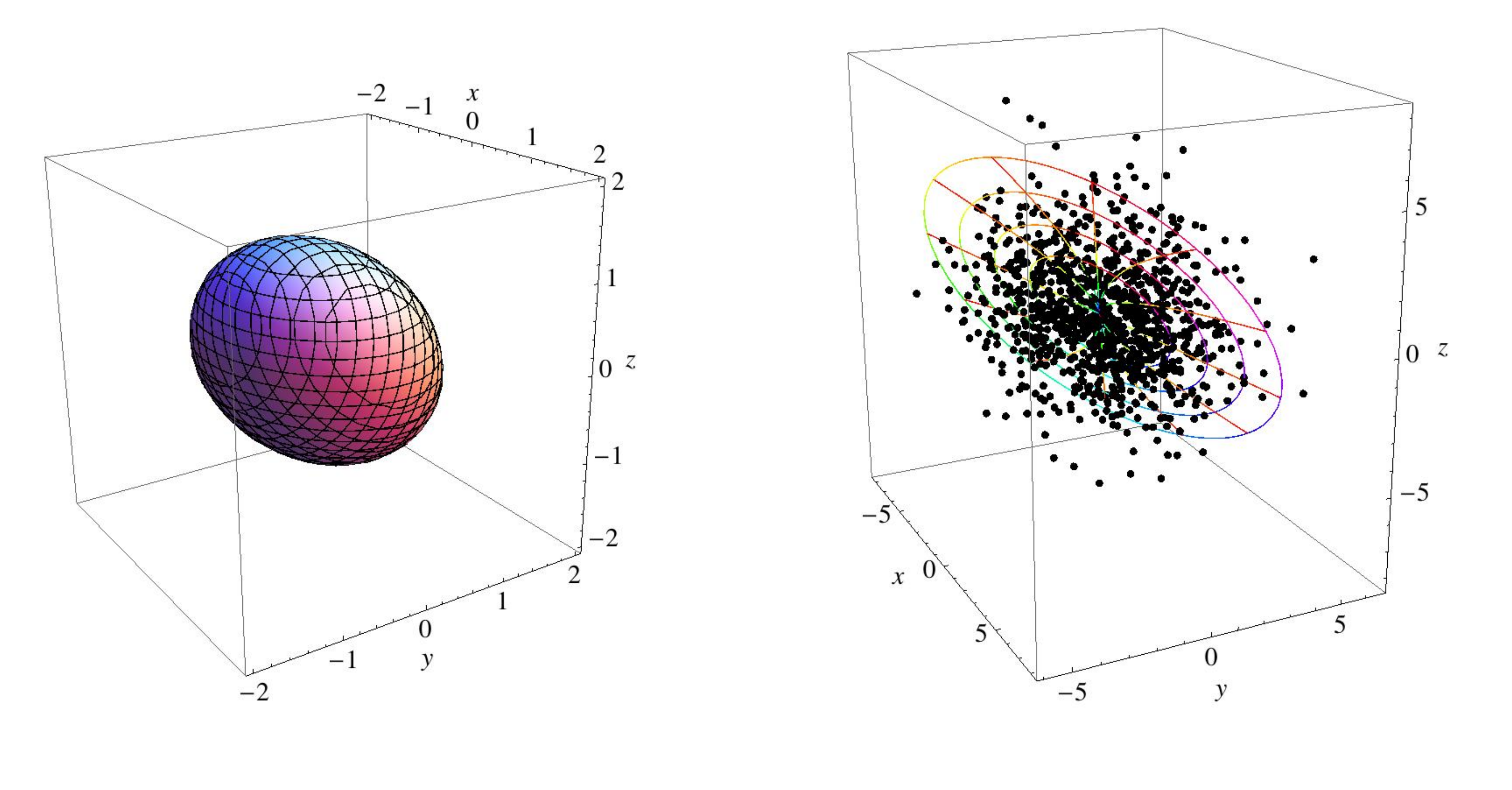}
\caption{A rotated quadratic potential: (a) the contour plot at $U(x,y,z) = -2$; (b) the rotated ($\rho,\theta$) surface from Figure \ref{Gaussian.frb.rhotheta}, superimposed on a scatter plot of sample data.}
\label{GaussianRotated60}
\end{center}
\end{figure}
 
To study these issues in more detail, let us now consider a variant of the simple Gaussian case. 
Figure \ref{GaussianRotated60}(a) shows an example in which the potential function depicted in Figure \ref{Gaussian} has been rotated through the angle $ \pi / 3 $ around the line from $(0,0,0)$ to $(1,1,1)$.  Under this rotation, the maximal point on the principal axis, $(10,0,0)$, would be displaced to the position $( 20/3, -10/3, 20/3 )$.  If our basis vectors, ${\bf V} ({\bf x})$ and ${\bf W} ({\bf x})$, were also rotated in the same way, we could still compute closed-form solutions to the differential equations, using {\tt DSolve}, and this procedure would still give us explicit expressions for the functions $\hat{\rho}(t)$, $\hat{\theta}(t)$ and $\hat{\phi}(t)$, although these expressions would be more complex than they were previously.  Continuing in this way, as before, we would eventually produce the $( \rho, \theta )$ surface shown in Figure \ref{Gaussian.frb.rhotheta}, but rotated through the angle $ \pi / 3 $ around the line from $(0,0,0)$ to $(1,1,1)$. This surface is depicted in Figure  \ref{GaussianRotated60}(b).  However, rather than repeating the same analytical calculations in a rotated coordinate system, which is not very interesting, what would happen if we treated the quadratic potential in Figure \ref{GaussianRotated60}(a) on its own terms, in the original $xyz$ coordinates?  If we did not know the rotation, \emph{a priori}, could we still compute an ``optimal'' curvilinear coordinate system, using just the Riemannian dissimilarity metric and the Euler-Lagrange equations?
 
Figure  \ref{GaussianRotated60}(b) also includes a scatter plot of sample data, 1000 points in all.  These data points have been generated according to the probability density function in \eqref{gaussianpdf}, scaled up by a factor of 20 and rotated to match the $( \rho, \theta )$ surface.  Thus the variance of the sample data along the $x$ axis is 10.0, which means that the point $(10,0,0)$ is located slightly more than 3 standard deviations from the origin. We are interested in seeing how these points are mapped in our ``optimal'' curvilinear coordinate system.

Since we are not just computing integral curves now, but are trying to minimize the Riemannian dissimilarity metric and solve the Euler-Lagrange equations, we cannot expect to find closed-form solutions in \emph{Mathematica}, using symbolic methods such as {\tt DSolve}.  Instead, we will rely on numerical methods, such as  {\tt NDSolve}.  Our plan is to follow the three steps outlined at the end of Section \ref{proto}: (1) Find a principal axis for the $\rho$ coordinate; (2) Determine the principal directions for the $\Theta$ coordinates; (3) Compute the geodesic coordinate curves for each of the principal $\Theta$ directions.  But we must now iterate these three steps multiple times, to convert a local solution (based on infinitesimal eigenvectors) into a global solution (based on geodesic curves over finite distances).
 
We need to address a preliminary issue:  When we were working with {\tt DSolve} in the simple Gaussian case, we were able to compute an explicit expression for $\hat{\rho}(t)$ and convert it into a formula for the $\rho$ coordinate measured in Euclidean arc length.  Basically, we were constructing a new parametrization of $\hat{\rho}(t)$.  This is not easy to do in the general case, however, because it would require us to invert the general formula for arc length.  Fortunately, there is a simpler approach, which works very well using {\tt NDSolve}.  In place of the differential equation derived from \eqref{odeYt}, we use the normalized version:
\begin{align}
{\gamma}'(t)  \;&=\; \frac{\nabla U( {\gamma}(t) )}{\| \nabla U( {\gamma}(t) ) \|}  \notag \\ 
{\gamma}(0)  \;&=\; {\bf x}_{0} \notag
\end{align}
Since our tangent vector now has length $1$, the integral curve that solves this equation will be parametrized by Euclidean arc length, but otherwise it will be identical to $\hat{\rho}(t)$.  The formula for Riemannian arc length, using our dissimilarity metric, ${g}_{ij}({\bf x})$, is also very simple when ${\gamma}(t)$ is defined in this way:
\begin{equation}
\label{RiemannianDistance}
\int_{0}^{T} 
\sqrt{
\left(\begin{array}{ccc}1 & 0 & 0\end{array}\right)
 \begin{pmatrix}
    \\
   \; {g}_{ij}({\gamma}(t)) \; \\
    \\
\end{pmatrix}
\left(\begin{array}{c}1 \\0 \\0\end{array}\right)
} \; dt 
 \;=\; 
\int_{0}^{T} 
\sqrt{
{g}_{00}({\gamma}(t))
} \; dt 
\notag
\end{equation}
This solves the parametrization problem for the $\rho$ coordinate.

We need to solve a similar problem for the $\Theta$ coordinates.  We can rely on two mathematical facts: First, the parametrization of the geodesic of an energy functional is proportional to its Euclidean arc length.  See, e.g.,  \cite{spivakcomprehensive}, Theorem 9.12.  Thus, computing and applying the proportionality factor, we can set the parametrization of a geodesic coordinate curve to be identical to its Euclidean arc length.  Second, the Euclidean distance along a curve on the Frobenius integral manifold is equal to the Riemannian distance along that curve, since the manifold is embedded in Euclidean ${\bf R}^{3}$.  Thus, we can construct a coordinate system in which the distance along the coordinate axes is a measure of the Riemannian dissimilarity along those axes.  For the coordinate curves that are transverse to the coordinate axes, we define the following flows:
\begin{align}
\label{TransverseCurves}
\vec{\theta}_{t}({\bf x}) =  \hat{\theta}_{{\bf x}}(t)  &= {\bf x} + \int_{0}^{t}  v_{\theta}(s) {\bf V}(  \hat{\theta}_{{\bf x}}(s) ) +  w_{\theta}(s) {\bf W}(  \hat{\theta}_{{\bf x}}(s) )  \; ds \\
\vec{\phi}_{t}({\bf x}) =  \hat{\phi}_{{\bf x}}(t)  &= {\bf x} + \int_{0}^{t}  v_{\phi}(s) {\bf V}(  \hat{\phi}_{{\bf x}}(s) ) +  w_{\phi}(s) {\bf W}(  \hat{\phi}_{{\bf x}}(s) )  \; ds \notag
\end{align}
These flows use the same $v(t)$ and $w(t)$ functions that were computed for the geodesics, but with a different starting point, ${\bf x}$.  (Compare \eqref{intcurveYt} and \eqref{odeYt} with \eqref{GammaConstraint}.) The parametrization of the curves given by \eqref{TransverseCurves} will be the same as the parametrization of the geodesic curves, and both curves will be identical on the coordinate axes, but the parametrizations elsewhere will not correspond to Euclidean arc length. Note also that the flows in \eqref{TransverseCurves} and their integral curves are not tensor invariants, in general, although they are invariant (by definition) whenever they coincide with the geodesic coordinate curves.
 
We are now ready to proceed through the three steps at the end of Section \ref{proto}. We will start off with the basis vectors ${\bf V} ({\bf x})$ and ${\bf W} ({\bf x})$ centered on the $x$ axis, and we will do the calculations initially using the infinitesimal eigenvectors of the matrix $ \left( {g}_{ij}({\bf x}) \right) $.  To fix our notation, let's use $\theta^{1}$ to denote the coordinate axis determined by the maximal eigenvalue $\lambda_{1}({\bf x})$ and its eigenvector $\xi_{1}({\bf x})$, and let's use $\theta^{2}$ to denote the coordinate axis determined by the minimal eigenvalue $\lambda_{2}({\bf x})$ and its eigenvector $\xi_{2}({\bf x})$. Here are the three steps: 
 \begin{enumerate}
\vspace{1ex} 

\item Find a principal axis for the $\rho$ coordinate.
\vspace{1ex} 
 
The basic idea is to find a point $(x_{0},y_{0},z_{0})$ at a fixed Euclidean distance from the origin, and an integral curve ${\gamma}(t)$ which solves the normalized differential equation for $ \nabla U({\bf x})$ starting at ${\bf x} = (x_{0},y_{0},z_{0})$, and for which the Riemannian distance, ${g}_{ij}({\bf x})$, measured along ${\gamma}(t)$ for a fixed interval, $t$, is minimal.  In short, we are looking for the \emph{least} Riemannian distance for a fixed Euclidean distance.  
 
We use {\tt NDSolve} to compute ${\gamma}(t)$, and we use {\tt NIntegrate} to compute the 
Riemannian distance along ${\gamma}(t)$.   {\tt FindMinimum} then searches for the minimal point $(x_{0},y_{0},z_{0})$ satisfying these constraints.  In our rotated Gaussian example, we can start the search at $(10,0,0)$ with the constraint that $(x_{0},y_{0},z_{0})$ must lie on the sphere $x^{2} + y^{2} + z^{2} = 100$, and {\tt FindMinimum} will return the value $(x_{0},y_{0},z_{0}) = ( 6.66666, -3.33335, 6.66667)$.  This is a reasonably good match with the analytical value, $(x_{0},y_{0},z_{0}) = ( 20/3, -10/3, 20/3 )$. 

An alternative computation is to minimize  $\| \nabla U(x,y,z) \| ^{2}$ on the sphere $x^{2} + y^{2} + z^{2} = 100$, which yields the value $(x_{0},y_{0},z_{0}) = ( 6.66667, -3.33333, 6.66667)$, an even closer match. These two solutions will be approximately the same, as they are here, as long as $\| \nabla U({\bf x}) \|$ is monotonic.

\vspace{1ex} 
\item Determine the principal directions for the $\Theta$ coordinates.
\vspace{1ex} 

We want to compute the eigenvalues, $ \lambda_{1}({\bf x}) $ and $ \lambda_{2}({\bf x}) $, and the associated eigenvectors, $\xi_{1}({\bf x})$ and $\xi_{2}({\bf x})$, for the dissimilarity matrix, $ \left(\begin{array}{c} {g}_{ij}({\bf x}) \end{array}\right) $, at the point ${\bf x} = (x_{0},y_{0},z_{0})$.  For expository purposes, let's initially use the analytical value $(x_{0},y_{0},z_{0}) = ( 20/3, -10/3, 20/3 )$. Then the eigenvalues are $100$ and $400/9$ and the eigenvectors are $(0, 10/3, -20/3)$ and $(0, 20/3, 10/3)$, respectively.  If we use the numerical value $(x_{0},y_{0},z_{0}) = ( 6.66666, \\-3.33335, 6.66667)$ and normalize the eigenvectors, we have  $\xi_{1} = ( 0, 0.44722, -0.894424 )$ and $\xi_{2} = ( 0, 0.894424, 0.44722 )$.  We can then confirm that 
\begin{align}
\xi_{1}^{T}  \left(\begin{array}{c} {g}_{ij}(x_{0},y_{0},z_{0}) \end{array}\right) \xi_{1}  &=   100.0 \notag \\
\xi_{2}^{T}  \left(\begin{array}{c} {g}_{ij}(x_{0},y_{0},z_{0}) \end{array}\right) \xi_{2}  &=  44.4443 \approx 400/9  \notag
\end{align}

\begin{figure}[htbp]
\begin{center}
\includegraphics[width=5.0in]{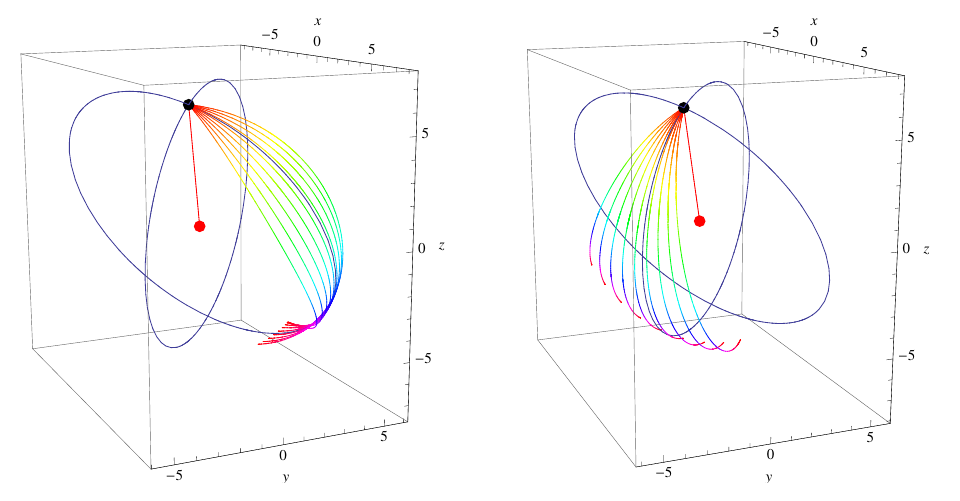}
\caption{Rotating the infinitesimal eigenvectors to maximize and minimize the global geodesic curves: (a) for the maximal eigenvalue $\lambda_{1}({\bf x})$;  (b)  for the minimal eigenvalue $\lambda_{2}({\bf x})$.}
\label{AlphaMaxMin}
\end{center}
\end{figure}

\vspace{1ex} 
\item Compute the geodesic coordinate curves for each of the principal $\Theta$ directions.
\vspace{1ex} 
 
In the final step, we compute the geodesic curves that solve the variational problem given by \eqref{EnergyFunctional} and \eqref{GammaConstraint}, with the initial value $(x(0),y(0),z(0)) = ( 6.66666, -3.33335, 6.66667)$ and with $(v(0),w(0))$ equal to $ \pm \, ( 0.44722, -0.894424  )$ for the $\theta^{1}$ coordinate, and $ \pm \, ( 0.894424, 0.44722 )$ for the $\theta^{2}$ coordinate.   \emph{Mathematica} has a {\tt VariationalMethods} package which computes the Euler-Lagrange equations symbolically from the specification of a variational problem.  We use this package, and then solve the resulting equations numerically with {\tt NDSolve}.
 
Let's examine some of the properties of these curves.   First, consider the distance measured along the $\rho$ coordinate curve from a point on either of the geodesic curves to the origin:  The Riemannian distance is constant, 50.0, but the Euclidean distance varies from a maximum of 10.0 at the point $(x_{0},y_{0},z_{0})$ to a minimum of  6.95688 along the $\theta^{1}$ curve and a minimum of 5.88452 along the $\theta^{2}$ curve.  Second, consider the distance along each of the geodesic curves from $(x_{0},y_{0},z_{0})$ to a point at an angle of $\pi/2$ from the origin.  For the $\theta^{1}$ curve, the Riemannian and Euclidean distance is 13.3259. For the $\theta^{2}$ curve, the Riemannian and Euclidean distance is 12.4379.  These are, of course, the properties of the shortest paths on the surface of an ellipsoid at a constant Riemannian distance from the origin.   
\end{enumerate}

\begin{figure}[htbp]
\begin{center}
\includegraphics[width=3.5in]{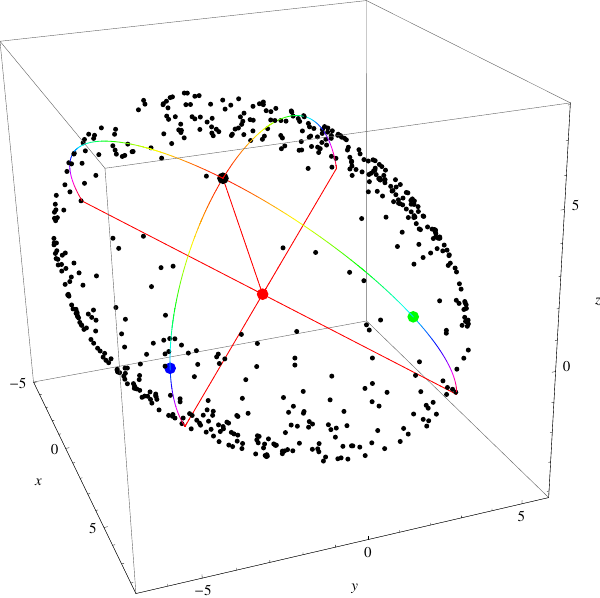}
\caption{Projecting the data points along the $\rho$ coordinate curves to the Frobenius integral manifold.}
\label{IntManScatter}
\end{center}
\end{figure}

However, as we will see, these are not yet the ``optimal'' coordinate curves that we are looking for.  The situation is illustrated in Figure \ref{AlphaMaxMin}.  The red dot is the origin, and the black dot is the point $(x_{0},y_{0},z_{0})$ on the principal axis. The multi-colored curves in Figures \ref{AlphaMaxMin}(a) and \ref{AlphaMaxMin}(b) are the computed geodesic curves for $\theta^{1}$ and $\theta^{2}$, respectively.  In each case, the curves at the furthest clockwise positions are the curves that were computed above using the maximal and minimal infinitesimal eigenvectors. As we move in a counter-clockwise direction, the additional multi-colored curves are those geodesics that would be computed by rotating $\xi_{1}$ and $\xi_{2}$ through an angle $\alpha$, in increments of $0.1$ radians.  For each curve, we compute the Euclidean (and Riemannian) distance from $(x_{0},y_{0},z_{0})$ to a point at an angle of $\pi/2$ from the origin, and then compute the angle of rotation, $\alpha$, that minimizes this distance in Figure \ref{AlphaMaxMin}(b), and thereby maximizes this distance in Figure \ref{AlphaMaxMin}(a).  The optimal value is $ \alpha = 0.463649$. The new initial directions for this rotation are $\xi_{1} = ( 0,  0.894431,  -0.447205)$ and $\xi_{2} = ( 0, 0.707101, 0.707112 )$, and we have:
\begin{align}
\xi_{1}^{T}  \left(\begin{array}{c} {g}_{ij}(x_{0},y_{0},z_{0}) \end{array}\right) \xi_{1}  &=   79.9998 \approx 80 \notag \\
\xi_{2}^{T}  \left(\begin{array}{c} {g}_{ij}(x_{0},y_{0},z_{0}) \end{array}\right) \xi_{2}  &=  49.9999 \approx 50 \notag
\end{align}
Thus, although the new initial directions are not optimal as infinitesimals, they do maximize and minimize the metric globally.   For $\theta^{1}$, the Riemannian and Euclidean distance is now 13.5064, and for $\theta^{2}$, the Riemannian and Euclidean distance is now 12.1106.  Furthermore, as Figure \ref{AlphaMaxMin} suggests, the new $\theta^{1}$ and $\theta^{2}$ curves match the rotated curves from the $xy$ plane and the $xz$ plane, respectively, that were identified in Figures \ref{GaussianManifold} and \ref{Gaussian.frb.rhotheta}.

We can now investigate the mapping of sample data in these coordinates.  Figure \ref{IntManScatter} shows the coordinate system and the data points, restricted to the positive $x$ axis before it was rotated through the angle $ \pi / 3 $ around the line from $(0,0,0)$ to $(1,1,1)$. There are 526 points in this half-space. The green dot is a point on the $\theta^{1}$ curve at a distance of 7.15541 from $(x_{0},y_{0},z_{0})$, and the blue dot is a point on the $\theta^{2}$ curve at a distance of 7.07106 from $(x_{0},y_{0},z_{0})$.  The data points have been projected along the $\rho$ coordinate curve to the Frobenius integral manifold at a constant Riemannian distance of 50.0 from the origin. Notice that the density of the data is higher near the $\theta^{2}$ coordinate curve than it is near the $\theta^{1}$ coordinate curve.

\begin{figure}[htbp]
\begin{center}
\includegraphics[width=5in]{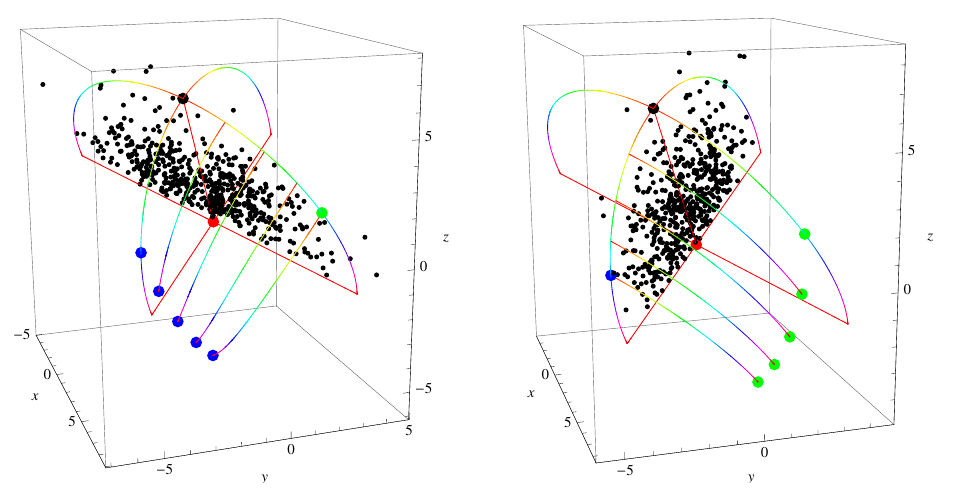}
\caption{Computing $\Theta$ coordinates on the Frobenius integral manifold: (a) Computing $\theta^{1}$ followed by $\theta^{2}$, and projecting the data points onto the $\theta^{1}$ surface; (b) Computing $\theta^{2}$ followed by $\theta^{1}$, and projecting the data points onto the $\theta^{2}$ surface.}
\label{ProjectMaxMin}
\end{center}
\end{figure}

We now compute the values of the $\Theta$ coordinates on the Frobenius integral manifold, for each of the 526 points.  There are two ways to do this: Figure \ref{ProjectMaxMin}(a) shows how to measure the distance along the $\theta^{1}$ coordinate curve towards the green dot, and then along the transverse coordinate flows, as defined in \eqref{TransverseCurves}, towards the blue dots. Let's call this result: $(\rho,\theta^{1},\theta^{2})$. Figure \ref{ProjectMaxMin}(b) shows how to measure the distance along the $\theta^{2}$ coordinate curve towards the blue dot, and then along the transverse coordinate flows, as defined in \eqref{TransverseCurves}, towards the green dots. Let's call this result: $(\rho,\theta^{2},\theta^{1})$. For example, proceeding to the furthest  blue and green dots in each case, we would be computing $(\rho,7.15541,7.07106)$ in Figure \ref{ProjectMaxMin}(a) and $(\rho,7.07106,7.15541)$ in Figure \ref{ProjectMaxMin}(b), but these would be two different points on the manifold!  Taking measurements along these flows only gives us a direct mapping from $(\rho,\Theta)$ to $(x,y,z)$, of course, but we can then invert the functions to obtain a mapping from $(x,y,z)$ to either $(\rho,\theta^{1},\theta^{2})$ or $(\rho,\theta^{2},\theta^{1})$. There is an annoying technical problem when we try to extend these results beyond the quadrant in the forefront of Figure \ref{ProjectMaxMin}. With the basis vectors ${\bf V} ({\bf x})$ and ${\bf W} ({\bf x})$ centered on the $x$ axis, we encounter singularities when we try to solve the differential equations for the coordinate flows.  But we can avoid these problems by switching to a $y$-centered basis for the back side of the $\theta^{1}$ curve, and a $z$-centered basis for the back side of the $\theta^{2}$ curve.  
 
There is no error in the mapping we have just constructed. But we are now in a position to drop one of the $\Theta$ coordinates, to obtain a lower-dimensional encoding of our data.  Which one?  We can either use $(\rho,\theta^{1},\theta^{2})$ and truncate it to $(\rho,\theta^{1})$, or use $(\rho,\theta^{2},\theta^{1})$ and truncate it to $(\rho,\theta^{2})$, and we would like to know the error in each case.  For specificity, let's focus on the first case, in which we drop $\theta^{2}$. One way to conceptualize the error is to measure the Euclidean distance along the $\theta^{2}$ coordinate curve that we are dropping, and then scale this distance down, proportionately, given the position of the data point along the $\rho$ coordinate curve.  For example, the point $(x,y,z) = (3.07959, 0.121701, 0.476748)$ is mapped to $(\rho,\theta^{1},\theta^{2}) = (3.29006,6.9341,2.70928)$. The Euclidean distance along the $\theta^{2}$ coordinate curve is computed to be $4.68592$.  (Recall that the parametrizations of the transverse coordinate flows in \eqref{TransverseCurves} are not equivalent to Euclidean arc length, except along the main coordinate axes.) The Euclidean distance from $(x,y,z)$ along the $\rho$ coordinate curve to the Frobenius integral manifold is computed to be $4.00782$.  Thus the ``reconstruction error'' for this data point is
\begin{equation*}
4.68592 * \left( \frac{3.29006}{3.29006 + 4.00782} \right) \; = \; 2.11253
\end{equation*}
We can now compute the \emph{root-mean-squared} (RMS) reconstruction error for the 526 sample data points in our half-space, using each encoding.  For the truncation from $(\rho,\theta^{1},\theta^{2})$ to $(\rho,\theta^{1})$, the RMS error is $3.27489$, and for the truncation from $(\rho,\theta^{2},\theta^{1})$ to $(\rho,\theta^{2})$, the RMS error is $2.97351$. Thus, according to this measure, the better lower-dimensional encoding is $(\rho,\theta^{2})$. 
 
There are other ways to define the reconstruction error, however, and they might yield different results.  One crude approach is to actually project the data along the transverse coordinate curves to the $\theta^{1}$  and $\theta^{2}$ surfaces, and to measure distances in the ambient Euclidean space ${\bf R}^{3}$.  Such projections are illustrated in Figures \ref{ProjectMaxMin}(a) and \ref{ProjectMaxMin}(b).  We can then compute an analogue of the ``variance'' on each surface, as in Principal Components Analysis. For the projection onto the $\theta^{1}$ surface in Figure \ref{ProjectMaxMin}(a), the RMS deviation from the origin is $5.01522$, and for the projection onto the $\theta^{2}$ surface in Figure \ref{ProjectMaxMin}(b), the RMS deviation from the origin is $4.36232$. We can also measure the distance in ${\bf R}^{3}$ from the original data point $(x,y,z)$ to its projection onto one of these surfaces, a quantity that we might call the ``discrepancy.''  For the projection onto the $\theta^{1}$ surface, the RMS discrepancy is $2.61531$, and for the projection onto the $\theta^{2}$ surface, the RMS discrepancy is $2.87559$.  
 
\subsection{The Curvilinear Gaussian.}
\label{Exp:CurvGauss}

\begin{figure}[htbp]
\begin{center}
\includegraphics[width=3.5in]{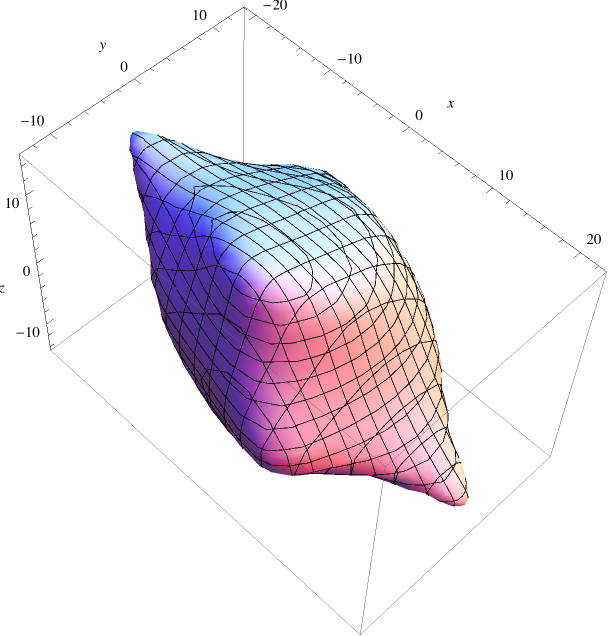}
\caption{Contour plot for the surface of a curvilinear Gaussian potential at $U(x,y,z) = -10$.}
\label{CurvilinearGaussian}
\end{center}
\end{figure}

The methodology of Section \ref{Exp:Gauss} was exploratory. The quadratic potential can always be solved analytically, no matter how it is rotated, but we were interested in determining whether an ``optimal'' curvilinear coordinate system could be computed numerically, using just the Riemannian dissimilarity metric and the Euler-Lagrange equations, without prior knowledge of the rotation.  And what do we mean by an ``optimal'' curvilinear coordinate system? Using a reasonable definition of the ``reconstruction error,'' we saw that the truncation from $(\rho,\theta^{2},\theta^{1})$ to $(\rho,\theta^{2})$ was better than the truncation from $(\rho,\theta^{1},\theta^{2})$ to $(\rho,\theta^{1})$, although an analogue of Principal Components Analysis would suggest the opposite. 

In this section, we will consider an example for which analytical results are not available, and in which we will be free to apply rotations whenever they would simplify the numerical calculations. In particular, we will rotate the original $xyz$ coordinate system to align the $x$ axis with the principal axis, once we have computed it, and we will apply additional rotations in the directions of $\Theta$ to simplify the computation of the transverse coordinate curves.  We will also study further the reconstruction error for a  $(\rho,\Theta)$ coordinate system, using simulated data.

Let's start with a cubic polynomial: $C(t) = t^3 - t^2 - t$.  We then define a cubic polynomial coordinate transformation from $(x,y,z)$ to $(u,v,w)$ as follows:
\begin{align} 
u \;=\; u(x,y,z) =  & \; C(1.4 \; y) + 2 x (y^2 + z^2) \notag \\
v \;=\; v(x,y,z) = & \; C(1.2 \; z) + 2 y (z^2 + x^2) \notag \\
w \;=\; w(x,y,z) = & \; C(1.0 \; x) + 2 z (x^2 + y^2) \notag 
\end{align}
Finally, we define $U({\bf x})$ as a quadratic potential function in the variables $u$, $v$ and $w$:
\begin{equation}
U(x,y,z) = - \frac{1}{2} ( a \,u(x,y,z)^2 + b \,v(x,y,z)^2 + c \,w(x,y,z)^2) * 10^{-6} \notag
\end{equation}
Thus $U({\bf x})$ is a sixth-degree polynomial in $x$, $y$ and $z$, and the gradient, $\nabla U({\bf x})$, is a fifth-degree polynomial. There are no known closed-form solutions to the Feynman-Kac formula, given by either  \eqref{defPt} or \eqref{defQt}, when $U({\bf x})$ and $V({\bf x})$ are higher-order polynomials.  However, it is possible to discretize the Feynman-Kac ``path integral,'' and obtain approximate numerical solutions.  See, for example, \cite{Lyasoff:1999:PIM}.

 \begin{figure}[htbp]
\begin{center}
\includegraphics[width=5in]{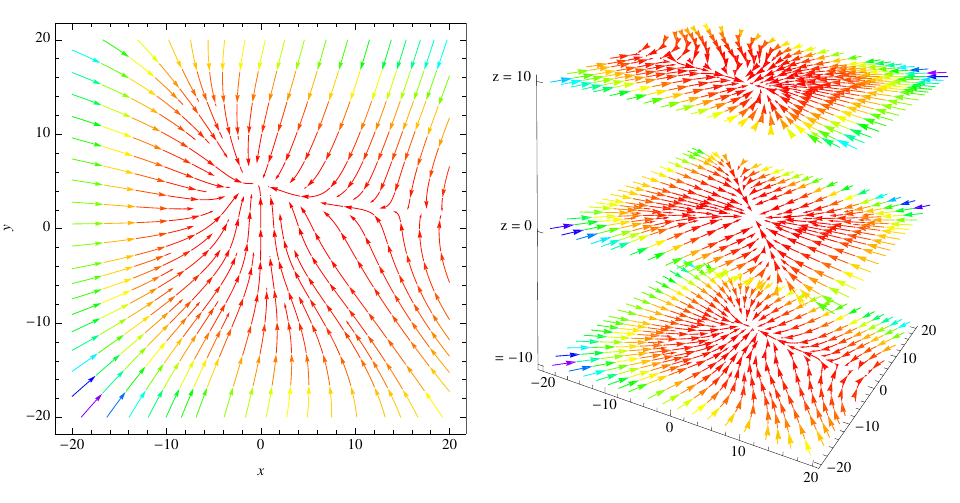}
\caption{Gradient vector field for the curvilinear Gaussian potential in Figure \ref{CurvilinearGaussian}: (a) at $z = -10$; (b) at $z = 10$, $z = 0$ and $z = -10$.}
\label{StreamPlotStack}
\end{center}
\end{figure}

For a numerical example, set $a = 1, b = 2, c = 4$.  Figure \ref{CurvilinearGaussian} then shows the surface defined by the equation $U(x,y,z) = -10$.  Figure \ref{StreamPlotStack}(a) shows a {\tt StreamPlot} of the gradient vector field generated by $\nabla U(x,y,z)$ at $z=-10$.  Figure \ref{StreamPlotStack}(b) shows a stack of such stream plots, at the values $z = 10$, $z = 0$ and $z = -10$.  Notice how the drift vector twists and turns to counteract the dissipative effects of the diffusion term, and maintain an invariant probability measure.

\begin{figure}[htbp]
\begin{center}
\includegraphics[width=3.5in]{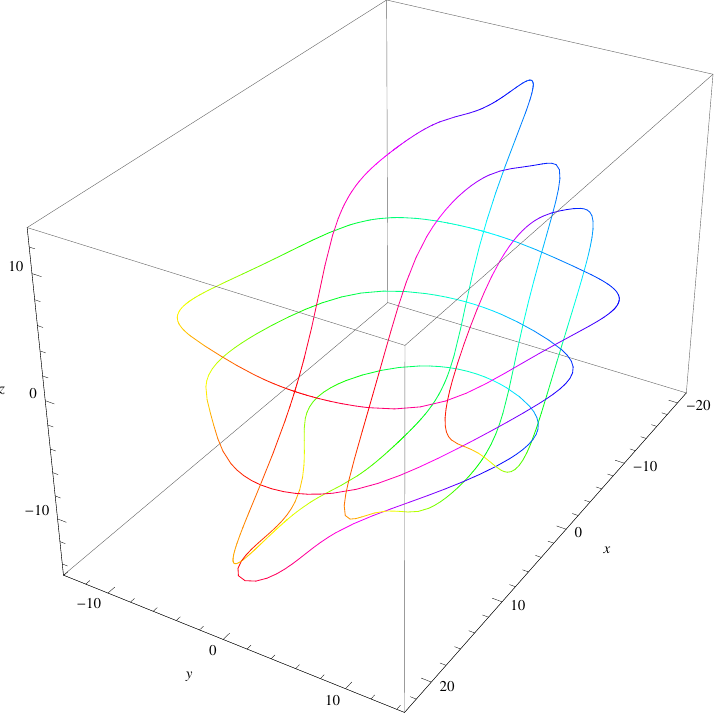}
\caption{An integral manifold with a global coordinate system for the curvilinear Gaussian potential in Figure \ref{CurvilinearGaussian}.}
\label{CurvGaussIntManifold}
\end{center}
\end{figure}
  
Figure \ref{CurvGaussIntManifold} is analogous to Figure \ref{GaussianManifold} in the Gaussian case, and depicts the integral manifold that passes through the point $( 20, 0, -10 )$.  The coordinate curves in Figure \ref{CurvGaussIntManifold} are generated by a global coordinate system centered on the $x$ axis, with $P({\bf x}) {\bf V} = (-Q({\bf x}),P({\bf x}),0)$ and $P({\bf x}) {\bf W} = (-R({\bf x}),0,P({\bf x}))$.  These curves are thus analogous to the global $\theta$ and $\phi$ coordinate curves shown in Figure \ref{GaussianManifold}.  
 
Figure \ref{CGCoordsWithEig} shows the $( \rho, \Theta )$ surfaces computed by our numerical techniques, and analogous to the $( \rho, \Theta )$ surfaces in Figure \ref{ProjectMaxMin}.   As before, we start off with the basis vectors ${\bf V} ({\bf x})$ and ${\bf W} ({\bf x})$ centered on the $x$ axis, and we proceed through the three steps outlined at the end of Section \ref{proto}, with iterations to convert a local solution (based on infinitesimal eigenvectors) into a global solution (based on geodesic curves over finite distances). Here are the three steps:
\begin{enumerate}
\item Find a principal axis for the $\rho$ coordinate.
\vspace{1ex} 

We saw in Section \ref{Exp:Gauss} that there are two ways to find a maximal point on the principal axis, which yield approximately the same results as long as $\| \nabla U({\bf x}) \|$ is monotonic. In the curvilinear Gaussian case, we first minimize  $\| \nabla U(x,y,z) \| ^{2}$ on a sphere through the point $( 20, 0, -10 )$ to obtain the value:  $(20.4316, 1.27953, -8.99505)$. The integral curve ${\gamma}(t)$ from this point towards the origin has Euclidean length $20.9043$ and Riemannian length $6.30873$.  We now use {\tt NDSolve}, {\tt NIntegrate} and {\tt FindMinimum} to compute another integral curve, ${\gamma}(t)$, possibly distinct, which starts on the surface $x^{2} + y^{2} + z^{2} = 500$ and extends for the distance $t = 20.904$, i.e., just short of the singularity at the origin, and which has minimal Riemannian length.  The starting point for this curve turns out to be $(x_{0},y_{0},z_{0}) = (20.4317, 1.27944, -8.9949)$ and the Riemannian length turns out to be $6.30863$.  We take this to be the maximal point on the principal axis.  See the black dot in the lower right quadrant in Figure \ref{CGCoordsWithEig}.

\begin{figure}[htbp]
\begin{center}
\includegraphics[width=4.5in]{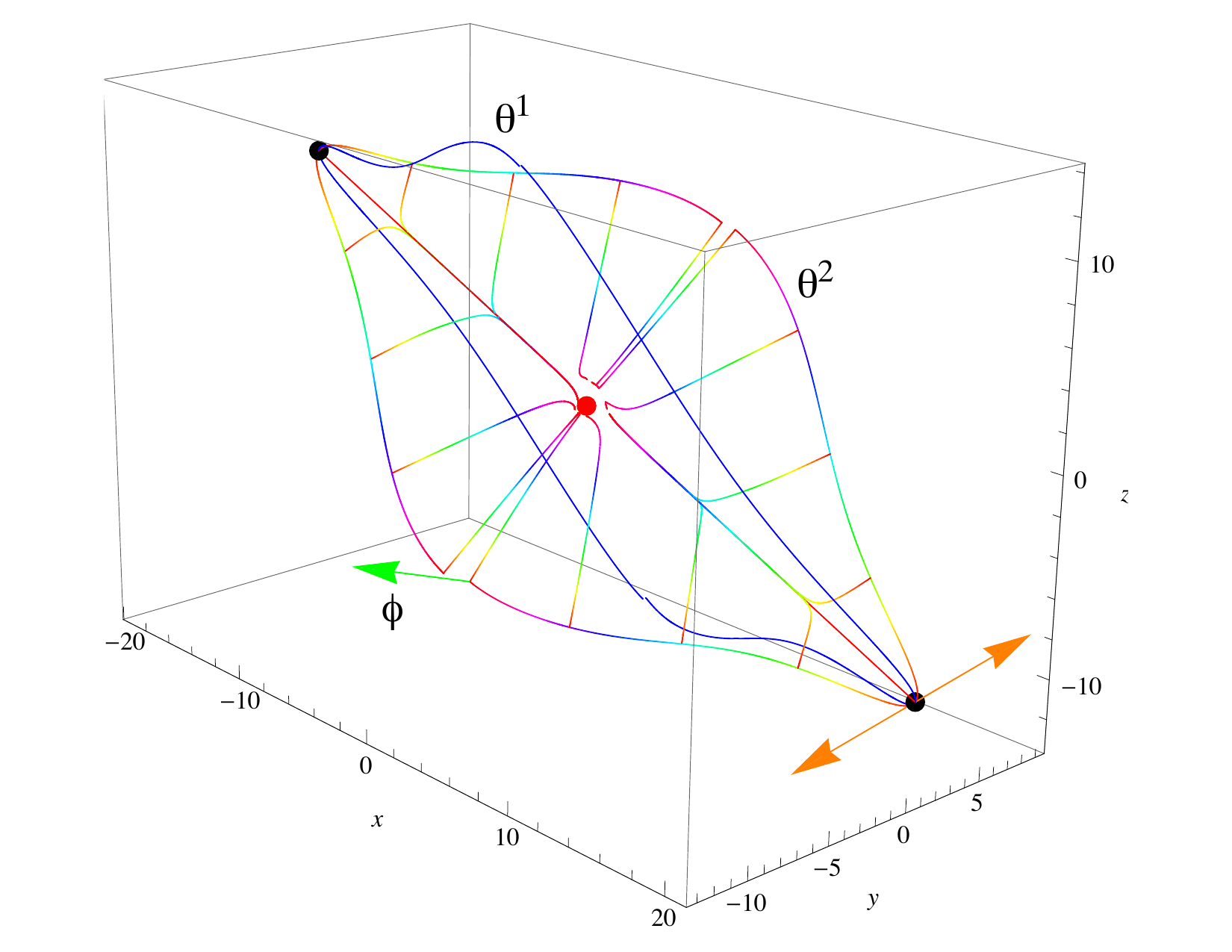}
\caption{Geodesic coordinate curves for the curvilinear Gaussian potential in Figure \ref{CurvilinearGaussian}.}
\label{CGCoordsWithEig}
\end{center}
\end{figure}

We also need to compute the location of the black dot in the upper left quadrant in Figure \ref{CGCoordsWithEig}, which we call the \emph{antipodal point}.  For $(x_{0},y_{0},z_{0})$, we were looking for a point with a fixed Euclidean distance from the origin and a minimal Riemannian distance.  We are now looking for a point with a fixed Riemannian distance from the origin and a maximal Euclidean distance.  But every point on the Frobenius integral manifold has a constant Riemannian distance from the origin.  Thus, to locate the antipodal point, we first follow the global coordinate curve in the \emph{xz} plane (see Figure \ref{CurvGaussIntManifold}), from $(x_{0},y_{0},z_{0})$ halfway around the loop to a point in the vicinity of the solution: $(-17.3636, 1.27944, 7.55937)$. We then search along the Frobenius integral manifold, using the global coordinate curves in the \emph{xy} and \emph{xz} planes, to find a point at a maximal Euclidean distance from the origin: $(-19.2034,-1.25668,9.25639)$. We take this to be the value of $(x_{1},y_{1},z_{1})$, the antipodal point. 
 
Now that we have computed the principal axis, we can rotate our original \emph{xyz} coordinate system to align the \emph{x}-axis with $(x_{0},y_{0},z_{0})$, which simplifies many of the calculations that we want to do in a $( \rho, \Theta )$ coordinate system with an \emph{x}-centered basis.  In the rotated coordinate system, $(x_{0},y_{0},z_{0})$ is mapped into $(22.3607,0.0,0.0)$ and $(x_{1},y_{1},z_{1})$  is mapped into $(-21.3422,$ $-0.0444194,0.733775)$.  To facilitate comparison of the figures, however, we will continue to generate graphics in the original orientation.
 
\vspace{1ex} 
\item Determine the principal directions for the $\Theta$ coordinates.
\vspace{1ex} 

The orange arrows in Figure \ref{CGCoordsWithEig} depict the eigenvectors in the original \emph{xyz} coordinate system associated with the minimal eigenvalue, in the positive \emph{y}-direction and the negative \emph{y}-direction, respectively.  But the geodesic coordinate curves for the $\theta^{2}$ surface are determined by rotating these eigenvectors in a counter-clockwise direction through an angle $\alpha$ that minimizes the ratio of (i) the Riemannian length to (ii) the Euclidean angle from the origin, up to the Euclidean angle $\pi/2$.  For the eigenvector in the positive direction, $\alpha = 0.952169$, and for the eigenvector in the negative direction, $\alpha = 1.12681$.  Furthermore, when we examine these optimal geodesic coordinate curves, we see that they both extend beyond the Euclidean angle $\pi/2$ from the origin, so we terminate them at this point. 

The details are slightly different for the eigenvectors associated with the maximal eigenvalue.  In this case, the optimal geodesic coordinate curves have different lengths, one extending to a Euclidean angle substantially more than $\pi/2$, and one extending to a Euclidean angle substantially less. We thus combine the coordinate curves in the positive and negative directions, and maximize jointly the ratio of their Riemannian lengths to the Euclidean angles they subtend. The optimal result is a rotation of the maximal eigenvector $\xi_{1}({\bf x})$ in the counter-clockwise direction through an angle $\alpha = 0.114166$. These geodesic coordinate curves are illustrated in blue and labeled as $\theta^{1}$ in Figure \ref{CGCoordsWithEig}.

We have applied the same constructions to the antipodal point $(x_{1},y_{1},z_{1})$.  The results are illustrated in Figure \ref{CGCoordsWithEig}, but we will not discuss them in detail.

\vspace{1ex} 
\item Compute the geodesic coordinate curves for each of the principal $\Theta$ directions.
\vspace{1ex} 
  
Using the initial values $(x_{0},y_{0},z_{0})$ and $(x_{1},y_{1},z_{1})$ computed in step (1) and the various principal directions computed in step (2), we construct the Euler-Lagrange equations for the variational problem given by \eqref{EnergyFunctional} and \eqref{GammaConstraint}, and we solve them using {\tt NDSolve}.  We have already discussed the results of these calculations, and they are illustrated in Figure \ref{CGCoordsWithEig}.  Figure \ref{CGCoordsWithEig} also shows the $\rho$ coordinate curves drawn from fixed intervals along the $\theta^{2}$ geodesics, which gives us a good sense of the shape of the $\theta^{2}$ surface.

In the simple Gaussian case in Section \ref{Exp:Gauss}, we only made use of the two coordinate curves, $\theta^{1}$ and $\theta^{2}$, each one serving as the source of the transverse coordinate curves for the other, as illustrated in Figure \ref{ProjectMaxMin}. In the curvilinear Gaussian case, however, it is convenient to add another coordinate, $\phi$, which is orthogonal to both $\theta^{1}$ and $\theta^{2}$, and which can be used to define the transverse coordinate curves for each coordinate axis.  Consider the green arrow in Figure \ref{CGCoordsWithEig}.  This is a vector orthogonal to the $\theta^{2}$ geodesic coordinate curve, which is attached to the curve at a Euclidean angle of $\pi/2$ from the origin, and which lies in the tangent plane to the Frobenius integral manifold at that point.  We use this vector as the initial direction for the construction of another geodesic curve on the Frobenius integral manifold, and we construct similar geodesics at all the maximal points along $\theta^{1}$ and $\theta^{2}$.  Sometimes, these geodesics encounter singularities, but we can avoid this problem by (i) using a \emph{y}-centered basis instead of an \emph{x}-centered basis, and (ii) rotating the coordinate system around the \emph{x}-axis. Since we previously rotated the original \emph{xyz} coordinate system so that $(x_{0},y_{0},z_{0}) = (22.3607,0.0,0.0)$, we now have the option of rotating again to align the positive y-axis with the maximal eigenvector $\xi_{1}({\bf x})$, or its displacement through the angle $\alpha$, or any other convenient quantity. By a judicious choice of rotations, we can guarantee that our coordinate system covers the entire Frobenius integral manifold.
  
\vspace{1ex} 
\end{enumerate}
Finally, to fill out the coordinate system, we need to define the flows in equation \eqref{TransverseCurves} for the coordinate axes $\theta^{1}$, $\theta^{2}$, and $\phi$.  We will see an example in our discussion of Figure \ref{CGError}, below.

\begin{figure}[htbp]
\begin{center}
\includegraphics[width=4.0in]{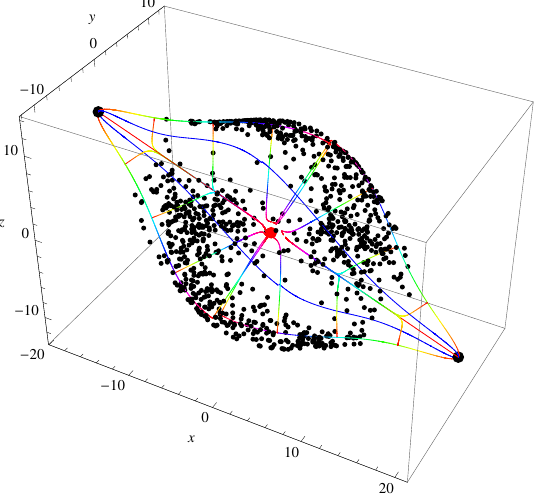}
\caption{Projecting data points from the curvilinear Gaussian potential in Figure \ref{CurvilinearGaussian} along the $\rho$ coordinate curves to the Frobenius integral manifold.}
\label{CGCoordsWithData}
\end{center}
\end{figure}
 
Figure \ref{CGCoordsWithData} shows 1000 sample data points projected onto the Frobenius integral manifold, analogous to Figure \ref{IntManScatter} in Section \ref{Exp:Gauss}.  The data was generated from our curvilinear Gaussian probability distribution, using Gibbs sampling \cite{CasellaGeorge1992}. (The Gibbs sampler is easy to implement, since the conditional distributions of \emph{x} given \emph{y} and \emph{z}, \emph{y} given \emph{x} and \emph{z}, and \emph{z} given \emph{x} and \emph{y}, can be defined analytically.) For each data point, in \emph{xyz} coordinates, the ${\gamma}(t)$ curve is computed inwards to determine the value of the $\rho$ coordinate, and then computed outwards to a constant Riemannian distance of 6.30863 from the origin.  Notice that the density of the data is higher near the $\theta^{2}$ coordinate curve than it is near the $\theta^{1}$ coordinate curve. We can quantify this observation by computing the ``reconstruction error,'' as we did in Section \ref{Exp:Gauss}. 

\begin{figure}[htbp]
\begin{center}
\includegraphics[width=4.0in]{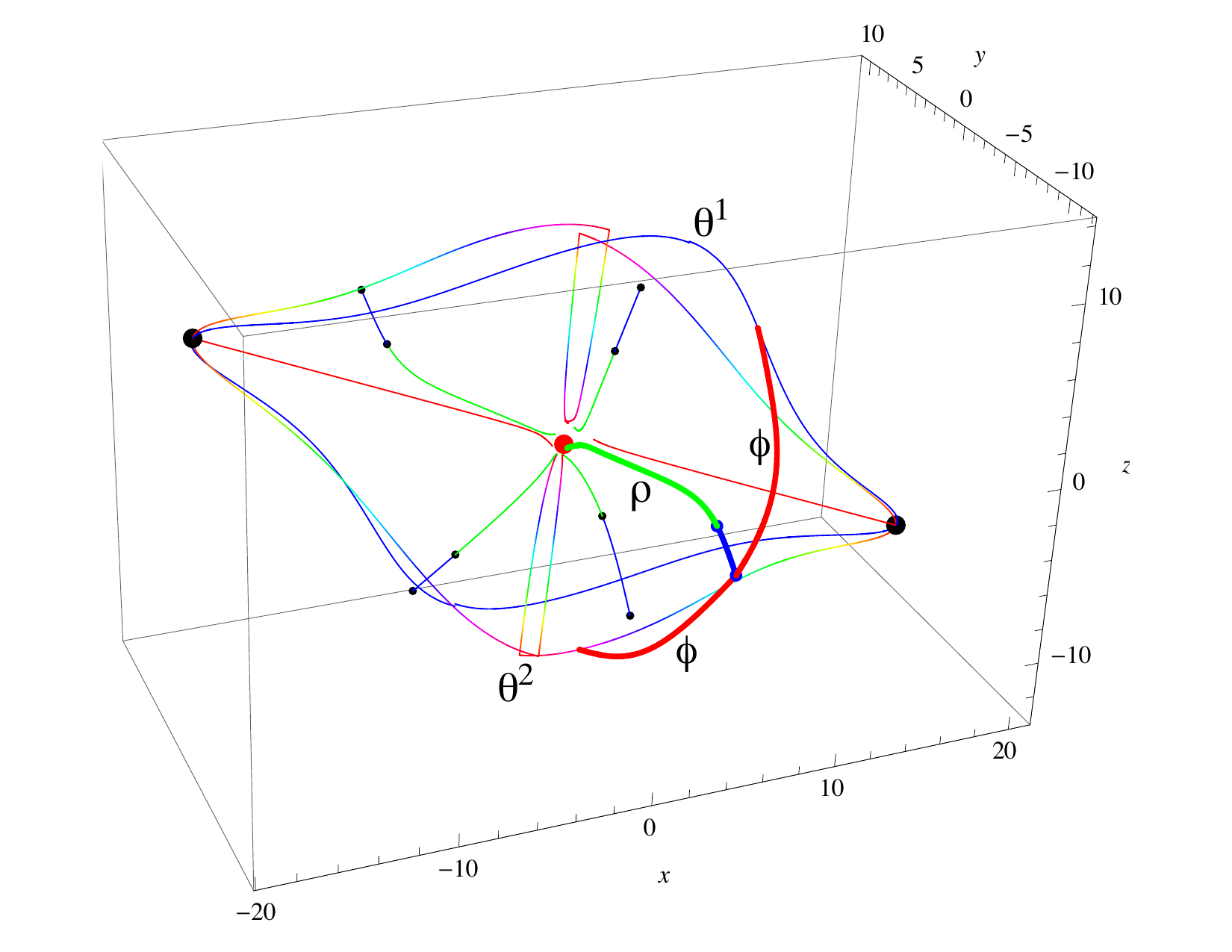}
\caption{The two curvilinear coordinate systems, $(\rho,\theta^{1},\phi)$ and $(\rho,\theta^{2},\phi)$, and the quantities needed to compute the ``reconstruction error'' for a single point.}
\label{CGError}
\end{center}
\end{figure}

Figure \ref{CGError} shows how to define two curvilinear coordinate systems, $(\rho,\theta^{1},\phi)$ and $(\rho,\theta^{2},\phi)$.  Five sample data points are plotted here, along with their ${\gamma}(t)$ coordinate curves.  For each data point, the curve inwards to the origin is shown in green, delineating the $\rho$ coordinate, and the curve outwards to the Frobenius integral manifold is shown in blue.  For the data point in the foreground, which is highlighted, we also see the geometric interpretation of $(\rho,\theta^{1},\phi)$ and $(\rho,\theta^{2},\phi)$.  In \emph{xyz} coordinates, this point is located at $(4.68576,-8.11895,-1.25188)$.  The value of the $\rho$ coordinate is 9.65892, which is the distance along the green curve, and the distance along the blue curve to the Frobenius integral manifold is 3.52365.  What are the values of the $\Theta$ coordinates? Using the $\theta^{1}$ coordinate axis, we compute the numerical approximation $(\rho,\theta^{1},\phi) = (9.65892,22.0456,14.1106)$.  This means that we proceed along the flow $\vec{\theta }_{s}^{\,1}({\bf x}) $ starting at ${\bf x} = (x_{0},y_{0},z_{0})$ and with $s = 22.0456$, until we reach the point $(7.71353, -5.10099, 7.52968)$ in \emph{xyz} coordinates.  We then proceed along the flow $\vec{\phi }_{t}({\bf x}) $ with ${\bf x} = (7.71353, -5.10099, 7.52968)$ and $t = 14.1106$ to the point $(4.1299,$ $-11.5028,-2.04734)$.  Note that the exact location of this point on the Frobenius integral manifold is $(4.1302, -11.5029, -2.04746)$.  The Euclidean distance along the $\phi$ coordinate curve is 13.3184.  (Recall again that the parametrizations of the transverse coordinate flows in \eqref{TransverseCurves} are not equivalent to Euclidean arc length, except along the geodesic coordinate axes.) Thus the reconstruction error from truncating $(\rho,\theta^{1},\phi)$ to $(\rho,\theta^{1})$ is:  
\begin{equation*}
13.3184 * \left( \frac{9.65892}{9.65892 + 3.52365} \right) \; = \; 9.75848
\end{equation*}
The other alternative is to use the $\theta^{2}$ coordinate axis, for which we compute the approximation $(\rho,\theta^{2},\phi) = (9.65892, 21.4622, 13.0936)$.  This means that we proceed along the flow $\vec{\theta }_{s}^{\,2}({\bf x}) $ starting at ${\bf x} = (x_{0},y_{0},z_{0})$ and with $s = 21.4622$, until we reach the point $(-0.271266,$ $ -3.58481,$ $ -9.85897)$ in \emph{xyz} coordinates.  We then proceed along the flow $\vec{\phi }_{t}({\bf x}) $ with ${\bf x} = (-0.271266, -3.58481, -9.85897)$ and $t = 13.0936$ to the point $(4.12554,-11.5019, -2.04706)$.  Note again that the exact location of this point on the Frobenius integral manifold is $(4.1302,$ $ -11.5029, -2.04746)$. The Euclidean distance along the $\phi$ coordinate curve in this case is 12.5779,  and the reconstruction error from truncating $(\rho,\theta^{2},\phi)$ to $(\rho,\theta^{2})$ is:  
\begin{equation*}
12.5779 * \left( \frac{9.65892}{9.65892 + 3.52365} \right) \; = \; 9.21587
\end{equation*}
Thus, for this one data point, the $(\rho,\theta^{2})$ encoding is slightly better than the $(\rho,\theta^{1})$ encoding.

Furthermore, for the majority of data points,  we see that the ranking goes the same way. On our sample of 1000 points, the RMS reconstruction error for the truncation from $(\rho,\theta^{1},\phi)$ to $(\rho,\theta^{1})$ is 5.9431, and the RMS reconstruction error for the truncation from $(\rho,\theta^{2},\phi)$ to $(\rho,\theta^{2})$ is 4.82787.

These calculations confirm our impressions from Figure \ref{CGCoordsWithData}, and they are consistent with the second hypothesis quoted from \cite{nipsRifaiDVBM11}:
\begin{quotation}
\begin{enumerate}

\item[1.]  $\ldots$

\item[2.]  The {\bf (unsupervised) manifold hypothesis}, according to which real world data presented in
high dimensional spaces is likely to concentrate in the vicinity of non-linear sub-manifolds
of much lower dimensionality $\dots$ [citations omitted] 

\item[3.]  $\ldots$

\end{enumerate}
\end{quotation}
Indeed, the only mismatch with our example is one of dimensionality:  By ``high dimensional'' we mean 3, and by ``much lower dimensionality'' we mean 2! We will address this issue in Section \ref{FutureWork}, below.

Despite this simplification, the curvilinear Gaussian example illustrates clearly the synergistic link between the probabilistic model and the geometric model in the theory of differential similarity.  The geodesic curves on the Frobenius integral manifold tend to follow the \emph{modes} of the probability distribution. 
First, the origin of the coordinate system is a point at which $\nabla U({\bf x}) = (0,0,0)$, which maximizes the probability density.  Second, to compute the principal axis, we are looking for a point with a minimal Riemannian distance for a fixed Euclidean distance, or a maximal Euclidean distance for a fixed Riemannian distance.  Under either formulation, this is an axis that maximizes probability. Third, for the directional coordinates, we are looking for a geodesic curve on the Frobenius integral manifold that covers a minimal Riemannian distance for a fixed angular Euclidean distance, or a maximal angular Euclidean distance for a fixed Riemannian distance.  Under either formulation, again, this is a curve that maximizes probability. Thus, in general, we are \emph{minimizing} dissimilarity and \emph{maximizing} probability.  This is the primary intuition behind the claim that we are constructing an ``optimal'' lower-dimensional coordinate system.  

\begin{figure}[htbp]
\begin{center}
\includegraphics[width=4.0in]{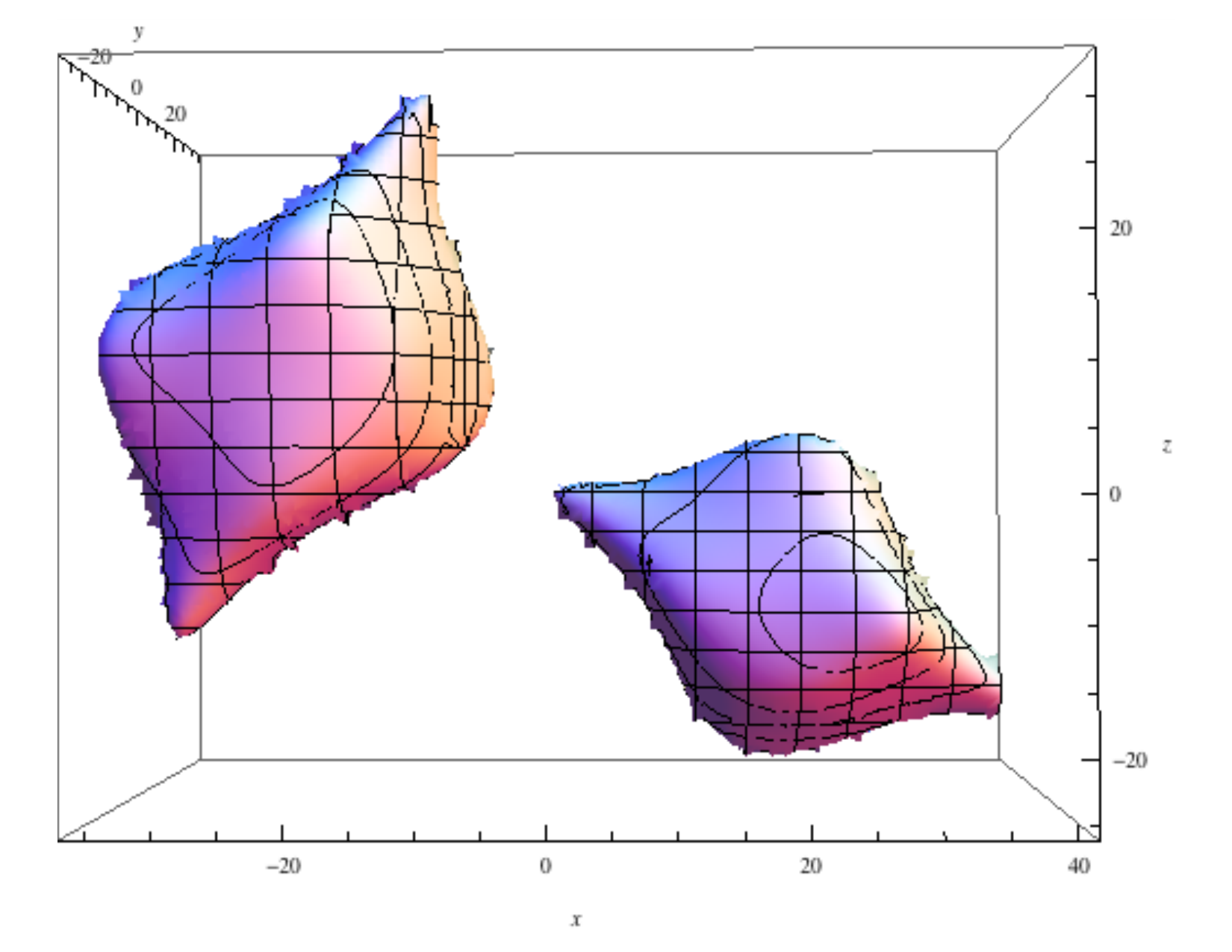}
\caption{A mixture of two curvilinear Gaussians, translated and rotated.}
\label{BiCurvilinearGaussian}
\end{center}
\end{figure}

\subsection{The Bimodal Curvilinear Gaussian.}
\label{Exp:BiCurvGauss}
    
Finally, we consider a bimodal case.  Figure \ref{BiCurvilinearGaussian} shows two copies of the curvilinear Gaussian defined in Section \ref{Exp:CurvGauss}.  One copy has been translated from $(0,0,0)$ to $(20,20,-10)$.  The other copy has been translated from $(0,0,0)$ to $(-20,-20,10)$ and rotated by $\pi / 2$ around a line parallel to the $y$-axis.  But the probability density is a \emph{mixture}.  If $U_{1}({\bf x})$ is the potential function for the first copy and $U_{2}({\bf x})$ is the potential function for the second copy, then the invariant probability density is given by:
\[
e^{2 U({\bf x})} \; \simeq \; p_{1} \, e^{\,2\,U_{1}({\bf x})}  +  p_{2} \, e^{\,2\,U_{2}({\bf x})},
\]
modulo an appropriate normalization factor. Figure \ref{BiCurvilinearGaussian} is actually showing the surface defined by the equation:
\[
e^{U_{1}(x,y,z)} + e^{U_{2}(x,y,z)} \;=\; 0.0001
 \]
The advantage of this representation lies in the fact that our calculations for each copy will be almost independent of each other.  Observe that the effective potential function for the mixture will be: 
\[
U({\bf x}) \; \simeq \; \frac{1}{2}  \log ( \, p_{1} \, e^{\,2\,U_{1}({\bf x})}  +  p_{2} \, e^{\,2\,U_{2}({\bf x})} \, )
\]
Thus the gradient of $U({\bf x})$ in a neighborhood of  $(20,20,-10)$ will be almost identical to the gradient of $U_{1}({\bf x})$ computed by itself, and the gradient of $U({\bf x})$ in a neighborhood of  $(-20,-20,10)$ will be almost identical to the gradient of $U_{2}({\bf x})$ computed by itself.
 
The mixture distribution thus provides a useful representation of \emph{clusters}.   Analyzing the situation, intuitively, in terms of our dissimilarity metric, the two clusters in Figure \ref{BiCurvilinearGaussian} will be exponentially far apart.  This picture is therefore consistent with the third hypothesis quoted from \cite{nipsRifaiDVBM11}:
 \begin{quotation}
\begin{enumerate}

\item[1.]  $\ldots$

\item[2.]  $\ldots$

\item[3.]  The {\bf manifold hypothesis for classification}, according to which points of different classes
are likely to concentrate along different sub-manifolds, separated by low density regions of
the input space.

\end{enumerate}
\end{quotation}

\section{Diffusion Coefficients and Dissimilarity Metrics}
\label{DiffuCoeffs&DissimMetrics}
\vspace{1ex}
 
Recall the main results from Section \ref{proto}:  We started with a diffusion process represented by an Ito stochastic differential equation, in Cartesian coordinates;  we transformed this into a Stratonovich equation in the coordinates ($\rho, \Theta$);  and we then converted this back into an Ito process  characterized by a differential operator with coefficients $ \alpha^{ij} (\rho, \Theta)$ and $ \beta^{i}(\rho, \Theta)$.  The one necessary ingredient was the Jacobian matrix of the coordinate transformation. 
 
As an illustration, let's try a brute force solution of these equations in the simple Gaussian case discussed in Section \ref{Exp:Gauss}.  The Jacobian is given by equation \eqref{Jgauss}.   For ease of reference, here is equation \eqref{dXtStrat}, rewritten for the three-dimensional coordinate system $(\rho, \theta,\phi)$:
 \begin{align}
\label{dXtStrat}
   dX(t) & \; = \; 
     \begin{pmatrix}
    \\
   {\mathbf \sigma}^{i}_{k}({\bf x}(\rho, \theta,\phi)) \\
    \\
    \end{pmatrix}
     \circ
      \begin{pmatrix}
         d\mathcal{B}_{1}(t)  \\
         d\mathcal{B}_{2}(t)  \\
         d\mathcal{B}_{3}(t)  \\
      \end{pmatrix}
      \; + \;    
          \begin{pmatrix}  
               \tilde{b}^{1}({\bf x}(\rho, \theta,\phi)) \\
              \tilde{b}^{2}({\bf x}(\rho, \theta,\phi))  \\
              \tilde{b}^{3}({\bf x}(\rho, \theta,\phi)) \\
          \end{pmatrix} 
          dt  \notag 
 \end{align}
For the moment, we will assume that  $\left( {\mathbf \sigma}^{i}_{k}({\bf x}(\rho, \theta,\phi)) \right)$ is an orthogonal transformation, but otherwise arbitrary.  Our procedure is to combine and solve equations \eqref{dXtStrat} and \eqref{dXtJacobian}, and then expand the result using Theorem \ref{Thm:L=sumAk+A0}.  When we do so, we discover that the ``sum of squares'' inside equation \eqref{L=sumAk+A0} yields an expression consisting of 2679 terms!  However, by using the fact that $\left( {\mathbf \sigma}^{i}_{k}({\bf x}(\rho, \theta,\phi)) \right)$ is an orthogonal transformation, we can eliminate all terms in which the factors $ {\mathbf \sigma}^{i}_{k} $ appear without derivatives.  Furthermore, all the terms that include derivatives of $ {\mathbf \sigma}^{i}_{k} $ are cancelled out by similar terms in the expansion of ${\bf A}_{0} \partial$ inside equation \eqref{L=sumAk+A0}.  The net result is equation \eqref{DiffusionCoeffs}, in the following form:
 \begin{equation}
%\label{DiffusionCoeffs}
 \mathcal{L}  \;=\; 
\frac{1}{2} \sum_{i,j=0}^{2} 
 \alpha^{ij} (\rho, \theta, \phi) \frac{\partial^{2}}{\partial u^{i} \partial u^{j}} 
\, + \,
 \sum_{i=0}^{2} \beta^{i}(\rho, \theta, \phi) \frac{\partial}{\partial u^{i} } \notag 
\end{equation}
where $u^{0} = \rho$,  $u^{1} = \theta$ and  $u^{2} = \phi$.  Thus, the exact choice we make for the transformation $\left( {\mathbf \sigma}^{i}_{k}({\bf x}(\rho, \theta,\phi)) \right)$ turns out to be irrelevant.  However, the diffusion coefficients  $ \alpha^{ij} (\rho, \theta, \phi)$ and the drift coefficients $ \beta^{i}(\rho, \theta, \phi)$ are still very complex, and they do not provide much insight into the structure of the solution, even in the simple Gaussian case.
 
 For more insight, let's separate the $\rho$ coordinate from the $\Theta$ coordinates.   The basic idea of the $(\rho, \Theta)$ coordinate system was to align the $\rho$ coordinate with the drift vector, $\nabla U({\bf x})$, so that the trajectory of our stochastic process in the direction of the $\Theta$ coordinates would be orthogonal to the drift.  The definition of our dissimilarity metric, $ \left( \,g_{ij}({\bf x})\, \right) $, also exhibited a strong separation between the $\rho$ coordinate and the $\Theta$ coordinates.  So there is a natural question here:  What is the relationship between the representation of our stochastic process in $\Theta$ coordinates and the $\Theta$ submatrix of $ \left( \,g_{ij}({\bf x})\, \right) $?
   
The answer is well known in the case of pure Brownian motion, without drift.  The earliest example is in \cite{zbMATH03323567} and \cite{ito1975}.  Stroock discovered that  if you project Brownian motion in ${\bf R}^{3}$ onto the surface of a sphere of radius $r$ centered at $(0,0,0)$, the infinitesimal generator of the resulting stochastic process, in spherical coordinates, $(r,\vartheta,\varphi)$, is:
\begin{equation*}
\mathcal{L} \;=\; \frac{1}{2} \, \frac{1}{r^{2}} \left(  \frac{\partial^{2}}{ {\partial \vartheta}^{2}}  \, + \,  
\frac{1}{\sin^{2} \vartheta} \, \frac{\partial^{2}}{ {\partial \varphi}^{2}}  \, + \,  
\frac{1}{\tan \vartheta} \, \frac{\partial}{\partial \vartheta} \right)
\end{equation*}
which is the spherical Laplacian divided by $2$.  This result can be generalized to an arbitrary Riemannian manifold, $\mathcal{M}$, embedded in ${\bf R}^{n}$.  For any $f \in C^{\infty}( \mathcal{M} ; {\bf R})$, the  \emph{Laplace-Beltrami} operator, $ \Delta_{\mathcal{M}} $, is defined by:
\begin{equation*}
 \Delta_{\mathcal{M}} \, f  \;=\; \mathrm{div}_{\mathcal{M}}  \left( \mathrm{grad}_{\mathcal{M}} \, f  \right)
\end{equation*}
in which the \emph{divergence}, $\mathrm{div}_{\mathcal{M}} $, and the \emph{gradient}, $\mathrm{grad}_{\mathcal{M}}$, can both be defined on $\mathcal{M}$ independently of a coordinate system.  
\begin{theorem}
\label{ProjectLaplaceBM}
Let $\mathcal{M}$ be an embedded submanifold of ${\bf R}^{n}$, and let $\Delta_{\mathcal{M}}$ be the Laplace-Beltrami operator on $\mathcal{M}$.  Then 
\begin{equation*}
\mathcal{L} \;=\;  \frac{1}{2} \Delta_{\mathcal{M}}
\end{equation*}
is the infinitesimal generator of a Brownian motion process in ${\bf R}^{n}$ that has been projected orthogonally onto $\mathcal{M}$.
\end{theorem}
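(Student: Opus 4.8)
The plan is to realize the ``Brownian motion projected orthogonally onto $\mathcal{M}$'' as a diffusion on $\mathcal{M}$ written in H\"{o}rmander form, read off its generator from Theorem \ref{Thm:L=sumAk+A0}, and then identify that generator with $\frac{1}{2}\Delta_{\mathcal{M}}$. Give $\mathcal{M}$ its induced Riemannian metric $g$, write $P_{\bf x}\colon{\bf R}^{n}\to T_{\bf x}\mathcal{M}$ for the orthogonal projection onto the tangent space, let ${\bf e}_{1},\ldots,{\bf e}_{n}$ be the standard basis of ${\bf R}^{n}$ regarded as constant vector fields, and put ${\bf A}_{\alpha}({\bf x}) = P_{\bf x}{\bf e}_{\alpha}$. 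In Stroock's framework the projected process is the solution of the Stratonovich equation $dX(t) = \left({\bf A}_{1}\vert\cdots\vert{\bf A}_{n}\right)\circ d\mathcal{B}(t)$ --- that is, equation \eqref{StratXtDefAk} with ${\bf A}_{0}=0$ --- obtained by projecting, column by column, the ``$\sigma = {\bf I}$'' of ordinary Brownian motion (for which the It\^{o} and Stratonovich forms agree because $\sigma$ is constant). Two remarks make this legitimate: each ${\bf A}_{\alpha}$ is tangent to $\mathcal{M}$, so a solution started on $\mathcal{M}$ remains on $\mathcal{M}$ --- precisely the advantage of the Stratonovich calculus noted after \eqref{StratFormula} --- and, since $\sum_{\alpha}\langle{\bf e}_{\alpha},u\rangle{\bf A}_{\alpha} = P_{\bf x}u = u$ for every $u\in T_{\bf x}\mathcal{M}$, the fields ${\bf A}_{1}\partial,\ldots,{\bf A}_{n}\partial$ span the tangent space, so the diffusion is nondegenerate on $\mathcal{M}$ in the sense of the discussion following \eqref{LdefHorm}.

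By Theorem \ref{Thm:L=sumAk+A0} with ${\bf A}_{0}=0$, the differential operator of this process is $\mathcal{L} = \frac{1}{2}\sum_{\alpha=1}^{n}\left({\bf A}_{\alpha}\partial\right)^{2}$, and this is a genuine operator on $C^{\infty}(\mathcal{M};{\bf R})$: for $f$ on $\mathcal{M}$ and any ambient extension, ${\bf A}_{\alpha}\partial f$ restricted to $\mathcal{M}$ equals $\langle{\bf A}_{\alpha},\mathrm{grad}_{\mathcal{M}}\,f\rangle$ and so does not see the extension, and iterating shows $\mathcal{L}f|_{\mathcal{M}}$ depends only on $f$. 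It then remains to prove $\sum_{\alpha}\left({\bf A}_{\alpha}\partial\right)^{2} = \Delta_{\mathcal{M}}$. I would start from the pointwise identity $X(Xf) = \mathrm{Hess}\,f(X,X) + (\nabla^{\mathcal{M}}_{X}X)f$, valid for any vector field $X$ on $\mathcal{M}$ (here $\mathrm{Hess}\,f = \nabla^{\mathcal{M}}df$), which on summation over $\alpha$ gives
\[
\sum_{\alpha}\left({\bf A}_{\alpha}\partial\right)^{2}f \;=\; \sum_{\alpha}\mathrm{Hess}\,f({\bf A}_{\alpha},{\bf A}_{\alpha}) \;+\; \Big(\sum_{\alpha}\nabla^{\mathcal{M}}_{{\bf A}_{\alpha}}{\bf A}_{\alpha}\Big)f ;
\]
the same split can be reached by expanding $\left({\bf A}_{\alpha}\partial\right)^{2}$ with \eqref{V^2} in ambient coordinates and sorting the result into second- and first-order parts. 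The first sum on the right is controlled by the tight-frame identity $\sum_{\alpha}{\bf A}_{\alpha}\otimes{\bf A}_{\alpha} = g^{-1}$ (as a symmetric $(2,0)$-tensor on $\mathcal{M}$): for tangent $u,v$ one has $\sum_{\alpha}\langle{\bf A}_{\alpha},u\rangle\langle{\bf A}_{\alpha},v\rangle = \sum_{\alpha}\langle{\bf e}_{\alpha},u\rangle\langle{\bf e}_{\alpha},v\rangle = \langle u,v\rangle$, since $P_{\bf x}$ fixes tangent vectors and $\{{\bf e}_{\alpha}\}$ is orthonormal, whence $\sum_{\alpha}\mathrm{Hess}\,f({\bf A}_{\alpha},{\bf A}_{\alpha}) = \mathrm{tr}_{g}\,\mathrm{Hess}\,f = \Delta_{\mathcal{M}}f$.

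The crux is the remaining term: I claim $\sum_{\alpha}\nabla^{\mathcal{M}}_{{\bf A}_{\alpha}}{\bf A}_{\alpha} = 0$. Using the Gauss formula $\nabla^{\mathcal{M}}_{X}Y = P(D_{X}Y)$, where $D$ is the flat ambient connection, together with the constancy of the ${\bf e}_{\alpha}$, one gets $\sum_{\alpha}\nabla^{\mathcal{M}}_{{\bf A}_{\alpha}}{\bf A}_{\alpha} = P\big(\sum_{\alpha}D_{{\bf A}_{\alpha}}(P{\bf e}_{\alpha})\big)$, whose ambient $i$-component is $\sum_{\alpha,j,l}P^{ij}P^{l\alpha}\,\partial_{l}P^{j\alpha}$; differentiating the idempotency relations $\sum_{\alpha}P^{l\alpha}P^{j\alpha}=P^{lj}$ and $\sum_{j}P^{ij}P^{j\alpha}=P^{i\alpha}$ (both from $P^{2}=P=P^{T}$) and substituting, this collapses to $c^{i}-c^{i}=0$ with $c^{i}=\sum_{\alpha,l}P^{l\alpha}\,\partial_{l}P^{i\alpha}$. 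Equivalently, $\sum_{\alpha}D_{{\bf A}_{\alpha}}{\bf A}_{\alpha}$ is purely normal to $\mathcal{M}$ (it is the mean-curvature vector of the embedding), so its tangential projection vanishes. Putting the three pieces together gives $\mathcal{L} = \frac{1}{2}\Delta_{\mathcal{M}}$, and the spherical example quoted before the theorem is the case $\mathcal{M} = r\,S^{2}\subset{\bf R}^{3}$. The main obstacle is exactly this last identity --- not the algebra with $P^{2}=P=P^{T}$, which is short, but the bookkeeping that separates intrinsic from ambient objects (extensions of $f$, the Gauss formula, the passage between $\left({\bf A}_{\alpha}\partial\right)^{2}$ and the covariant Hessian) cleanly enough that the computation is meaningful; once that scaffolding is in place the theorem falls out, and it coincides with Stroock's construction of Brownian motion on a Riemannian manifold referenced in \cite{stroock2000}.
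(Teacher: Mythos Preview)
Your proposal is correct and follows essentially the same approach as the paper: both write the projected process in H\"{o}rmander form as $\frac{1}{2}\sum_{\alpha}\left(\Pi_{\mathcal{M}}({\bf e}_{\alpha})\,\partial\right)^{2}$ with no ${\bf A}_{0}$ term and then identify this operator with $\frac{1}{2}\Delta_{\mathcal{M}}$. The paper simply cites Stroock \cite{stroock2000}, Section~4.2.1, and Hsu \cite{hsu2002stochastic}, Theorem~3.1.4, for that identification, whereas you carry it out explicitly via the covariant Hessian decomposition and the vanishing of $\sum_{\alpha}\nabla^{\mathcal{M}}_{{\bf A}_{\alpha}}{\bf A}_{\alpha}$; your idempotency computation for the latter is exactly the content behind those references.
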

\begin{proof}
The proof starts by showing that $\mathcal{L}$ can always be written in H\"{o}rmander form without the ${\bf V}_{0} \partial$ term.  In particular, we can write:
\begin{equation*}
\mathcal{L} \;=\;  \frac{1}{2} \Delta_{\mathcal{M}}  \;=\;  \frac{1}{2} \sum_{k = 1}^{n} \left( \Pi_{\mathcal{M}}( {\bf e}_{k} \partial ) \,\partial \right)^{2}
\end{equation*}
where $({\bf e}_{1}, {\bf e}_{2}, \ldots , {\bf e}_{n})$ is an orthonormal basis for ${\bf R}^{n}$ and $\Pi_{\mathcal{M}}$ is the orthogonal projection operator from the tangent bundle in ${\bf R}^{n}$ onto the tangent bundle in $\mathcal{M}$.  See Section 4.2.1 of \cite{stroock2000}  or Theorem 3.1.4 in \cite{hsu2002stochastic}.  From this result, it follows that we can construct a diffusion process on $\mathcal{M}$ whose increments are precisely the projections, under $\Pi_{\mathcal{M}}$, of the increments of a Brownian motion process in ${\bf R}^{n}$, and whose infinitesimal generator is $\mathcal{L} $.  For the details, see Theorem 4.37 in \cite{stroock2000}.
\end{proof}
\noindent
The diffusion process constructed in Theorem  \ref{ProjectLaplaceBM} is known as \emph{Brownian  motion on $\mathcal{M}$}. 
 
Let us now analyze the stochastic process defined by equation \eqref{wCauchy}, or \eqref{ItoXtDef}, or \eqref{StratXtDef}, projected onto the $\rho$ and $\Theta$ coordinates separately.  To simplify the calculations, we will initially focus our attention on the simple Gaussian case,  in which $\nabla U(x,y,z) = (- a x, - b y, - c z)$, and we will start with a construction borrowed from \cite{zbMATH03323567} and \cite{ito1975}, but adapted to match this example.  Consider the following matrix:
\begin{align}
 \begin{pmatrix}
    \\
   {\pi}^{i}_{j}(x,y,z) \\
    \\
\end{pmatrix}
 \; = \; 
 &\left(\begin{array}{ccc}1 & 0 & 0 \\0 & 1 & 0 \\0 & 0 & 1\end{array}\right) \; - \; \notag \\
 &\frac{1}{{| \nabla U(x,y,z) |}^{2}} \left(\begin{array}{c}-a \,x \\-b \,y \\-c \,z\end{array}\right) \left(\begin{array}{ccc}-a \,x & -b \,y & -c \,z\end{array}\right)  \notag  
 \end{align}
 in which the product in the second line should be interpreted as the multiplication of a $3 \times 1$ matrix times a $1 \times 3$ matrix, yielding a $3 \times 3$ matrix.  It is easy to check that $ \left( {\pi}^{i}_{j}({\bf x}) \right) $ is idempotent:
\begin{equation*}
\begin{pmatrix}
    \\
   {\pi}^{i}_{k}(x,y,z) \\
    \\
\end{pmatrix}
\;
\begin{pmatrix}
    \\
   {\pi}^{k}_{j}(x,y,z) \\
    \\
\end{pmatrix}
 \; = \; 
\begin{pmatrix}
    \\
   {\pi}^{i}_{j}(x,y,z) \\
    \\
\end{pmatrix}
\end{equation*}
and that it maps the vector $\nabla U(x,y,z)$ onto the origin:
\begin{equation*}
\begin{pmatrix}
    \\
   {\pi}^{i}_{j}(x,y,z) \\
    \\
\end{pmatrix}
\;
\left(\begin{array}{c}-a x \\-b y \\-c z\end{array}\right)
 \; = \; 
 \left(\begin{array}{c}0 \\0 \\0\end{array}\right)
\end{equation*}
Thus $ \left( {\pi}^{i}_{j}({\bf x}) \right) $ is a \emph{projection} onto the plane tangent to the integral manifold at $(x,y,z)$.
 
We now apply this projection operator to the right-hand side of equation \eqref{dXtStrat}, as rewritten above.  First, we set $\sigma$ equal to the identity matrix, so that $ \tilde {\bf b} = {\bf b} = \nabla U$.  (See the discussion following Lemma \ref{convertItoStrat} in Section \ref{MappingDiffusion}.)  Then the projection operator $ \left( {\pi}^{i}_{j}({\bf x}) \right) $ annihilates the second term in \eqref{dXtStrat}, and we are left with:
 \begin{align}
\label{dXtProj}
   dX(t) & \; = \; 
     \begin{pmatrix}
    \\
   {\pi}^{i}_{j}({\bf x}(\rho, \theta,\phi)) \\
    \\
    \end{pmatrix}
     \circ
      \begin{pmatrix}
         d\mathcal{B}_{1}(t)  \\
         d\mathcal{B}_{2}(t)  \\
         d\mathcal{B}_{3}(t)  \\
      \end{pmatrix}
 \end{align}
We now combine equation \eqref{dXtProj} with equation \eqref{dXtJacobian}, and solve this system of equations to obtain:
 \begin{align}
\label{SolveGaussianXProj}
       \begin{pmatrix}
         dX_{\rho}(t)  \\
         dX_{\theta}(t)  \\
         dX_{\phi}(t)  \\
      \end{pmatrix} 
      \; = \; 
      &
       \begin{pmatrix}
   \\
    \mathbf{ J }(\rho, \theta,\phi) \\
    \\
   \end{pmatrix} ^{-1}
     \begin{pmatrix}
    \\
   {\mathbf \pi}^{i}_{j}({\bf x}(\rho, \theta,\phi)) \\
    \\
    \end{pmatrix}
     \circ
      \begin{pmatrix}
         d\mathcal{B}_{1}(t)  \\
         d\mathcal{B}_{2}(t)  \\
         d\mathcal{B}_{3}(t)  \\
     \end{pmatrix} 
 \end{align}
 As a verification that our calculations are on the right track, we note that the multiplication of the two matrices on the right-hand side of \eqref{SolveGaussianXProj} produces a matrix in which the first row is identically zero.  This means that $ dX_{\rho}(t) = 0 $, which is exactly the result that we want.
  
We now continue the procedure outlined in Section \ref{proto}, applying Theorem \ref{Thm:L=sumAk+A0} to equation \eqref{SolveGaussianXProj}, and expanding the ``sum of squares'' inside \eqref{L=sumAk+A0}.  This allows us to compute the coefficients 
$ \alpha^{ij} (\rho, \theta, \phi)$  and  $\beta^{i}(\rho, \theta, \phi)$ in  \eqref{DiffusionCoeffs}.   It turns out that $ \alpha^{ij} (\rho, \theta, \phi) = 0 $ whenever $i = 0$ or $j = 0$, which is what we would expect.   For the remaining diffusion coefficients, we compute:
  \begin{align}
&\alpha^{11}(\rho, \theta, \phi)  \; = \; \frac{ a^{2} \, x(\rho, \theta,\phi)^{2} \;+\; c^{2} \, z(\rho, \theta,\phi)^{2}}{ a^{2} \, x(\rho, \theta,\phi)^{2} \, {| \nabla U |}^{2} }  \notag \\[2ex]
&\alpha^{22}(\rho, \theta, \phi)  \; = \; \frac{ a^{2} \, x(\rho, \theta,\phi)^{2} \;+\; b^{2} \, y(\rho, \theta,\phi)^{2}}{ a^{2} \, x(\rho, \theta,\phi)^{2} \, {| \nabla U |}^{2} }  \notag \\[2ex]
&\alpha^{12}(\rho, \theta, \phi) \; = \;  \alpha^{21}(\rho, \theta, \phi)  \; = \;  - \,\frac{ b \, c \, y(\rho, \theta,\phi) \, z(\rho, \theta,\phi) }{ a^{2} \, x(\rho, \theta,\phi)^{2} \, {| \nabla U |}^{2} }  \notag
 \end{align}
 Alternatively, we can write  the nonzero diffusion coefficients as a $ 2 \times 2 $ matrix:
 \begin{align}
&\left(\begin{array}{c} \alpha^{ij} (\rho, \theta, \phi) \end{array}\right) \;=\; \frac{1}{a^{2} \, x(\rho, \theta,\phi)^{2}} \;\times\; \notag \\[2ex]
&\left(  \left(\begin{array}{cc}1 & 0 \\0 & 1\end{array}\right)  - \frac{1}{{| \nabla U |}^{2}}   \left(\begin{array}{c} - b \,y(\rho, \theta,\phi) \\ - c \,z(\rho, \theta,\phi) \end{array}\right) \left(\begin{array}{cc} - b \,y(\rho, \theta,\phi) & - c \,z(\rho, \theta,\phi) \end{array}\right)  \right) \notag
\end{align}
It turns out also that the the drift coefficient $\beta^{0}(\rho, \theta, \phi) = 0$, as we would expect, and for the other drift coefficients we compute:
 \begin{align}
\beta^{1}(\rho, \theta, \phi)  \; = \;  &\frac{ b \, y(\rho, \theta,\phi)}{2 \,a \, x(\rho, \theta,\phi) \, {| \nabla U |}^{2} }\; \times \notag  \\ 
&\left(  ( b + c )
\; + \; \frac{1}{{| \nabla U |}^{2}}  \left(\begin{array}{c} a^{2} \, x(\rho, \theta,\phi) \\ b^{2} \, y(\rho, \theta,\phi) \\ c^{2} \, z(\rho, \theta,\phi) \end{array}\right) \cdot \nabla U \right)  
\notag  \\[2ex] 
\beta^{2}(\rho, \theta, \phi)  \; = \;  &\frac{ c \, z(\rho, \theta,\phi)}{2 \,a \, x(\rho, \theta,\phi) \, {| \nabla U |}^{2} }\; \times \notag  \\ 
&\left(  ( b + c )
\; + \; \frac{1}{{| \nabla U |}^{2}}  \left(\begin{array}{c} a^{2} \, x(\rho, \theta,\phi) \\ b^{2} \, y(\rho, \theta,\phi) \\ c^{2} \, z(\rho, \theta,\phi) \end{array}\right) \cdot \nabla U \right)  
\notag  
\end{align}
Keep in mind that these are the coefficients for the first-order terms  $\partial / \partial \theta$ and  $\partial / \partial \phi$.  

For comparison, we will now compute the Laplace-Beltrami operator for the simple Gaussian case, using our Riemannian dissimilarity metric,  $g_{ij}(\rho, \theta, \phi) =  g_{ij}({\bf x}(\rho, \theta, \phi))$, on the two-dimensional integral manifold given by the Theorem of Frobenius.  In a local coordinate system, the Laplace-Beltrami operator is usually written as follows:
\begin{equation*}
 \Delta_{\mathcal{M}} \, f  \;=\;  \frac{1}{\sqrt{G}} \,  \sum_{j=1}^{n}  \frac{ \partial }{ \partial u^{j}} 
 \left(  \sqrt{G} \, \sum_{i=1}^{n} g^{ij}({\bf u})  \frac{ \partial f }{ \partial u^{i}} \right)
\end{equation*}
where $G$ is the determinant of the matrix $\left( \,g_{ij}({\bf u})\, \right)$ and $\left( \,g^{ij}({\bf u})\, \right)$ is its inverse.  Alternatively, we can expand the expression inside the parentheses, and write $\mathcal{L}$ in the form of equation \eqref{DiffusionCoeffs}:
\begin{align}
\mathcal{L} \;=\;  \frac{1}{2} \Delta_{\mathcal{M}} &\;=\; \frac{1}{2} \, \sum_{i,j=1}^{n} 
 g^{ij} ({\bf u}) \frac{\partial^{2}}{\partial u^{i} \partial u^{j}} 
\, + \,
 \sum_{i=1}^{n} h^{i}({\bf u}) \frac{\partial}{\partial u^{i} },  \notag \\[1ex]
\mathrm{with} \;\; h^{i}({\bf u})  &\;=\; \frac{1}{2  \sqrt{G} } \sum_{j=1}^{n} \frac{\partial \left(  \sqrt{G} \,  g^{ij} ({\bf u}) \right)}{ \partial u^{j}}
\notag
\end{align}
When we do the calculations in the simple Gaussian case, with $n=2$, we discover that the diffusion coefficients are identical: 
\begin{align}
&\left(\begin{array}{c} \alpha^{ij} (\rho, \theta, \phi) \end{array}\right) \;=\; \left(\begin{array}{c} g^{ij} (\rho, \theta, \phi) \end{array}\right)
\notag
\end{align}
and the drift coefficients are similar, but not identical:
 \begin{align}
h^{1}(\rho, \theta, \phi)  \; = \;  &\frac{ b \, y(\rho, \theta,\phi)}{2 \,a \, x(\rho, \theta,\phi) \, {| \nabla U |}^{2} }\; \times \notag  \\ 
&\left(  ( a + b + c )
\; + \; \frac{1}{{| \nabla U |}^{2}}  \left(\begin{array}{c} a^{2} \, x(\rho, \theta,\phi) \\ b^{2} \, y(\rho, \theta,\phi) \\ c^{2} \, z(\rho, \theta,\phi) \end{array}\right) \cdot \nabla U \right)  
\notag  \\[2ex] 
h^{2}(\rho, \theta, \phi)  \; = \;  &\frac{ c \, z(\rho, \theta,\phi)}{2 \,a \, x(\rho, \theta,\phi) \, {| \nabla U |}^{2} }\; \times \notag  \\ 
&\left(  ( a + b + c )
\; + \; \frac{1}{{| \nabla U |}^{2}}  \left(\begin{array}{c} a^{2} \, x(\rho, \theta,\phi) \\ b^{2} \, y(\rho, \theta,\phi) \\ c^{2} \, z(\rho, \theta,\phi) \end{array}\right) \cdot \nabla U \right)  
\notag  
\end{align}
In fact, there is a simple relationship between the coefficients $\beta^{i}(\rho, \theta, \phi)$ and $h^{i}(\rho, \theta, \phi)$:
 \begin{align}
\label{DeltaDrifteqn} 
&\beta^{1}(\rho, \theta, \phi)  - h^{1}(\rho, \theta, \phi) \; = \;  \,-\, \frac{ b \, y(\rho, \theta,\phi)}{2 \, x(\rho, \theta,\phi) \, {| \nabla U |}^{2} }
\\
&\beta^{2}(\rho, \theta, \phi) - h^{2}(\rho, \theta, \phi) \; = \; \,-\, \frac{ c \, z(\rho, \theta,\phi)}{2 \, x(\rho, \theta,\phi) \, {| \nabla U |}^{2} }
\notag  
\end{align}
Is there an explanation for these results?
 
The key is to recognize that the stochastic process defined by equation \eqref{wCauchy}, or \eqref{ItoXtDef}, or \eqref{StratXtDef}, is \emph{not} Brownian motion.  Brownian motion in ${\bf R}^{n}$ dissipates, and does not generate an invariant probability measure.   Thus the projection of Brownian motion onto a Riemannian manifold, $\mathcal{M}$, would dissipate as well.  But the stochastic process defined by equation \eqref{wCauchy}, when projected onto the manifold, $\mathcal{M}$, would not dissipate, in general.  This difference must be reflected in the drift coefficients for $\partial / \partial \theta$ and  $\partial / \partial \phi$, as shown by equation \eqref{DeltaDrifteqn}.  

\begin{figure}[htbp]
\begin{center}
\includegraphics[width=4.0in]{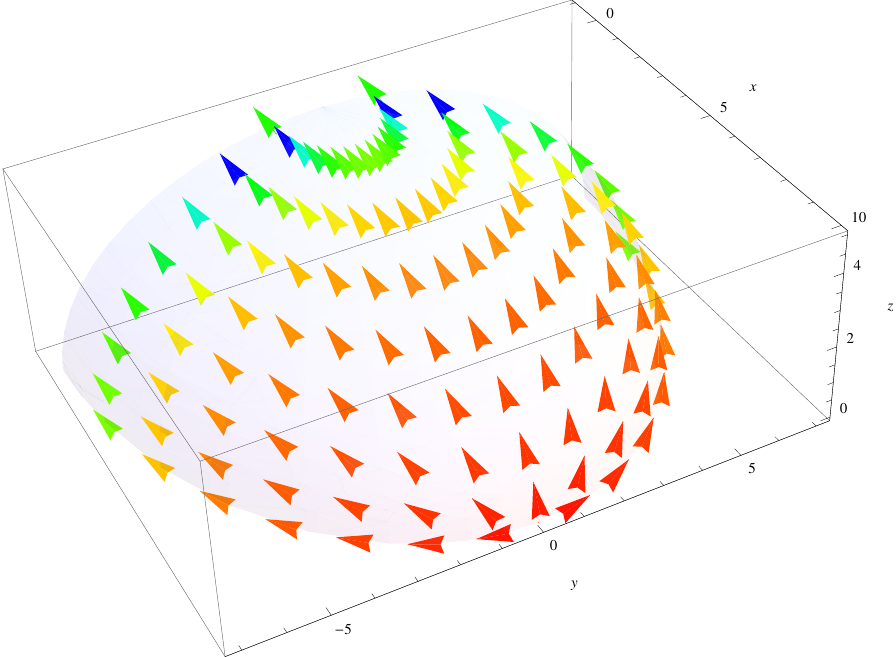}
\caption{The drift correction vector field for a Gaussian diffusion.}
\label{GaussianDeltaDrift}
\end{center}
\end{figure}

Figure \ref{GaussianDeltaDrift} shows the ``drift correction vector field'' generated by equation \eqref{DeltaDrifteqn} on one quadrant of the integral manifold through $(10,0,0)$, for the simple Gaussian case.   The magnitude of the vector field is coded by color, with red indicating that the length of the vector is near zero.  Keep in mind that we are looking at the \emph{difference} between the two vector fields, $\left( \,\beta^{i}(\rho, \theta, \phi)\, \right) $ and $ \left( \,h^{i}(\rho, \theta, \phi)\, \right) $, as defined by equation \eqref{DeltaDrifteqn}.  The vector fields themselves are oriented (approximately) in the opposite direction, but they have different magnitudes.

We have presented detailed calculations for the simple Gaussian case, so that our results would be easy to visualize.  But the same calculations work for the general case, $ \nabla U({\bf x}) = ( P({\bf x}), Q({\bf x}), R({\bf x}) ) $.  The projection operator, $ \left( \,{\pi}^{i}_{j}({\bf x}(\rho, \theta, \phi)) \, \right) $, and the Jacobian matrix,  $ \left( \, \mathbf{ J }(\rho, \theta, \phi) \, \right) $, can be defined in the same way, and the computational procedure from Section \ref{proto}, applying Theorem \ref{Thm:L=sumAk+A0} and expanding equation \eqref{L=sumAk+A0}, still goes through.  The expansion of the Laplace-Beltrami operator for the general dissimilarity metric,  $g_{ij}(\rho, \theta, \phi) =  g_{ij}({\bf x}(\rho, \theta, \phi))$, also goes through.  We end up, again, with diffusion coefficients that are identical:
\begin{align}
&\left(\begin{array}{c} \alpha^{ij} (\rho, \theta, \phi) \end{array}\right) \;=\; \left(\begin{array}{c} g^{ij} (\rho, \theta, \phi) \end{array}\right) \;=\; 
\notag \\[2ex]
& \frac{1}{P^{2}(\,{\bf x}\,)} \;\times\;  \left(  \left(\begin{array}{cc}1 & 0 \\0 & 1\end{array}\right)  - 
\frac{1}{{| \nabla U |}^{2}}   \left(\begin{array}{c} Q(\,{\bf x}\,) \\ R(\,{\bf x}\,) \end{array}\right) \left(\begin{array}{cc} Q(\,{\bf x}\,) & R(\,{\bf x}\,)  \end{array}\right)  \right) \notag
\end{align}
%\vspace{1ex}
\noindent
and drift coefficients that differ by a single, but more complex,  term:
\begin{align}
&\beta^{1}(\rho, \theta, \phi)  \,-\, h^{1}(\rho, \theta, \phi) \; = \;  \notag \\[2ex]
& \,-\,\frac{1}{2 \, P^{2} \, {| \nabla U |}^{2}} \;
 \left( P \left(  P \, \frac{\partial Q}{\partial x} \,-\, Q \, \frac{\partial P}{\partial x} \right) + R \left(  R \, \frac{\partial Q}{\partial x} - Q \, \frac{\partial R}{\partial x}  \right) \right)
\notag  
\end{align}
 \begin{align}
&\beta^{2}(\rho, \theta, \phi)  \,-\, h^{2}(\rho, \theta, \phi) \; = \;   \notag \\[2ex]
& \,-\,\frac{1}{2 \, P^{2} \, {| \nabla U |}^{2}} \;
\left( P \left(  P \, \frac{\partial R}{\partial x} \,-\, R \, \frac{\partial P}{\partial x} \right) + Q \left(  Q \, \frac{\partial R}{\partial x} - R \, \frac{\partial Q}{\partial x}  \right) \right)
\notag  
\end{align}

\vspace{1ex}
\noindent
Note that $ {\partial P} /  {\partial x}$ is the only partial derivative in these drift correction equations which is nonzero in the case $\nabla U(x,y,z) = (- a x, - b y, - c z)$.   Thus, for the simple Gaussian case, we can easily verify that the coefficients of the drift correction vector field reduce to the two terms:
\begin{align}
 - \frac{ b \, y(\rho, \theta,\phi)}{2 \, x(\rho, \theta,\phi) \, {| \nabla U |}^{2} } 
 \hspace{1em} {\rm and} \hspace{1em} 
  - \frac{ c \, z(\rho, \theta,\phi)}{2 \, x(\rho, \theta,\phi) \, {| \nabla U |}^{2} }
\notag  
\end{align}
in agreement with equation  \eqref{DeltaDrifteqn}.

\begin{figure}[htbp]
\begin{center}
\includegraphics[width=4.0in]{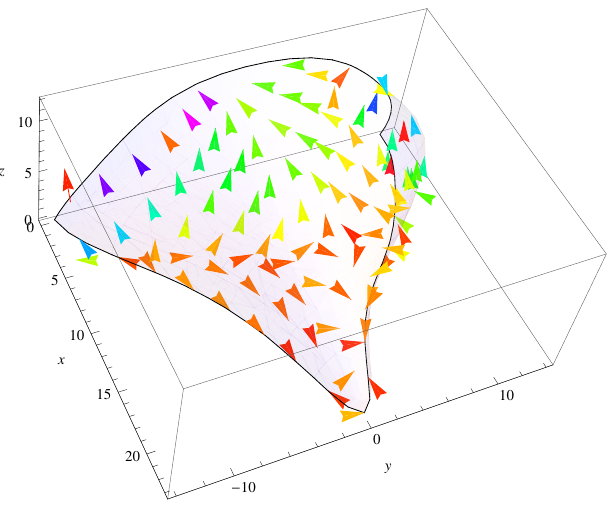}
\caption{The drift correction vector field for a curvilinear Gaussian diffusion.}
\label{CurvGaussianDeltaDrift}
\end{center}
\end{figure}
 
Figure \ref{CurvGaussianDeltaDrift}  is the analogue of Figure \ref{GaussianDeltaDrift} for the curvilinear Gaussian diffusion, using the general equation for the difference of the drift coefficients.  In this plot, the positive \emph{x}-axis has been aligned with the principal axis, and the positive \emph{y}-axis has been aligned with the maximal eigenvector $\xi_{1}({\bf x})$ displaced through the angle $\alpha = 0.114166$. Thus the \emph{y}-axis lines up with the $\theta^{1}$ coordinate curve in Figure \ref{CGCoordsWithEig}, and the drift correction vector field gives some sense of why the clustering of data points in Figure \ref{CGCoordsWithData} occurs.

\section{Future Work}
\label{FutureWork}
\vspace{1ex}

The theory of differential similarity combines a stochastic model with a geometric model, and it works because there is a common mathematical object in both models:  the gradient, $\nabla U({\bf x})$, of a potential function, $U({\bf x})$.  In the stochastic model, $\nabla U({\bf x})$ is the drift vector, which guarantees the existence of an invariant probability measure.  In the geometric model, $\nabla U({\bf x})$ guarantees the existence of an orthogonal integral manifold.  We have seen, in Section \ref{DiffuCoeffs&DissimMetrics}, that there is a theoretical connection between these two models, in which $\nabla U({\bf x})$ plays a crucial role, and we have seen the practical consequences of this connection in the computational examples in Sections \ref{Exp:Gauss} and \ref{Exp:CurvGauss}.
 
Perhaps the most striking result of this work is the distribution of data points in Figure \ref{CGCoordsWithData} in Section \ref{Exp:CurvGauss}.  The $\theta^{1}$ and $\theta^{2}$ coordinate curves were defined using only the geometric model, but our analysis of the reconstruction error shows that projection of the data onto the $\theta^{2}$ subspace has better statistical properties than projection onto the $\theta^{1}$ subspace. Thus the link between the stochastic model and the geometric model has computational implications.
 
The main deficiency in the theory, as presented in this paper, is the restriction of the geometric model to the three-dimensional case.   We imposed this restriction to simplify the calculations, and to make it easy to visualize the examples in \emph{Mathematica}.  But the theory is not inherently limited to three dimensions.  Theorem \ref{2DIntManifold} in Section \ref{Exp:IntManifolds} was written using the vector cross product and the ``curl,'' which is a three-dimensional concept, but it is actually a special case of a general result in ${\bf R}^{n}$ which follows from the dual version of the Theorem of Frobenius, expressed in terms of differential forms.  It follows that ${\bf V} \partial$ and ${\bf W} \partial$, the basis vectors for the tangent subbundle in ${\bf R}^{3}$, can be generalized to ${\bf R}^{n}$.  

Our current work extends the theory of differential similarity in three ways:

First, if $\nabla U({\bf x}) \,=\, (\,  P_{0}({\bf x}) ,  P_{1}({\bf x}) , \ldots , P_{n - 1}({\bf x}) \,)$, we define the basis vectors as follows:
 \begin{align}
\nabla U({\bf x}) \; &= \; ( \hspace{1em} P_{0}({\bf x}) ,& P_{1}({\bf x}) ,& &P_{2}({\bf x}) ,& &\ldots ,& &P_{n - 2}({\bf x}) ,& &P_{n - 1}({\bf x}) \;) \notag \\
 {\bf V}_{1}({\bf x}) \; &= \; (\; - P_{1}({\bf x}) ,& P_{0}({\bf x}) ,& &0 ,& &\ldots ,& &0 ,& &0 \;) \notag \\
 {\bf V}_{2}({\bf x}) \; &= \; (\; - P_{2}({\bf x}) ,& 0,& &P_{0}({\bf x})  ,& &\ldots ,& &0 ,& &0 \;) \notag \\
 & \ldots & & & & & & & & & \notag \\
 {\bf V}_{n - 2}({\bf x}) \; &= \; (\; - P_{n - 2}({\bf x}) ,& 0 ,& &0 ,& &\ldots ,& &P_{0}({\bf x}) ,& &0 \;) \notag \\
 {\bf V}_{n - 1}({\bf x}) \; &= \; (\; - P_{n - 1}({\bf x}) ,& 0 ,& &0 ,& &\ldots ,& &0 ,& &P_{0}({\bf x}) \;) \notag
 \end{align}
It is straightforward to verify that $\nabla U({\bf x})$ is orthogonal to each $  {\bf V}_{i}({\bf x}) $, and that the tangent subbundle spanned by $ \left\{ {\bf V}_{i} \partial =  {\bf V}_{i}({\bf x}) / P_{0}({\bf x})\right\} $ satisfies the Frobenius integrability conditions.  For the Riemannian dissimilarity metric, the obvious generalization is to define a metric tensor on all of ${\bf R}^{n}$, using the inner products of  $\nabla U({\bf x})$, $ {\bf V}_{1}({\bf x})$,  ${\bf V}_{2}({\bf x})$, $\ldots$ ,  and ${\bf V}_{n - 1}({\bf x}) $.  Thus:
\begin{align}
&
 \begin{pmatrix}
    \\
   {g}_{i,j}({\bf x}) \\
    \\
\end{pmatrix}
 \; = \;  \notag \\
 \notag \\
 &
\left(\begin{array}{cccccc}  | \nabla U |^{2}  & 0 & 0 & \ldots & 0 & 0 \\[1ex]
0 & P_{0}^{2} + P_{1}^{2} & P_{2}P_{1} & \ldots & P_{n-2}P_{1} & P_{n-1}P_{1} \\[1ex]
0 & P_{1}P_{2} & P_{0}^{2} + P_{2}^{2} & \ldots & P_{n-2}P_{2} & P_{n-1}P_{2} \\[1ex]
 \ldots & & & & & \ldots \\[1ex]
0 & P_{1}P_{n-2} & P_{2}P_{n-2}  & \ldots & P_{0}^{2} + P_{n-2}^{2}& P_{n-1}P_{n-2} \\[1ex] 
0 & P_{1}P_{n-1} & P_{2}P_{n-1} & \ldots & P_{n-2}P_{n-1} & P_{0}^{2} + P_{n-1}^{2} \\[1ex]
\end{array}\right)
\notag
\end{align}
Note that we can recover the three-dimensional case by restricting these definitions to $P_{0}({\bf x})$, $P_{1}({\bf x})$, $P_{2}({\bf x})$.

Second, when we diagonalize the Riemannian dissimilarity matrix in $n$ dimensions, we discover that there are only two distinct eigenvalues. The largest eigenvalue has multiplicity $2$: $\lambda_{0} = \lambda_{1} =  | \nabla U |^{2}$, with corresponding eigenvectors: 
\begin{align}
\xi_{0} =\left(\begin{array}{c} 1 \\ 0 \\ 0 \\ \cdots \\ 0 \\ 0 \end{array}\right) \;\; \mbox{\rm and} \;\;
\xi_{1} =\left(\begin{array}{c} 0 \\ P_{1} \\ P_{2} \\ \cdots \\ P_{n-2} \\ P_{n-1} \end{array}\right) \;
\notag 
\end{align} 
The smallest eigenvalue has multiplicity $n-2$: $\lambda_{2} = \lambda_{3} = \ldots = \lambda_{n-2} = \lambda_{n-1} =  P_{0}^{2} $, with corresponding eigenvectors $\xi_{2}, \;\xi_{3}, \;\ldots , \;\xi_{n-2}, \;\xi_{n-1}$, as follows:
\begin{align}
\left(\begin{array}{c} 0 \\ -P_{2} \\ P_{1} \\ 0 \\ \cdots \\ 0 \\ 0 \\ \end{array}\right) , \;
\left(\begin{array}{c} 0 \\ -P_{3} \\ 0 \\ P_{1} \\ \cdots \\ 0 \\ 0 \\ \end{array}\right) , \;
\ldots \; , \;
\left(\begin{array}{c} 0 \\ -P_{n-2} \\ 0 \\ 0 \\ \cdots \\  P_{1} \\ 0 \\ \end{array}\right) , \;
\left(\begin{array}{c} 0 \\ -P_{n-1} \\ 0 \\ 0 \\ \cdots \\  0 \\ P_{1} \\ \end{array}\right) 
\notag 
\end{align} 
We still face the problem that we addressed in Sections \ref{Exp:Gauss} and \ref{Exp:CurvGauss}:  How to convert a locally optimal coordinate system (based on infinitesimal eigenvectors) into a globally optimal solution (based on geodesic curves over finite distances)?  But the rich structure of the eigenvectors in the \emph{n}-dimensional case suggests a somewhat different strategy.

Third, to estimate $\nabla U({\bf x})$ from sample data in a high-dimensional Euclidean space, we borrow a technique from the literature on the \emph{mean shift algorithm} \cite{Fukunaga&Hostetler:1975} \cite{Cheng:1995} \cite{Comaniciu&Meer:2002}.  Consider a \emph{kernel density estimator} with a Gaussian kernel:
\begin{equation*}
K(\mathbf{s}_{k}, \mathbf{x}) = \exp( - \beta \; \| \mathbf{s}_{k} - \mathbf{x} \|^{2}),
\end{equation*}
in which $\mathbf{s}_{k}$ is a sample data point and $\beta$ is a smoothing parameter.  We can approximate the probability density by taking the average over these kernels:
\begin{equation*}
\hat{\mu}( \mathbf{x} ) = \frac{1}{m} \sum_{k=1}^{m} K(\mathbf{s}_{k}, \mathbf{x}).
\end{equation*}
Now recall that $\nabla U({\bf x})$ is the \emph{gradient} of the \emph{log} of the stationary probability density.  So we can differentiate explicitly:
\begin{equation*}
\frac{\partial}{\partial x^{j}} \log \hat{\mu}( \mathbf{x} ) = 2 \beta \Bigg[ \;  \frac{ \sum_{k=1}^{m} K(\mathbf{s}_{k}, \mathbf{x}) \; s_{k}^{j} }{ \sum_{k=1}^{m} K(\mathbf{s}_{k}, \mathbf{x}) }  \; - \; x^{j} \; \Bigg],
\end{equation*}
to obtain an estimate for $\nabla U({\bf x})$.  From there, we can derive an analytical expression for the Euler-Lagrange equations that depends only on the sample data points: $\{ \mathbf{s}_{k} \}$.

The details of these extensions will be presented in a forthcoming paper, with the working title:  ``Differential Similarity in Higher Dimensional Spaces:  Theory and Applications.''  To illustrate the theory, outlined above, the forthcoming paper will also describe applications of these ideas to the classical MNIST dataset \cite{LeCun_etal_1990} and to the CIFAR-10 dataset  \cite{Krizhevsky:2009}. 
   
Once the theory is extended to higher dimensions, it will become apparent that there are various connections to recent work on \emph{manifold learning}, as described in Section \ref{Intro}.  Work in this area tends to follow either a geometric approach or a probabilistic approach, but not both.  
Examples of the geometric approach include:  \cite{TenenbaumEtAl2000}  \cite{Roweis2000}  \cite{belkin2003laplacian} \cite{DonohoGrimes2003}.  Belkin and Niyogi \cite{belkin2003laplacian}, for example, work with the eigenvectors of the graph Laplacian and the eigenfunctions of the Laplace-Beltrami operator, and show that the solution to these eigenproblems yields an ``optimal'' embedding of a low-dimensional manifold into a higher-dimensional space, but their arguments are geometric rather than probabilistic.  Examples of the probabilistic approach include:  \cite{TippingBishop1999}  \cite{HintonRoweis2002} \cite{Chen_etal.MFA.2010}.  Tipping and Bishop \cite{TippingBishop1999} work with a mixture of low-dimensional Gaussians embedded in a higher-dimensional space, each with its own mean and covariance matrix, and they use the EM algorithm to estimate the parameters of this model.  Chen, et al., \cite{Chen_etal.MFA.2010} adopt a similar model, along with the assumption that the Gaussian mixture covers a low-dimensional manifold, and they estimate both the number of components in the mixture and the dimensionality of the subspaces, using Bayesian techniques.  But neither paper makes use of the geometric structure of the embedded manifold.
 
One exception to this dichotomy between geometric and probabilistic approaches is a paper by Lee and Wasserman \cite{LeeWasserman2010}, which has some interesting connections to the present work.   The paper starts out by defining a Markov chain on ${\bf R}^{n}$ with a transition kernel $\Omega_{\epsilon} ( {\bf x} , \cdot )$ which gives preference to nearby points, ${\bf y}$, that have a high probability density, $p({\bf y})$.  This kernel is then used to define the one-step diffusion operator, $A_{\epsilon}$, and its $m$-step version, $A_{\epsilon, m}$.  The authors then construct a continuous time operator:  ${\bf A}_{t} = \lim_{\epsilon \rightarrow 0} A_{\epsilon, \, t / \epsilon}$.  The analogous mathematical object in our theory would be the operator ${\bf Q}_{t}$ in equation \eqref{defQt}.  Lee and Wasserman are primarily interested in the eigenfunctions of $A_{\epsilon, m}$ and ${\bf A}_{t}$, which have applications to various \emph{spectral clustering} problems, following the work of Belkin and Niyogi \cite{belkin2003laplacian} and others.  They also use $\Omega_{\epsilon} ( {\bf x} , \cdot )$ to define a \emph{diffusion distance}, $D^{2}_{\epsilon, m} ({\bf x},{\bf z})$, and its continuous time version, ${\bf D}^{2}_{t} ({\bf x},{\bf z})$, but there does not seem to be a straightforward relationship between this distance and our dissimilarity metric, ${g}_{ij}({\bf x})$. The paper concludes with several examples that demonstrate the utility of these concepts.
 
The other important contribution of Lee and Wasserman \cite{LeeWasserman2010} is their analysis of the statistical estimators for the population quantities, ${\bf A}_{t}$ and ${\bf D}^{2}_{t}$.  This is essential future work for our theory as well.  The technical results of Arias-Castro, Mason and Pelletier \cite{Arias-Castro_etal:2016} on the convergence and consistency of the mean shift algorithm suggest one way to proceed, although their analysis would have to be extended from integral curves in ${\bf R}^{n}$ to geodesics on manifolds. 

There are many other studies of low-dimensional Riemannian manifolds embedded in higher-dimensional Euclidean spaces, and various techniques to estimate their properties.  One early example is Brand \cite{Brand2002}, and subsequent work includes \cite{Zhang&Zha:2004} \cite{Yu&Zhang&Gong:NIPS2009} \cite{Chen&Zhang&Fleischer:2010}.  Many of these studies also make use of a diffusion process on the manifold, as a basic tool.  The main reference is Coifman and Lafon \cite{Coifman&Lafon:2006}. Often, the stochastic process is defined initially on a finite graph (e.g., as a random walk) and the diffusion on a manifold is shown to be the limiting case. See, e.g., \cite{Belkin&Nyogi:COLT2005} \cite{Hein&Audibert&vonLuxburg:2007} \cite{Ting&Huang&Jordan:ICML2010}.  The present research appears to be novel in two respects: (1) we work with a diffusion process in which the drift vector plays the primary role, and (2) we work with the Riemannian manifold generated by the Theorem of Frobenius.  It is unclear whether there are discrete approximations to this model, but it would be an interesting question to investigate.  

Throughout the paper, we have cited and quoted the three hypotheses motivating the work of Rifai, et al.\  \cite{nipsRifaiDVBM11}. Here is their first hypothesis again, without elisions:
\begin{quotation}
\begin{enumerate}

\item[1.]  The {\bf semi-supervised learning hypothesis}, according to which learning aspects of the input distribution $p(x)$ can improve models of the conditional distribution of the supervised target $p(y | x)$, i.e., $p(x)$ and $p(y | x)$ share something (Lasserre, \emph{et al.}, \cite{Lasserre&Bishop&Minka:CVPR2006}). This hypothesis underlies not only the strict semi-supervised setting where one has many more unlabeled examples at his disposal than labeled ones, but also the successful \emph{unsupervised pre-training} approach for learning deep architectures, which has been shown to significantly improve supervised performance even without using \emph{additional} unlabeled examples (Hinton, \emph{et al.} \cite{Hinton_etal_2006}; Bengio \cite{Bengio:2009}; Erhan, \emph{et al.} \cite{Erhan_etal:2010}).

\item[2.]  $\ldots$

\item[3.]  $\ldots$

\end{enumerate}
\end{quotation}
In the body of their paper, Rifai, et al., show how an unsupervised pre-training model for learning deep architectures can be built on top of a form of manifold learning.  In several experiments, they extract a tangent plane at each training point using a Contractive Auto-Encoder (CAE), which is an unsupervised learning algorithm, and they then exploit these learned tangents to train a network using a supervised learning algorithm that is sensitive to tangent directions. 
They write: 
\begin{quotation}
To the best of our knowledge this is the first time that the implicit relationship between an unsupervised learned mapping and the tangent space of a manifold is rendered explicit and successfully exploited for the training of a classifier.
\end{quotation}
Although, in practice, unsupervised pre-training is no longer a popular machine learning technique, since fully supervised deep learning has become so successful, 
there is still a great deal of interest in the connections between various types of auto-encoders and the field of manifold learning.  The basic ideas were summarized in an influential paper by Bengio, Courville and Vincent in 2013 \cite{Bengio_etal:2013}.  For example, in Section 7 of their paper, the authors show that Denoising Auto-Encoders (DAEs) compute the gradient of the log of the probability density, i.e., they compute our familiar vector field $\nabla U({\bf x})$.  See \cite{Vincent:2014} \cite{Alain&Bengio:2012}.  A broader and more speculative view of manifold learning is presented in Section 8 of their paper, which has the title:  ``Representation Learning as Manifold Learning.'' It thus seems clear that our agenda for future research should include a study of the role of differential similarity in deep learning.
 
\bibliographystyle{alpha}
\bibliography{CCCSimilarity2016}
 
\end{document}